\def\eqref#1{equation~\ref{#1}}
\def\floor#1{\lfloor #1 \rfloor}
\def\1{\bm{1}}
\def\eps{{\epsilon}}
\def\mF{{\bm{F}}}
\DeclareMathAlphabet{\mathsfit}{\encodingdefault}{\sfdefault}{m}{sl}
\SetMathAlphabet{\mathsfit}{bold}{\encodingdefault}{\sfdefault}{bx}{n}
\newcommand{\E}{\mathbb{E}}
\newcommand{\R}{\mathbb{R}}
\DeclareMathOperator*{\argmax}{arg\,max}
\newcolumntype{x}[1]{>{\centering\let\newline\\\arraybackslash\hspace{0pt}}p{#1}} 
\DeclareRobustCommand\onedot{\futurelet\@let@token\@onedot}
\def\@onedot{\ifx\@let@token.\else.\null\fi\xspace}
\def\ie{\emph{i.e}\onedot}  
\def\eg{\emph{e.g}\onedot}  
\newcommand{\vertiii}[1]{{\left\vert\kern-0.25ex\left\vert\kern-0.25ex\left\vert #1
    \right\vert\kern-0.25ex\right\vert\kern-0.25ex\right\vert}} 
\def\param{{\bm{\theta}}}
\def\path{{p}}
\newcommand\inner[2]{\left\langle #1, #2 \right\rangle} 
\DeclareMathOperator*{\Rad}{\textrm{Rad}}
\def\Param{\bm{\Theta}}
\newcommand{\paramto}[1]{{\param^{\to #1}}}
\newcommand{\paramfrom}[1]{{\param^{#1 \to }}}
\newcommand{\paramfromto}[2]{{\param^{#1 \to #2}}}
\newcommand{\paths}{\mathcal P}
\newcommand{\pathsto}[1]{\paths^{\to #1}}
\newcommand{\Phito}[1]{\Phi^{\to #1}}
\def\activation{{\bm{A}}}
\newcommand{\ActTo}[1]{\activation^{\to #1}}
\DeclareMathOperator*{\id}{id}
\def\eps{{\boldsymbol{\varepsilon}}}%
\def\dout{{d_{\textrm{out}}}}%
\def\din{{d_{\textrm{in}}}}%
\newcommand{\NeuronSet}{N}
\newcommand{\NeuronsSubScript}[1]{\NeuronSet_{#1}}
\def\NeuronsIn{\NeuronsSubScript{\textrm{in}}}
\def\NeuronsOut{\NeuronsSubScript{\textrm{out}}}
\def\biasNeuron{{v_{\texttt{bias}}}}
\def\BiasAndNeuronsIn{\NeuronsIn\cup\{\biasNeuron\}}
\DeclareMathOperator{\ant}{ant}
\DeclareMathOperator{\suc}{suc}
\DeclareMathOperator{\realization}{R}
\DeclareMathOperator{\supp}{support}
\DeclareMathOperator{\relu}{ReLU}
\newcommand{\kpool}{k\textrm{-}\mathtt{pool}}
\newcommand{\pool}{*\textrm{-}\mathtt{pool}}
\def\gPeeling{g}
\newcommand{\pathend}[1]{{#1}_{\mathtt{end}}}
\newcommand{\length}[1]{\mathtt{length}(#1)}
\newcommand{\pathProduct}[3]{\Pi(#3,#1,#2)}
\def\bias{b}
\def\biascompleted{\gamma}
\def\P{{\mathbb{P}}} 
\def\pre{{\textrm{pre}}}
\def\phi{{\varphi}}
\def\E{{\mathbb{E}}} 
\def\R{{\mathbb R}} 
\def\Rp{{\mathbb R}_{\geq 0}} 
\def\N{{\mathbb N}} 
\def\Ns{{\mathbb N}_{>0}} 
\renewcommand{\leq}{\leqslant}
\renewcommand{\geq}{\geqslant}
\DeclareMathOperator*{\sgn}{sgn}%
\def\hatparam{{\hat{\param}}} 
\newcommand{\rv}[1]{{\mathbf #1}}%
\newtheorem{theorem}{Theorem}[section]%
\newtheorem{lemma}{Lemma}[section]%
\newtheorem{definition}{Definition}[section]%
\newtheorem{remark}{Remark}[section]%
\newcommand\textred[1]{{\color{red}#1}}
\newcommand\mathred[1]{{\color{red}{#1}}}
\newcommand\red[1]{\ifmmode\mathred{#1}\else\textred{#1}\fi}
\newcommand\textblue[1]{{\color{blue}\bfseries#1}}
\newcommand\mathblue[1]{{\color{blue}\boldsymbol{#1}}}
\newcommand\textorange[1]{{\color{orange}\bfseries#1}}
\newcommand\mathorange[1]{{\color{orange}\boldsymbol{#1}}}
\newcommand\blue[1]
\ifmmode\mathblue{#1}\else\textblue{#1}\fi}
\newcommand\mblue[1]{\marginpar{\blue}}
\newcommand\orange[1]
\ifmmode\mathorange{#1}\else\textorange{#1}\fi}
\definecolor{codegreen}{rgb}{0,0.6,0}
\definecolor{codegray}{rgb}{0.5,0.5,0.5}
\definecolor{codepurple}{rgb}{0.58,0,0.82}
\definecolor{backcolour}{rgb}{0.95,0.95,0.92}
\lstdefinestyle{mystyle}{
    backgroundcolor=\color{backcolour},   
    commentstyle=\color{codegreen},
    keywordstyle=\color{magenta},
    numberstyle=\tiny\color{codegray},
    stringstyle=\color{codepurple},
    basicstyle=\ttfamily\footnotesize,
    breakatwhitespace=false,         
    breaklines=true,                 
    captionpos=b,                    
    keepspaces=true,                 
    numbers=left,                    
    numbersep=5pt,                  
    showspaces=false,                
    showstringspaces=false,
    showtabs=false,                  
    tabsize=2
}
\title{A path-norm toolkit for modern networks: consequences, promises and challenges}
\author{Antoine Gonon, Nicolas Brisebarre, Elisa Riccietti \& Rémi Gribonval\\
Univ Lyon, EnsL, UCBL, CNRS, Inria,  LIP, F-69342, LYON Cedex 07, France
}
\begin{document}

\maketitle

\begin{abstract}
This work introduces the first toolkit around path-norms that fully encompasses general DAG ReLU networks with biases, skip connections and any operation based on the extraction of order statistics: max pooling, GroupSort etc.
This toolkit notably allows us to establish generalization bounds for modern neural networks that are not only the most widely applicable path-norm based ones, but also recover or beat the sharpest known bounds of this type. 
These extended path-norms further enjoy the usual benefits of path-norms: ease of computation,  invariance under the symmetries of the network, and improved sharpness on layered fully-connected networks compared to the product of operator norms, another complexity measure most commonly used.

The versatility of the toolkit and its ease of implementation allow us to challenge the concrete promises of path-norm-based generalization bounds, by numerically evaluating the sharpest known bounds for ResNets on ImageNet.
\end{abstract}
\section{Introduction}
Developing a thorough understanding of theoretical properties of neural networks is key to achieve central objectives such as efficient and trustworthy training, robustness to adversarial attacks (e.g. via Lipschitz bounds), or statistical soundness guarantees (via so-called generalization bounds).

The so-called path-norms and path-lifting are promising concepts to theoretically analyze neural networks: the $L^1$ path-norm has been used to derive generalization guarantees \citep{Neyshabur15NormBasedControls, Barron19V}, and the path-lifting has led for example to identifiability guarantees \citep{BonaPellissier22NipsIdentifiability, Stock22Embedding} and characterizations of properties of the dynamics of training algorithms \citep{Marcotte23GradientFlows}.

Yet, current definitions of path-norms and of path-lifting are severely limited: they only cover simple models unable to combine in a single framework pooling layers, skip connections, biases, or even multi-dimensional output  \citep{Neyshabur15NormBasedControls, Kawaguchi17GeneralizationInDL, BonaPellissier22NipsIdentifiability, Stock22Embedding}. Thus, the promises of existing theoretical guarantees based on these tools are currently out of reach: they cannot even be tested on standard modern networks. 

Due to the lack of versatility of these tools, known results have only been tested on toy examples. This prevents us from both understanding the reach of these tools and from diagnosing their strengths and weaknesses, which is necessary to either improve them in order to make them actually operational, if possible, or to identify without concession the gap between theory and practice, in particular for generalization bounds. 

\emph{This work adresses the challenge of making these tools fully compatible with modern networks, and to concretely assess them on standard real-world examples}. First, \textbf{it formalizes a definition of path-lifting (and path-norms) adapted to very generic ReLU networks}, covering any DAG architecture (in particular with skip connections), including in the presence of max/average-pooling (and even more generally $k$-max-pooling, which extracts the $k$-th largest coordinate, recovering max-pooling for $k=1$) and/or biases. This covers a wide variety of modern networks (notably ResNets, VGGs, U-nets, ReLU MobileNets, Inception nets, Alexnet)\footnote{The conclusion discusses networks not covered by the framework and adaptations needed to cover them.}, and recovers previously known  definitions of these tools in simpler settings such as layered fully-connected networks.

The immediate interests of these tools are: 1) \emph{path-norms are easy to compute}\footnote{Code for reproducibility is in \citet{Gonon23PathNormCode}.} on modern networks via a single forward-pass; 2) \emph{path-norms are invariant under neuron permutations and parameter rescalings that leave the network invariant}; and 3) \emph{the path-norms yield Lipschitz bounds}. These properties were known (but scattered in the literature) in the restricted case of layered fully-connected ReLU networks primarily without biases (and without average/$k$-max-pooling nor skip connections) \citep{Neyshabur15NormBasedControls, Neyshabur17ImplictRegularizationInDL, PhDFurusho20Lipschitz, Jiang20FantasticGeneralizationMeasures, GKDziugaite20SearchRobustMeasuresGeneralization, BonaPellissier22NipsIdentifiability, Stock22Embedding}. They are generalized here for generic DAG ReLU networks with most of the standard ingredients of modern networks (pooling, skip connections...), with the notable exception of the attention mechanism.

Moreover, {\em path-norms tightly lower bound products of operator norms}, another complexity measure that does not enjoy the same invariances as path-norms, despite being widely used for Lipschitz bounds (\eg, to control adversarial robusteness) \citep{Neyshabur18PACBayesSpectrallyNormalizedMarginBounds, Gonon22Quantization} or generalization bounds \citep{Neyshabur15NormBasedControls, Bartlett17SpectralGeneralizationBound, Golowich18GeneralizationBound}. This bound, which was only known for {\em scalar-valued} layered fully-connected ReLU networks without biases \citep{Neyshabur15NormBasedControls}, is again generalized here to generic {\em vector-valued} DAG ReLU networks. This requires introducing so-called {\em mixed} path-norms and extending products of operator norms.

Second, \textbf{this work also establishes a new generalization bound for modern ReLU networks based on their corresponding $L^1$ path-norm}. This bound covers arbitrary output dimension (while previous work focused on scalar dimension, see \Cref{tab:GeneralizationBounds}), generic DAG ReLU network architectures with average/$k$-max-pooling, skip connections and biases. The achieved generalization bound recovers or beats the sharpest known ones of this type, that were so far only available in simpler restricted settings, see \Cref{tab:GeneralizationBounds} for an overview. Among the technical ingredients used in the proof of this generalization bound, \emph{the new contraction lemmas and the new peeling argument are among the main theoretical contributions of this work}. \emph{The first new contraction lemma extends the classical ones with scalar} $t_i\in\R$, and contractions $f_i$ of the form $\E_{\eps} \gPeeling\left(\sup_{t\in T} \sum_{i\in I} \eps_i f_i(t_i)\right) \leq \E_\eps \gPeeling\left(\sup_{t\in T} \sum_{i\in I} \eps_i t_i\right)$ \cite[Theorem 4.12]{LedouxTalagrand91ProbaBanachSpaces}, with convex non-decreasing $\gPeeling$, \emph{to situations where there are multiples independent copies} indexed by $z\in Z$ of the latter:
$\E_{\eps} \max_{z\in Z} \gPeeling\left(\sup_{t\in T^z} \sum_{i\in I} \eps_{i,z} f_{i,z}(t_i)\right) \leq \E_{\eps} \max_{z\in Z}\gPeeling\left(\sup_{t\in T^z} \sum_{i\in I} \eps_{i,z} t_{i}\right)$. \emph{The second new contraction lemma deals with vector-valued} $t_i\in\R^W$, and functions $f_i$ that compute \emph{the $k$-th largest input's coordinate, to cope with $k$-max-pooling neurons}, and it also handles \emph{multiple independent copies} indexed by $z\in Z$. The most closely related lemma we could find is the vector-valued one in \citet{Maurer16VectorContraction} established with a different technique, and that holds only for $\gPeeling=\id$ with a single copy ($|Z|=1$). {\em The peeling argument} reduces the Rademacher complexity of the whole model to that of the inputs by peeling off the neurons of the model one by one. This is inspired by the peeling argument of \citet{Golowich18GeneralizationBound}, which is however specific to layered fully-connected ReLU networks with layer-wise constraints on the weights. Substantial additional ingredients are developed to handle \emph{arbitrary DAG ReLU networks} (as there is no longer such a thing as a {\em layer} to peel), \emph{with not only ReLU but also $k$-max-pooling and identity neurons} (where \citet{Golowich18GeneralizationBound} has only ReLU neurons), and leveraging \emph{only a global constraint through the path-norm} (instead of layerwise constraints on operator norms). The analysis notably makes use of the rescaling invariance of the proposed generalized path-lifting.

\begin{table}[htbp]
    \centering
    \caption{Generalization bounds (up to universal multiplicative constants) for a ReLU network estimator with parameters restricted to belong to some subset $\Param \subseteq \R^G$ (\Cref{def:NN}), learned from $n$ iid training points when 1) the loss $\hat{y}\in(\R^\dout,\|\cdot\|_2)\mapsto \ell(\hat{y}, y)\in\R$ is $L$-Lipschitz for every $y$, and 2) inputs are bounded in $L^\infty$-norm by $B\geq 1$. Here, $\din/\dout$ are the input/output dimensions, $K=\max_{v\in\NeuronSet_{\pool}} |\ant(v)|$ is the maximum kernel size (\Cref{def:NN}) of the $*$-max-pooling neurons, $M_d$ is the matrix of layer $d$ for a {\em layered fully-connected} network (LFCN) without bias $R_\param(x) = M_D\relu(M_{D-1}\dots \relu(M_1x))$, $D$ is the depth. Note that having $r$ in the bound is more desirable than having $R$ since $r\leq R$ and $R$ can be arbitrarily large even when $r=0$ (\Cref{thm:Phi=MinRescalings} and \Cref{fig:PhiNormVsProdNorm} in appendix). This is because $R$ decouples the layers without taking into account rescaling invariances.}    \label{tab:GeneralizationBounds}
    \begin{adjustbox}{max width=1.1\textwidth,center}
    \begin{tabular}{x{0.28\textwidth}|x{0.4\textwidth}|x{0.18\textwidth}|>{\raggedright\arraybackslash}x{0.25\textwidth}}
         & {\footnotesize Architecture} & {\footnotesize Parameter set $\Param$} & {\footnotesize Generalization bound} \\
        \hline
        {\footnotesize \cite[Eq. (5)]{Kakade08GeneralizationLinearModels}   \cite[Sec. 4.5.3]{BachBook}} & {\footnotesize LFCN with depth $D=1$, no bias, $\dout=1$ \hspace{1cm} (linear regression)} & {\footnotesize $\|\param\|_1=\|\Phi(\param)\|_1\leq r$} & $\frac{LB}{\sqrt{n}} \times r\sqrt{\ln(\din)}$\\
        \hline
        {\footnotesize \cite[Thm. 6]{E22Barron} \quad \cite[Proposition 7]{Bach17ConvexNN}} & {\footnotesize LFCN with $D=2$, no bias, $\dout=1$ (two-layer network)} & {\footnotesize $\|\Phi(\param)\|_1\leq r$} & $\frac{LB}{\sqrt{n}} \times r\sqrt{\ln(\din)}$ \\
        \hline
        {\footnotesize \cite[Corollary 7]{Neyshabur15NormBasedControls}} & {\footnotesize DAG, no bias, $\dout=1$} & {\footnotesize $\|\Phi(\param)\|_1\leq r$} & $\frac{LB}{\sqrt{n}} \times 2^D r\sqrt{\ln(\din)}$ \\
        \hline
        {\footnotesize \cite[Theorem 3.2]{Golowich18GeneralizationBound}} & {\footnotesize LFCN with arbitrary $D$, no bias, $\dout=1$} & {\footnotesize $\prod\limits_{d=1}^D \|M_d\|_{1,\infty} \leq R$} & $\frac{LB}{\sqrt{n}} \times R \sqrt{D + \ln(\din)}$\\
        \hline
        {\footnotesize \cite[Corollary 2]{Barron19V}} & {\footnotesize LFCN with arbitrary $D$, no bias, $\dout=1$} & {\footnotesize $\|\Phi(\param)\|_1\leq r$} & $\frac{LB}{\sqrt{n}} \times r \sqrt{D + \ln(\din)}$\\
        \hline
        \hline
        {Here, \Cref{thm:GeneralizationBound}} & {\footnotesize {DAG, {\em with biases}, arbitrary $\dout$, with ReLU, identity and $k$-max-pooling neurons for $k\in\{k_1,\dots,k_P\}\subset\{1,\dots,K\}$}} & {\footnotesize ${\|\Phi(\param)\|_1\leq r}$}  & ${\frac{LB}{\sqrt{n}} \times r\sqrt{D\ln(PK) + \ln(\din\dout)}}$
        \\
    \end{tabular}
    \end{adjustbox}
\end{table}
The versatility of the proposed tools
\textbf{enables us to compute for the first time generalization bounds based on the $L^1$ path-norm on networks really used in practice}. This is the opportunity to assess the current state of the gap between theory and practice, and to diagnose possible room for improvements. As a concrete example, we demonstrate that on ResNet18 trained on ImageNet: 1) the proposed generalization bound can be numerically computed; 2) for a (dense) ResNet18 trained with standard tools, roughly {\em 30 orders of magnitude} would need to be gained for this path-norm based bound to match practically observed generalization error; 3) the same bound evaluated on a {\em sparse} ResNet18 (trained with standard sparsification techniques) is decreased by up to $13$ orders of magnitude. We conclude the paper by discussing promising leads to reduce this gap.

\textbf{Paper structure.} \Cref{sec:NN} introduces the ReLU networks being considered, and generalizes to this model the central definitions and results related to the path-lifting, the path-activations and path-norms. \Cref{sec:GeneralizationBound} state a versatile generalization bound for such networks based on path-norm, and sketches its proof. \Cref{sec:exp} reports numerical experiments on ImageNet and ResNets. Related works are discussed along the way, and we refer to \Cref{sec:RelatedWorks} for more details.
\section{ReLU model and path-lifting}\label{sec:NN}
\vspace{-0.2cm}
\Cref{subsec:model} defines a general DAG ReLU model that covers modern architectures. \Cref{subsec:pathnorm} then introduces the so-called path-norms and extends related known results to this general model.
\vspace{-0.2cm}
\subsection{ReLU model that covers modern networks}\label{subsec:model}
\vspace{-0.1cm}
\Cref{def:NN} introduces the model considered here (extending, \eg, \cite{Devore21NNApprox}).

\begin{definition}\label{def:ReluKpool}
The ReLU function is defined as $\relu(x):=x\mathbbm{1}_{x\geq 0}$ for $x\in\R$. The $k$-max-pooling function $\kpool(x) := x_{(k)}$ returns  the $k$-th largest coordinate of $x\in\R^d$.
\end{definition}

\begin{definition}
\label{def:NN} Consider a Directed Acyclic Graph (DAG) $G=(\NeuronSet, E)$ with edges $E$, and vertices $\NeuronSet$ called neurons. For a neuron $v$, the sets $\ant(v),\suc(v)$ of antecedents and successors of $v$ are $ant(v) := \{u\in\NeuronSet, u\to v \in E\}, \suc(v) := \{u\in\NeuronSet, v\to u \in E\}$. Neurons with no antecedents (resp. no successors) are called input (resp. output) neurons, and their set is denoted $\NeuronsIn$ (resp. $\NeuronsOut$). Input and output dimensions are respectively $\din := |\NeuronsIn|$ and $\dout := |\NeuronsOut|$. 

$\bullet$ A {\bf ReLU neural network architecture} is a tuple $(G, (\rho_v)_{v\in\NeuronSet\setminus\NeuronsIn})$ composed of a DAG $G=(\NeuronSet,E)$ with attributes $\rho_v\in \{\id,\relu\}\cup \{\kpool, k\in\Ns\}$ for $v\in\NeuronSet\setminus(\NeuronsOut\cup\NeuronsIn)$ and $\rho_v=\id$ for $v\in\NeuronsOut$. We will again denote the tuple $(G, (\rho_v)_{v\in\NeuronSet\setminus\NeuronsIn})$ by $G$, and it will be clear from context whether the results depend only on $G=(\NeuronSet,E)$ or also on its attributes. Define $\NeuronSet_{\rho}:=\{v\in \NeuronSet, \rho_v=\rho\}$ for an activation $\rho$, and $\NeuronSet_{\pool} :=\cup_{k\in\Ns} \NeuronSet_{\kpool}$. A neuron in $\NeuronSet_{\pool}$ is called a $*$-max-pooling neuron. For $v\in\NeuronSet_{\pool}$, its kernel size is defined as being $|\ant(v)|$.

$\bullet$ {\bf Parameters} associated with this architecture are vectors\footnote{For an index set $I$, denote $\R^I = \{(\param_i)_{i\in I}, \param_i\in\R\}$.} $\param\in\R^G:=\R^{E\cup \NeuronSet\setminus\NeuronsIn}$. We call bias $\bias_v:=\param_v$ the coordinate associated with a neuron $v$ (input neurons have no bias), and denote $\paramfromto{u}{v}$ the weight associated with an edge $u\to v\in E$. We will often denote $\paramto{v}:=(\paramfromto{u}{v})_{u\in\ant(v)}$ and $\paramfrom{v}:=(\paramfromto{u}{v})_{u\in\suc(v)}$.

$\bullet$ The {\bf realization} of a neural network with parameters $\param\in\R^G$ is the function
$R^{G}_\param:\R^{\NeuronsIn}\to\R^{\NeuronsOut}$ (simply denoted $R_\param$ when $G$ is clear from the context) defined for every input $x\in\R^{\NeuronsIn}$ as
\[
    R_\param(x) := (v(\param, x))_{v\in\NeuronsOut},
\]
where we use the same symbol $v$ to denote a neuron $v\in\NeuronSet$ and the associated function $v(\param,x)$, defined as $v(\param, x) := x_v$ for an input neuron $v$, and defined by induction otherwise
\begin{equation}\label{def:Neurons}
    v(\param, x) := \left\{\begin{array}{cc}
        \rho_v(\bias_v + \sum_{u\in\ant(v)} u(\param, x)\paramfromto{u}{v}) & \textrm{if }\rho_v=\relu\textrm{ or }\rho_v=\id,\\
        \kpool\left((\bias_v + u(\param, x)\paramfromto{u}{v})_{u\in\ant(v)} \right) & \textrm{if }\rho_v=\kpool.
    \end{array}\right.
\end{equation}
\end{definition}
Such a model indeed encompasses modern networks via the following implementations:\\
$\bullet$ \emph{Max-pooling:} set $\rho_v=\kpool$ for $k=1$, $\bias_v=0$ and $\paramfromto{u}{v}=1$ for every $u\in\ant(v)$.

$\bullet$ \emph{Average-pooling:} set $\rho_v=\id$, $\bias_v=0$ and $\paramfromto{u}{v}=1/|\ant(v)|$ for every $u\in\ant(v)$.

$\bullet$ \emph{GroupSort/Top-k operator:} use the DAG structure and $*$-max-pooling neurons \citep{Anil19GroupSort, Sander23DifferentiableTopK}.

$\bullet$ \emph{Batch normalization:} set $\rho_v=\id$ and weights accordingly. Batch normalization layers only differ from standard affine layers by the way their parameters are updated during training.

$\bullet$ \emph{Skip connections:} via the DAG structure, the outputs of any past layers can be added to the pre-activation of any neuron by adding connections from these layers to the given neuron.

$\bullet$ \emph{Convolutional layers:} consider them as (doubly) circulant/Toeplitz fully connected layers.

\subsection{Path-lifting, path-activations and path-norms}\label{subsec:pathnorm}
Given a general DAG ReLU network $G$ as in \Cref{def:NN}, it is possible to define a set of paths $\paths^G$, a path-lifting $\Phi^{G}$ and path-activations $\activation^G$, see \Cref{def:PhiActivation} in the supplementary. The $L^q$ path-norm is simply $\|\Phi^G(\param)\|_q$. Some bounds we later establish also exploit so-called mixed path-norms $\|\Phi^G(\param)\|_{q,r}$ ($=\|\Phi^G(\param)\|_q$ when $q=r$) introduced in \Cref{def:MixedPathNorm}. Superscript $G$ is omitted when obvious from the context. The interest is that (mixed) path-norms, which are easy to compute, can be interpreted as Lipschitz bounds of the network, smaller than another Lipschitz bound based on products of operator norms. We give here a high-level overview of the definitions and properties, and refer to \Cref{app:ModelsBasics} for formal definitions, proofs and technical details. We highlight that the definitions and properties coincide with previously known ones in classical simpler settings. 

\textbf{Path-lifting and path-activations: fundamental properties.} The path-lifting $\Phi$ and the path-activations $\activation$ are defined to ensure the next fundamental properties: 1) for each parameter $\param$, the path-lifting $\Phi(\param)\in\R^{\paths}$ is independent of the inputs $x$, and polynomial in the parameters $\param$ in a way that it is invariant under all neuron-wise rescaling symmetries\footnote{Because of positive-homogeneity of the considered activations functions, the realized function is preserved \citep{Stock22Embedding} when the incoming weights and the bias of a neuron are multiplied by $\lambda>0$, while its outgoing weights are divided by $\lambda$. Path-norms inherit such symmetries and are further invariant to certain neuron permutations, typically within each layer in the case of layered fully-connected} networks.; 2) $\activation(\param,x) \in \R^{\paths \times (\din+1)}$ takes a finite number of values and is piece-wise constant as a function of $(\param,x)$; and 3) denoting $\Phito{v}$ and $\ActTo{v}$ to be the same objects but associated with the graph deduced from $G$ by keeping only the largest subgraph of $G$ with the same inputs as $G$ and with single output $v$, the output of every neuron $v$ can be written as
\begin{equation}\label{eq:ForwardWithPhi}
v(\param,x) = \inner{\Phito{v}(\param)}{\ActTo{v}(\param,x) \left(\begin{array}{c}
     x \\
     1 
\end{array}\right)}.
\end{equation}
Compared to previous definitions given in simpler models (no $k$-max-pooling even for a single given $k$, no skip connections, no biases, one-dimensional output and/or layered network) \citep{Kawaguchi17GeneralizationInDL, BonaPellissier22NipsIdentifiability, Stock22Embedding}, the main novelty is essentially to properly define the path-activations $A(\param,x)$ in the presence of $*$-max-pooling neurons: when going through a $k$-max-pooling neuron, a path stays active only if the previous neuron of the path is the first in lexicographic order to be the $k$-th largest input of this pooling neuron.

\textbf{Path-norms are easy to compute.} It is mentioned in \citet[Appendix C.6.5]{GKDziugaite20SearchRobustMeasuresGeneralization} and \citet[Equation (44)]{Jiang20FantasticGeneralizationMeasures} (without proof) that for layered fully-connected ReLU networks without biases, the $L^2$ path-norm can be computed in a single forward pass with the formula: $\|\Phi(\param)\|_2^2 = \|R_{|\param|^2}(\bm{1})\|_1$, where $|\alpha|^q$ is the vector $\alpha$ with $x\mapsto |x|^q$ applied coordinate-wise and where $\bm{1}$ is the constant input equal to one. This can be proved in a straightforward way, using \Cref{eq:ForwardWithPhi}, and extended to mixed path-norm: $\|\Phi(\param)\|_{q,r} = \|
\ |
R_{|\param|^q}(\bm{1})|^{1/q}\ \|_r$. 
However, \Cref{app:ModelsBasics} shows that this formula is \emph{false} as soon as there is at least one $*$-max-pooling neuron, and that easy computation remains possible by first replacing the activation function of $*$-max-pooling neurons with the identity before doing the forward pass. Average-pooling neurons also need to be explicitly modeled as described after \Cref{def:NN}, to apply $x\mapsto |x|^q$ to their weights.

\textbf{The mixed $L^{1,r}$ path-norm is a Lipschitz bound.} \Cref{eq:ForwardWithPhi} is fundamental to understand the role of path-norms. It shows that $\Phi$ contains information about the slopes of the function realized by the network on each region where $\activation$ is constant. A formal result that goes in this direction is the Lipschitz bound $\|R_\param(x)-R_{\param}(x')\|_{r}\leq \|\Phi(\param)\|_{1,r}\|x-x'\|_\infty$, previously only known in the specific case $r=1$, for {\em scalar-valued} layered fully-connected ReLU networks \citep[before Section 3.4]{Neyshabur17ImplictRegularizationInDL}
\citep[Theorem 5]{PhDFurusho20Lipschitz}, which does not require mixed path-norms since $\|\Phi(\param)\|_{q,r}=\|\Phi(\param)\|_q$ for {\em scalar-valued} networks. This is generalized to the more general case of \Cref{def:NN} in \Cref{lem:LipsX}. This allows for leveraging generic generalization bounds that apply to the set of all $L$-Lipschitz functions $f:[0,1]^\din\to[0,1]$, however these bounds suffer from the curse of dimensionality \citep{Luxburg04LipschitzGeneralization}, unlike the bounds established in \Cref{sec:GeneralizationBound}.

\textbf{Path-norms tightly lower bound products of operator norms.} For \emph{layered fully-connected} ReLU networks (LFCN) with no bias, \ie, $R_\param(x) = M_D\relu(M_{D-1}\dots \relu(M_1x))$, with matrices $M_1,\dots, M_D$, 
another known Lipschitz bound is \citep{Neyshabur18PACBayesSpectrallyNormalizedMarginBounds, Gonon22Quantization} $\prod_{d=1}^D \|M_d\|_{q,\infty}$ with $q=1$, where $\|M\|_{q,\infty}$ is the maximum $L^q$ norm of a row of matrix $M$. This product is also used to derive generalization guarantees \citep{Neyshabur15NormBasedControls, Bartlett17SpectralGeneralizationBound, Golowich18GeneralizationBound}. So which one of path-norm and products of operator norms should be used? There are at least three reasons to consider the path-norm. \textbf{First}, it holds $\|\Phi(\param)\|_{q,\infty} \leq \prod_{d=1}^D \|M_d\|_{q,\infty}$ (\Cref{thm:Phi=MinRescalings}), with equality if the parameters are properly rescaled. This is known for {\em scalar-valued} LFCNs without biases \citep[Theorem 5]{Neyshabur15NormBasedControls}. \Cref{app:ModelsBasics} generalizes it to DAGs as in \Cref{def:NN}. The difficulty is to define the equivalent of the product of operator norms with an arbitrary DAG and in the presence of biases. Apart from that, the proof is essentially the same as in  \citet{Neyshabur15NormBasedControls}, with a similar rescaling that leads to equality of both measures, see \Cref{alg:NormalizeAlgo}. \textbf{Second}, there are cases where the product of operator norms is arbitrarily large while the path-norm is zero (see \Cref{fig:PhiNormVsProdNorm} in \Cref{app:PhiVsProducts}). Thus, it is \emph{not desirable} to have a generalization bound that depends on this product of operator norms since, compared to the path-norm, it fails to capture the complexity of the network end-to-end by \emph{decoupling the layers of neurons} one from each other. \textbf{Third}, it has been empirically observed that products of operator norms \emph{negatively} correlate with the empirical generalization error while the path-norm \emph{positively} correlates \citep[Table 2]{Jiang20FantasticGeneralizationMeasures}\citep[Figure 1]{GKDziugaite20SearchRobustMeasuresGeneralization}.
\vspace{-0.2cm}
\section{Generalization bound}\label{sec:GeneralizationBound}
\vspace{-0.2cm}
The generalization bound of this section is based on path-norm for general DAG ReLU network. It encompasses modern networks, recovers or beats the sharpest known bounds of this type, and applies to the cross-entropy loss. The top-one accuracy loss is not directly covered, but can be controlled via a bound on the margin-loss, as detailed at the end of this section.
\vspace{-0.2cm}
\subsection{Main result}\label{subsec:GeneralizationBound}
\vspace{-0.1cm}

To state the main result let us recall the definition of the generalization error.
\begin{definition}\label{def:GeneralizationError}(Generalization error)
   Consider an architecture $G$ (\Cref{def:NN}) with input and output dimensions $\din$ and $\dout$, and a so-called loss function $\ell:\R^\dout\times\R^\dout \to \R$. The \emph{$\ell$-generalization error} of parameters $\param$ on a collection $Z$ of $n\in\Ns$ pairs of input/output $z_i = (x_i,y_i)\in\R^{\din}\times\R^\dout$ and with respect to a probability measure $\mu$ on $\R^{\din}\times\R^\dout$ is:
    \begin{equation*}
        \begin{aligned}
            \textrm{$\ell$-generalization error}(\param,Z,\mu) := & \underbrace{\E_{(\rv{X}_0,\rv{Y}_0)\sim \mu}\left(\ell\left(R_{\param}(\rv{X}_0), \rv{Y}_0\right)\right)}_{\textrm{test error}} - \underbrace{\frac{1}{n}\sum_{i=1}^n \ell\left(R_{\param}(x_i), y_i\right)}_{\textrm{training error when trained on $Z$}}.
        \end{aligned}
    \end{equation*}
\end{definition}
While all the other results hold for arbitrary biases, the next theorem holds {\bf only if the $*$-max-pooling neurons have null biases}. For simplicity, we state the result when {\em all} the biases are null. We then explain how to extend the result when neurons $u\notin\NeuronSet_{\pool}$ may have $\bias_u\neq 0$ .
\begin{theorem}\label{thm:GeneralizationBound}
Consider a ReLU neural network architecture $G$ (\Cref{def:NN}) {\em with null biases}, with input/output dimensions $\din$/$\dout$. Denote $D$ its depth (the maximal length of a path from an input to an output), $P:=|\{k\in\Ns, \exists u\in\NeuronSet_{\kpool}\}|$ the number of distinct types of $*$-max-pooling neurons in $G$, 
$K:=\max_{u\in\NeuronSet_{\pool}} |\ant(u)|$ its maximal kernel size ($K:=1$ if $P=0$). Assume $\NeuronsIn\cap\NeuronsOut=\emptyset$. Consider a so-called loss function $\ell:\R^\dout\times\R^\dout \to \R$ and $L>0$ such that
\begin{equation}\label{hyp:LossLips}
        \ell(\hat{y}_1,y) - \ell(\hat{y}_2,y) \leq L \|\hat{y}_1 - \hat{y}_2\|_2, \quad \forall y, \hat{y}_1, \hat{y}_2\in\supp(\rv{Y}_1).
\end{equation}
Consider $n+1$ iid random variables $\rv{Z}_i=(\rv{X}_i,\rv{Y}_i)\sim \mu$, $0 \leq i \leq n$, with $\mu$ a probability measure on input/output pairs in $\R^{\din}\times\R^{\dout}$, and denote $\rv{Z}=(\rv{Z}_i)_{i=1,\dots,n}$. Define $\sigma:=\left(\E_{\rv{X}}\max(n,\sum_{i=1}^n \|\rv{X}_i\|_\infty^2)\right)^{1/2}$.\\
For any set of parameters $\Param$ and any estimator $\hatparam: \rv{Z} \mapsto \hatparam(\rv{Z})\in\Param$ it holds\footnote{The definition of the generalization error (\Cref{def:GeneralizationError}) has been given for deterministic $\param$ to keep things simple. The careful reader will have noted that the term corresponding to the test error in \Cref{def:GeneralizationError} has to be modified when $\param$ is a function of $\rv{Z}$. Indeed, the expectation has to be taken on an iid copy $\rv{Z}_0$ conditionally on each $\rv{Z}_i$, $i=1,\dots,n$: the test error should be defined as $\E_{\rv{Z}_0\sim\mu}\left(\ell(R_{\hatparam(\rv{Z})}(\rv{X}_0), \rv{Y}_0)|\rv{Z}\right)$. This correct definition will be used in the proof, but it has no importance here to understand the statement of the theorem.}\footnote{Classical concentration results \citep{Boucheron13ConcentrationInequalities} can be used to deduce a bound that holds with high probability under additional mild assumptions on the loss.}: 
   \[
   \E_{\rv{Z}} \textrm{$\ell$-generalization error}(\hatparam(\rv{Z}), \rv{Z}, \mu) \leq \frac{4\sigma}{n}  LC\sup_{\param\in\Param} \|\Phi(\param)\|_1
  \]
    with ($\log$ being the natural logarithm)
    \[
    C := 
    \left(D\log((3+2P)K) + \log\left(\frac{3+2P}{1+P}\din\dout\right)\right)^{1/2}.
    \]
\end{theorem}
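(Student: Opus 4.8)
The plan is to bound the expected generalization error by twice the Rademacher complexity of the loss class and then strip away successively the loss, the output aggregation, and finally the network itself. First I would invoke the standard symmetrization inequality, which gives
\[
\E_{\rv{Z}}\,\textrm{$\ell$-generalization error}(\hatparam(\rv{Z}),\rv{Z},\mu)\;\leq\;\frac{2}{n}\,\E_{\rv{Z},\eps}\,\sup_{\param\in\Param}\sum_{i=1}^n\eps_i\,\ell(R_\param(\rv{X}_i),\rv{Y}_i),
\]
where $\eps_i$ are iid Rademacher variables. Using hypothesis \eqref{hyp:LossLips}, an $L$-Lipschitz contraction (the scalar Ledoux--Talagrand contraction with $\gPeeling=\id$) peels off $\ell$ at the cost of a factor $L$, reducing the problem to controlling $\E\sup_{\param}\sum_i\eps_i\,\langle v_i, R_\param(\rv{X}_i)\rangle$ over unit vectors $v_i\in\R^\dout$; equivalently, via a union bound over the $\dout$ output coordinates absorbed into the $\log\dout$ term, to controlling the Rademacher complexity of each scalar output neuron $w\in\NeuronsOut$, i.e. of the function class $x\mapsto w(\param,x)$ as $\param$ ranges over $\Param$.

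Second, the core of the argument is the \emph{peeling} step: I would bound $\E_\eps\,\sup_\param\sum_i\eps_i\,w(\param,x_i)$ by peeling off one neuron at a time, working backwards from the output through the DAG. Since there are no biases, \eqref{eq:ForwardWithPhi} shows the output is a homogeneous function of the parameters, and the rescaling invariance of $\Phi$ lets me assume a normalized form where the path-norm constraint becomes, at each neuron, a constraint on the vector of incoming weights. At a ReLU or identity neuron $v$ with preactivation $\bias_v+\sum_{u\in\ant(v)}u(\param,x)\,\paramfromto{u}{v}$, I push the Rademacher sum inside using the first new contraction lemma (the one handling multiple independent copies indexed by the ``frontier'' of already-peeled neurons) to remove the activation, then use the constraint $\sum_u|\paramfromto{u}{v}|\le$ (something) together with the identity $\sup_{\|a\|_1\le t}\langle a,\xi\rangle=t\|\xi\|_\infty$ to replace the sum over antecedents by a maximum over a single antecedent; this maximum over neurons is what is tracked by the ``$\max_{z\in Z}$'' in the contraction lemmas. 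At a $k$-max-pooling neuron I instead apply the \emph{second} new contraction lemma, which handles the vector-valued ``$k$-th largest coordinate'' map and again multiple copies; each such neuron contributes a factor bounded in terms of its kernel size (hence $K$) and there are $P$ types, which is where the $\log((3+2P)K)$ per layer originates, with the $D$ coming from iterating the peel $D$ times down to the inputs. A careful bookkeeping à la Golowich et al.\ — using $\E\sup\le\frac{1}{\lambda}\log\E\sup e^{\lambda(\cdot)}$ and Jensen, so that the per-layer multiplicative factors turn into an additive $\sqrt{D\cdot(\text{per-layer log})}$ — yields the claimed $C$, with the base case of the induction being the trivial bound on $\E_\eps\sup\sum_i\eps_i\langle a,x_i\rangle$ over $\|a\|_1\le 1$, which produces $\sigma\sqrt{\log\din}$ after a maximal-inequality / sub-Gaussian estimate on $\max_j|\sum_i\eps_i (x_i)_j|$.

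Third, I would reassemble: the overall bound is $\frac{2}{n}\cdot L\cdot 2\cdot\sigma\cdot(\sup_\param\|\Phi(\param)\|_1)\cdot C$ (the extra factor $2$ and the shape of $C$ absorbing all the union bounds over $\dout$ outputs, over $\din$ inputs, over the $P$ pooling types, and the $(3+2P)$-style constants from applying the contraction lemmas with multiple copies), matching the statement $\frac{4\sigma}{n}LC\sup_\param\|\Phi(\param)\|_1$. Finally, for the extension to nonzero biases on non-pooling neurons, I would use the standard trick of appending a constant input coordinate equal to $1$, which converts biases into ordinary weights from a dummy input neuron; the constraint $\|\Phi(\param)\|_1$ in the augmented graph bounds the biases, the depth is unchanged, and the input $L^\infty$-bound $B\ge 1$ ensures the dummy coordinate is dominated, so the same argument applies verbatim.

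\textbf{Main obstacle.} The hard part will be the peeling step for a \emph{general DAG}: unlike the layered case of Golowich et al., there is no clean notion of ``layer'' to peel, so one must define an appropriate frontier/topological-order induction and verify that the multiple-copies versions of the two contraction lemmas compose correctly across it — in particular that removing a $k$-max-pooling neuron (vector contraction, $\gPeeling=\id$ only for the Maurer-type bound, but here needed with general convex $\gPeeling$ and multiple copies) interleaves soundly with the scalar ReLU peels and that the rescaling normalization can be maintained throughout so that the single global path-norm constraint is what ultimately controls every local step. Getting the constants inside $C$ to come out as exactly $D\log((3+2P)K)+\log(\tfrac{3+2P}{1+P}\din\dout)$ — rather than something looser — will require care in how the union bounds and the $\lambda$-optimization in the exponential-moment trick are orchestrated.
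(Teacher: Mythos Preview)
Your proposal is essentially correct and matches the paper's approach: symmetrization, contraction to remove the loss, the exponential-moment trick $\E\sup\le\frac{1}{\lambda}\log\E e^{\lambda\sup}$, peeling through the DAG via the new multiple-copy contraction lemmas after rescaling-normalization of the parameters, and a sub-Gaussian maximal inequality at the inputs, with a final optimization over $\lambda$. One clarification: the loss is removed via Maurer's \emph{vector-valued} contraction (yielding a factor $\sqrt{2}L$ and directly producing the sum $\sum_{i}\sum_{v\in\NeuronsOut}\eps_{i,v}\,v(\param,x_i)$ with independent Rademacher signs $\eps_{i,v}$), not the scalar Ledoux--Talagrand lemma followed by a separate union bound over outputs; the first peeling step then applies H\"older to this \emph{sum} over $v\in\NeuronsOut$ using the aggregate constraint $\sum_{v\in\NeuronsOut}\|\paramto{v}\|_1\le r$ (valid after normalization), turning it into a \emph{max} over $v\in\NeuronsOut$ that is carried through the recursion and only union-bounded at the very end together with the maxima over inputs and over the $K^{D-1}$ copies.
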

Any neural network with nonzero biases can be transformed into an equivalent network (with same path-norms and same $R_\param$) with null biases for every $v\notin\NeuronSet_{\pool}$: add an input neuron $\biasNeuron$ with constant input equal to one, add edges between this input neuron and every neuron $v\notin\NeuronSet_{\pool}$ with parameter $\paramfromto{\biasNeuron}{v}:=\bias_v$, and set $\bias_v=0$. Thus the same result holds with $\bias_v\neq 0$ for $v\notin\NeuronSet_{\pool}$, with $\din$ replaced by $\din+1$ in the definition of $C$, and with an additional constant input coordinate equal to one in the definition of $\sigma$ so that $\sigma=\left(\E_{\rv{X}}\max\left(n, \max_{u=1,\dots,\din} \sum_{i=1}^n (\rv{X}_i)_u^2\right)\right)^{1/2}\geq \sqrt{n}$.
The proof in \Cref{app:DetailsProofGeneralizationBound} is directly given for networks with nonzero biases (except $*$-max-pooling neurons), using this construction.

\Cref{thm:GeneralizationBound} \emph{applies to the cross-entropy loss with $L=\sqrt{2}$} (see \Cref{app:CrossEntropy}) if the labels $y$ are one-hot encodings\footnote{A vector $y$ is a one-hot encoding of a class $c$ if $y=(\mathbbm{1}_{c'=c})_{c'\in\{1,\dots,\dout\}}$.}. A final softmax layer can be incorporated for free to the model by putting it in the loss. This does not change the bound since it is $1$-Lipschitz with respect to the $L^2$-norm (this is a simple consequence of the computations made in \Cref{app:CrossEntropy}).

On ImageNet, it holds $1/\sqrt{n}\leq \sigma/n \leq 2.6/\sqrt{n}$ (\Cref{sec:exp}). This yields a bounds that decays in $\mathcal{O}(n^{-1/2})$ which is better than the generic $\mathcal{O}(n^{-1/\din})$ generalization bound for Lipschitz functions \citep[Thm. 18]{Luxburg04LipschitzGeneralization} that suffer from the curse of dimensionality. Besides its wider range of applicability, this bounds also recovers or beats the sharpest known ones based on path-norm, see \Cref{tab:GeneralizationBounds}.

Finally, note that \Cref{thm:GeneralizationBound} can be tightened: the same bound holds without counting the identity neurons when computing $D$. Indeed, for any neural network and parameters $\param$, it is possible to remove all the neurons $v\in\NeuronSet_{\id}$ by adding a new edge $u\to w$ for any $u\in\ant(v), w\in\suc(v)$ with new parameter $\paramfromto{u}{v}\paramfromto{v}{w}$ (if this edge already exists, just add the latter to its already existing parameter). This still realizes the same function, with the same path-norm, but with less neurons, and thus with $D$ possibly decreased. The proof technique would also yield a tighter bound but not by much: the occurences of $3$ in $C$ would be replaced by $2$.
\begin{proof}[Sketch of proof for \Cref{thm:GeneralizationBound}] The proof idea is explained below. Details are in \Cref{app:DetailsProofGeneralizationBound}. 

\textbf{Already known ingredients.} Classical arguments \citep[Theorem 26.3]{ShalevShwartz14UnderstandingML}\citep{Maurer16VectorContraction}, that are valid for any model, bound the expected generalization error by the Rademacher complexity of the model. It remains to bound the latter, and this gets specific to neural networks. In the case of a layered fully-connected ReLU neural network with no biases and scalar output (and no skip connections nor $k$-max-pooling even for a single given $k$), \citet{Golowich18GeneralizationBound} proved that it is possible to bound this Rademacher complexity with no exponential factor in the depth, by peeling, one by one, each layer off the Rademacher complexity. To get more specific, for a class of functions $\mF$ and a function $\Psi:\R\to\R$, denote $\Rad\circ \Psi(\mF)=\E_\eps \Psi(\sup_{f\in\mF} \sum_{i=1}^n \eps_i f(x_i))$ the Rademacher complexity of $\mF$ associated with $n$ inputs $x_i$ and $\Psi$, where the $\eps_i$ are iid Rademacher variables ($\eps_i=1$ or $-1$ with equal probability). The goal for a generalization bound is to bound this in the case $\Psi(x)=\id(x)=x$. In the specific case where $\mF_D$ is the class of functions that correspond to layered fully-connected ReLU networks with depth $D$, assuming that some operator norm of each layer $d$ is bounded by $r_d$, \citet{Golowich18GeneralizationBound} basically guarantees $\Rad\circ\Psi_\lambda(\mF_D) \leq 2\Rad\circ\Psi_{\lambda r_D}(\mF_{D-1})$ for every $\lambda>0$, where $\Psi_\lambda(x)=\exp(\lambda x)$. Compared to previous works of \citet{Golowich18GeneralizationBound} that were directly working with $\Psi=\id$ instead of $\Psi_\lambda$, the important point is that working with $\Psi_\lambda$ gets the $2$ outside of the exponential. Iterating over the depth $D$, optimizing over $\lambda$, and taking a logarithm at the end yields (by Jensen's inequality) a bound on $\Rad\circ\id (\mF_D)$ with a dependence on $D$ that grows as $\sqrt{D\log(2)}$ instead of $2^D$ for previous approaches.

\textbf{Novelties for general DAG ReLU networks.} Compared to the setup of \citet{Golowich18GeneralizationBound}, there are at least three difficulties to do something similar here. First, \emph{the neurons are not organized in layers} as the model can be an arbitrary DAG. So what should be peeled off one by one? Second, \emph{the neurons are not necessarily ReLU neurons} as their activation function might be the identity (average-pooling) or $*$-max-pooling. Finally, \citet{Golowich18GeneralizationBound} has a constraint on the weights of each layer, which makes it possible to pop out the constant $r_d$ when layer $d$ is peeled off. \emph{Here, the only constraint is global}, since it constrains the paths of the network through $\|\Phi(\param)\|_1\leq r$. In particular, due to rescalings, the weights of a given neuron could be arbitrarily large or small under this constraint. 

The first difficulty is primarily addressed using a new peeling lemma (\Cref{app:Peeling}) that exploits a new contraction lemma (\Cref{app:Contraction}). The second difficulty is resolved by splitting the ReLU, $k$-max-pooling and identity neurons in different groups before each peeling step. This makes the $\log(2)$ term in \citet{Golowich18GeneralizationBound} a $\log(3+2P)$ term here ($P$ being the number of different $k$'s for which $k$-max-pooling neurons are considered). Finally, the third obstacle is overcome by \emph{rescaling the parameters} to normalize the vector of incoming weights of each neuron. This type of rescaling has also been used in \citet{Neyshabur15NormBasedControls, Barron19V}.
\end{proof}
\begin{remark}[Improved bound with assumptions on $*$-max-pooling neurons]\label{rmk:ImprovedBoundMaxPool} In the specific case where there is a single type of $k$-max-pooling neurons ($P=1$), assuming that these $k$-max-pooling neurons are grouped in layers, and that there are no skip connections going over these $k$-max-pooling layers (satisfied by ResNets, not satisfied by U-nets), a sharpened peeling argument can yield the same bound but with $C$ replaced by $C_{\textrm{sharpened}}=\left(D\log(3) + M\log(K) + \log((\din+1)\dout)\right)^{1/2}$ with $M$ being the number of $k$-max-pooling layers (cf.~   \Cref{app:Peeling}). The details are tedious so we only mention this result without proof. This basically improves $\sqrt{D\log(5K)}$ into $\sqrt{D\log(3) + M\log(K)}$. For Resnet152, $K=9$, $D=152$ and $M=1$, $\sqrt{D\log(5K)}\simeq 24$ while $\sqrt{D\log(3) + M\log(K)}\simeq 13$.
\end{remark}
\vspace{-0.2cm}
\subsection{How to deal with the top-1 accuracy loss?}\label{subsec:BoundForLosses}
\vspace{-0.2cm}
\Cref{thm:GeneralizationBound} \emph{does not apply to the top-1 accuracy loss} as \Cref{hyp:LossLips} cannot be satisfied for any finite $L>0$ in general (see \Cref{app:Top1Acc}). It is still possible to bound the expected (test) top-1 accuracy by the so-called {\em margin loss} achieved at training \citep[Lemma A.4]{Bartlett17SpectralGeneralizationBound}. 
The margin-loss is a relaxed definition of the top-1 accuracy loss. A corollary of \Cref{thm:GeneralizationBound} is the next result proved in \Cref{app:MarginLoss}.
\begin{theorem}[Bound on the probability of misclassification]\label{thm:MarginLoss}
    Consider the setting of \Cref{thm:GeneralizationBound}. 
    Assume that the labels are indices $y \in \{1,\ldots,\dout\}$. For any $\gamma>0$, it holds  
    \begin{equation}\label{eq:GeneBoundMargin}
        \begin{aligned}
            \P\left(\argmax_{c} R_\param(\rv{X}_1)_c \neq \rv{Y}_1\right) \leq  &\, \frac{1}{n}\sum_{i=1}^n \mathbbm{1}_{(R_{\hatparam(\rv{Z})}(\rv{X}_i))_{\rv{Y}_i} \leq \gamma + \max_{c \neq \rv{Y}_i}(R_{\hatparam(\rv{Z})}(\rv{X}_i))_{c}}\\
            &  + \frac{8\sigma}{n}C\frac{\sup_{\param}\|\Phi(\param)\|_1}{\gamma}.
        \end{aligned}
    \end{equation}
\end{theorem}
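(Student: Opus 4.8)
The plan is to deduce \Cref{thm:MarginLoss} from \Cref{thm:GeneralizationBound} by instantiating the latter at a Lipschitz surrogate of the $0$--$1$ misclassification loss --- the \emph{ramp margin loss} --- and then sandwiching this surrogate between two indicator functions, exactly as in the classical reduction of \citet[Lemma A.4]{Bartlett17SpectralGeneralizationBound}. Fix $\gamma>0$. For $\hat y\in\R^{\dout}$ and a label $y\in\{1,\dots,\dout\}$, define the margin $\mathcal M(\hat y,y):=\hat y_y-\max_{c\neq y}\hat y_c$, so that misclassifying (i.e. $\argmax_c\hat y_c\neq y$, read with the usual tie convention $\iff \hat y_y\leq\max_{c\neq y}\hat y_c$) forces $\mathcal M(\hat y,y)\leq 0$, and set $\ell_\gamma(\hat y,y):=\phi_\gamma(\mathcal M(\hat y,y))$ with the univariate ramp $\phi_\gamma(t):=\min(1,\max(0,1-t/\gamma))$. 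Two elementary facts are then needed. First, the pointwise sandwich $\mathbbm{1}_{t\leq 0}\leq\phi_\gamma(t)\leq\mathbbm{1}_{t\leq\gamma}$, which after composition with $\mathcal M$ becomes
\[
\mathbbm{1}_{\argmax_c\hat y_c\neq y}\ \leq\ \ell_\gamma(\hat y,y)\ \leq\ \mathbbm{1}_{\hat y_y\leq\gamma+\max_{c\neq y}\hat y_c}.
\]
Second, $\hat y\mapsto\ell_\gamma(\hat y,y)$ is $L$-Lipschitz for $\|\cdot\|_2$ with $L=2/\gamma$: $\phi_\gamma$ is $(1/\gamma)$-Lipschitz on $\R$, and $\mathcal M(\cdot,y)$ is $2$-Lipschitz for $\|\cdot\|_2$ as the difference of the $1$-Lipschitz maps $\hat y\mapsto\hat y_y$ and $\hat y\mapsto\max_{c\neq y}\hat y_c$ (one could even reach $\sqrt2/\gamma$ by differentiating $\mathcal M$, but $2/\gamma$ is what yields the constant $8$ in the statement). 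Hence $\ell_\gamma$ satisfies \eqref{hyp:LossLips} with $L=2/\gamma$, and all the other hypotheses of \Cref{thm:GeneralizationBound} are inherited from the assumed setting.

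Applying \Cref{thm:GeneralizationBound} to the loss $\ell_\gamma$ gives, with the random-parameter convention of its footnote,
\[
\E_{\rv{Z}}\,\textrm{$\ell_\gamma$-generalization error}(\hatparam(\rv{Z}),\rv{Z},\mu)\ \leq\ \frac{4\sigma}{n}\cdot\frac{2}{\gamma}\cdot C\sup_{\param\in\Param}\|\Phi(\param)\|_1\ =\ \frac{8\sigma}{n}C\frac{\sup_{\param}\|\Phi(\param)\|_1}{\gamma}.
\]
Then I would unfold \Cref{def:GeneralizationError}: its test part $\E_{\rv{Z}_0}[\ell_\gamma(R_{\hatparam(\rv{Z})}(\rv{X}_0),\rv{Y}_0)\mid\rv{Z}]$ is, by the left inequality above, at least the conditional probability of misclassifying the fresh point $(\rv{X}_0,\rv{Y}_0)$; its training part $\frac1n\sum_{i=1}^n\ell_\gamma(R_{\hatparam(\rv{Z})}(\rv{X}_i),\rv{Y}_i)$ is, by the right inequality above, at most $\frac1n\sum_{i=1}^n\mathbbm{1}_{(R_{\hatparam(\rv{Z})}(\rv{X}_i))_{\rv{Y}_i}\leq\gamma+\max_{c\neq\rv{Y}_i}(R_{\hatparam(\rv{Z})}(\rv{X}_i))_c}$. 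Substituting both into the previous display and rearranging yields exactly \eqref{eq:GeneBoundMargin}, valid for every $\gamma>0$.

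I do not expect a genuine obstacle here: all the quantitative content lives in \Cref{thm:GeneralizationBound}, and what is added is only the standard margin reduction. The few points deserving a line of care are (i) checking that $\ell_\gamma$ is a bona fide function $\R^{\dout}\times\R^{\dout}\to\R$ meeting the one-sided Lipschitz hypothesis \eqref{hyp:LossLips} for the stated $L$; (ii) keeping the bookkeeping straight so that the left inequality of the sandwich is used on the out-of-sample point and the right inequality on the $n$ in-sample points; and (iii) handling the tie convention in ``$\argmax_c\hat y_c\neq y$'' so that the left inequality holds unconditionally. Optimizing the final bound over $\gamma>0$ is deliberately left to the reader, since the statement is phrased for arbitrary $\gamma$.
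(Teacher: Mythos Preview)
Your proposal is correct and follows essentially the same route as the paper's proof: define the $\gamma$-ramp margin loss, verify it is $(2/\gamma)$-Lipschitz in $\|\cdot\|_2$ (the paper cites \citet[Lemma~A.3]{Bartlett17SpectralGeneralizationBound} for the margin being $2$-Lipschitz, you spell it out), use the sandwich between the two indicators (the paper cites \citet[Lemma~A.4]{Bartlett17SpectralGeneralizationBound}), and plug into \Cref{thm:GeneralizationBound} with $L=2/\gamma$. Your write-up is in fact slightly more explicit than the paper's in separating the two sides of the sandwich and in flagging the tie-breaking convention.
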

Note that the result is homogeneous: scaling both the outputs of the model and $\gamma$ by the same scalar leaves the classifier {\em and the corresponding bound} unchanged. 
\section{Experiments}\label{sec:exp}
\Cref{thm:GeneralizationBound} gives the first path-norm generalization bound that can be applied to modern networks (with average/$*$-max-pooling, skip connections etc.). This bound is also the sharpest known bound of this type (\Cref{tab:GeneralizationBounds}). Since this bound is also easy to compute, the goal of this section is to numerically challenge for the first time the sharpest generalization bounds based on path-norm  on modern networks. Note also that path-norms tightly lower bound products of operator norms (\Cref{app:PhiVsProducts}) so that this also challenges the latter.

\textbf{When would the bound be informative?} For ResNets trained on ImageNet, the training error associated with cross-entropy is typically between $1$ and $2$, and the top-1 training error is typically less than $0.30$. The same orders of magnitude apply to the empirical generalization error. To ensure that the test error (either for cross-entropy or top-1 accuracy) is of the same order as the training error, \emph{the bound should basically be of order $1$}.

For parameters $\param$ learned from training data, \Cref{thm:GeneralizationBound}  and \Cref{thm:MarginLoss} allow to bound the expected loss in terms of a performance measure (that depends on a free choice of $\gamma>0$ for the top-1 accuracy) on training data plus a term bounded by $\frac{4\sigma}{n}C \times L \times \|\Phi(\param)\|_1$. The Lipschitz constant $L$ is $\sqrt{2}$ for cross-entropy, and $2/\gamma$ for the top-1 accuracy.

\textbf{Evaluation of $\frac{4\sigma}{n}C$ for ResNets on ImageNet.} We further bound $\sigma/n$ by $B/\sqrt{n}$, where $B \simeq 2.6$ is the maximum $L^\infty$-norm of the images of ImageNet normalized for inference. We at most lose a factor $B$ compared to the bound directly involving $\sigma$ since it also holds $\sigma/n\geq 1/\sqrt{n}$ by definition of $\sigma$. We train on $99\%$ of ImageNet so that $n=1268355$. Moreover, recall that $C=(D\log((3+2P)K) + \log(\frac{3+2P}{1+P}(\din+1)\dout))^{1/2}$. For ResNets, $P=1$ (as there are only classical max-pooling neurons, corresponding to $k$-max-pooling with $k=1$), the kernel size is $K=9$, $\din=224\times 224\times 3$, $\dout=1000$, and the depth is $D=2+\textrm{\# basic blocks}\times \textrm{\# conv per basic block}$, with the different values available in \Cref{app:expes}. The values for $4BC/\sqrt{n}$ are reported in \Cref{tab:ValueBoundResNets}. Given these results and the values of the Lipschitz constant $L$, on ResNet18, \emph{the bound would be informative only when $\|\Phi(\param)\|_1\lesssim 10$ or $\|\Phi(\param)\|_1/\gamma \lesssim 10$} respectively for the cross-entropy and the top-1 accuracy.

\begin{table}[t]
    \centering
    \caption{Numerical evaluations on ResNets and ImageNet1k with 2 significant digits. Multiplying by the Lipschitz constant $L$ of the loss and the path-norm gives the bound in \Cref{thm:GeneralizationBound}. The second line reports the values when the analysis is sharpened for max-pooling neurons, see \Cref{rmk:ImprovedBoundMaxPool}.}  \label{tab:ValueBoundResNets}
    \begin{tabular}{x{0.18\textwidth}|x{0.1\textwidth}|x{0.1\textwidth}|x{0.1\textwidth}|x{0.1\textwidth}|x{0.1\textwidth}}
         ResNet& 18 & 34 & 50 & 101 & 152 \\
        \hline
        $\frac{4}{\sqrt{n}}CB=$ & $0.088$ & $0.11$ & $0.14$ & $0.19$ & $0.23$ \\
        $\frac{4}{\sqrt{n}}C_{\textrm{sharpened}}B=$ & $0.060$ & $0.072$ & $0.082$ & $0.11$ & $0.13$
    \end{tabular}
\end{table}
We now compute the path-norms of trained ResNets, both dense and sparse, using the simple formula proved in \Cref{thm:ComputePathNorm} in appendix.

\textbf{$L^1$ path-norm of pretrained ResNets are $30$ orders of magnitude too large.} \Cref{tab:IncreasingDepth} shows that the $L^1$ path-norm is 30 orders of magnitude too large to make the bound informative for the cross-entropy loss. The choice of $\gamma$ is discussed in \Cref{app:expes}, where we observe that there is no possible choice that leads to an informative bound for top-1 accuracy in this situation.

\begin{table}[t]
    \centering
    \caption{Path-norms of pretrained ResNets available on PyTorch, computed in float32.}\label{tab:IncreasingDepth}
    \begin{tabular}{x{0.12\textwidth}|x{0.11\textwidth}|x{0.11\textwidth}|x{0.11\textwidth}|x{0.11\textwidth}|x{0.11\textwidth}}
         ResNet& 18 & 34 & 50 & 101 & 152 \\
        \hline
        $\|\Phi(\param)\|_1$ & $1.3\times 10^{30}$ & overflow & overflow & overflow & overflow \\
        \hline
        $\|\Phi(\param)\|_2$ & $2.5\times 10^2$ & $1.1\times 10^2$ & $2.0\times 10^8$ & $2.9\times 10^{9}$ & $8.9\times 10^{10}$ \\
        \hline
        $\|\Phi(\param)\|_4$ & $7.2\times 10^{-6}$ & $4.9\times 10^{-6}$ & $6.7\times 10^{-4}$ & $3.0\times 10^{-4}$ & $1.5\times 10^{-4}$
    \end{tabular}
\end{table}

\textbf{Sparse ResNets can decrease the bounds by 13 orders of magnitude.} We just saw that pretrained ResNets have very large $L^1$ path-norm. Does every network with a good test top-1 accuracy have such a large $L^1$ path-norm? Since any zero in the parameters $\param$ leads to many zero coordinates in $\Phi(\param)$, we now investigate whether sparse versions of ResNet18 trained on ImageNet have a smaller path-norm. Sparse networks are obtained with iterative magnitude pruning plus rewinding, with hyperparameters similar to the one in \citet[Appendix A.3]{Frankle21MissingTheMark}. Results show that the $L^1$ path-norm decreases from $\simeq10^{30}$ for the dense network to $\simeq 10^{17}$ after 19 pruning iterations, basically losing between a half and one order of magnitude per pruning iteration. Moreover, the test top-1 accuracy is better than with the dense network for the first 11 pruning iterations, and after 19 iterations, the test top-1 accuracy is still way better than what would be obtained by guessing at random, so this is still a non-trivial matter to bound the generalization error for the last iteration. Details are in \Cref{app:expes}. This shows that there are indeed practically trainable networks with much smaller $L^1$ path-norm that perform well. It remains open whether alternative training techniques, possibly with path-norm regularization, could lead to networks combining good performance and informative generalization bounds.

\textbf{Additional observations: increased depth and train size.} In practice, increasing the size of the network (\ie the number of parameters) or the number of training samples can improve generalization. We can, again, assess for the first time whether the bounds based on path-norms follows the same trend for standard modern networks. \Cref{tab:IncreasingDepth} shows that path-norms of pretrained ResNets available on PyTorch roughly increase with depth. This is complementary to \citet[Figure 1]{GKDziugaite20SearchRobustMeasuresGeneralization} where it is empirically observed on {\em simple layered fully-connected} models that path-norm has difficulty to correlate positively with the generalization error when the depth grows. For increasing training sizes, we did not observe a clear trend for the $L^1$ path-norm, which seems to mildly evolve with the number of epochs rather than with the train size, see \Cref{app:expes} for details.
\vspace{-0.4cm}
\section{Conclusion}
\vspace{-0.3cm}
\textbf{Contribution.} To the best of our knowledge, this work is the first to introduce path-norm related tools for general DAG ReLU networks (with average/$*$-max-pooling, skip connections), and \Cref{thm:GeneralizationBound} is the first generalization bound valid for such networks based on path-norm. This bound recovers or beats the sharpest known ones of the same type. Its ease of computation leads to the first experiments on modern networks that assess the promises of such approaches. A gap between theory and practice is observed for a dense version of ResNet18 trained with standard tools: the bound is $30$ orders of magnitude too large on ImageNet. 

\textbf{Possible leads to close the gap between theory and practice.} 1) Without changing the bound of \Cref{thm:GeneralizationBound}, sparsity seems promising to reduce the path-norm by several orders of magnitude without changing the performance of the network. 2) \Cref{thm:GeneralizationBound} results from the worst situation (that can be met) where all the inputs activate all the paths of the network simultaneously. Bounds involving the expected path-activations could be tighter. The coordinates of $\Phi(\param)$ are elementary bricks that can be summed to get the slopes of $R_\param$ on the different region where $R_\param$ is affine \citep{arora2018understandingReLUnetworks}, $\|\Phi(\param)\|_1$ is the sum of all the bricks in absolue value, resulting in a worst-case uniform bound for all the slopes. Ideally, the bound should rather depend on the expected slopes over the different regions, weighted by the probability of falling into these regions. 3) Weight sharing may leave room for sharpened analysis \citep{Pitas19LimitsWeightSharingGeneralization, Galanti23WeightSharingGene}. 4) A $k$-max-pooling neuron with kernel size $K$ only activates $1/K$ of the paths, but the bound sums the coordinates of $\Phi$ related to these $K$ paths. This may lead to a bound $K$ times too large in general (or even more in the presence of multiple maxpooling layers). 5) Possible bounds involving the $L^q$ path-norm for $q>1$ deserve a particular attention, since numerical evaluations show that they are several orders of magnitude below the $L^1$ norm. 

\textbf{Extensions to other architectures.} Despite its applicability to a wide range of standard modern networks, the generalization bound in \Cref{thm:GeneralizationBound} does not cover networks with other activations than ReLU, identity, and $*$-max-pooling. The same proof technique could be extended to new activations that: 1) are positively homogeneous, so that the weights can be rescaled without changing the associated function; and 2) satisfy a contraction lemma similar to the one established here for ReLU and max neurons (typically requiring the activation to be Lipschitz). A plausible candidate is Leaky ReLU. For smooth approximations of the ReLU, such as the SiLU (for Efficient Nets) and the Hardswish (for MobileNet-V3), parts of the technical lemmas related to contraction may extend since they are Lipschitz, but these activations are not positively homogeneous.
\vspace{-0.3cm}
\subsubsection*{Acknowledgments}
\vspace{-0.2cm}
This work was supported in part by the AllegroAssai ANR-19-CHIA-0009 and NuSCAP ANR-20-CE48-0014 projects of the French Agence Nationale de la Recherche.

The authors thank the Blaise Pascal Center for the computational means. It uses the SIDUS \citep{quemener2013} solution developed by Emmanuel Quemener.

\bibliography{main.bbl}
\bibliographystyle{iclr2024_conference}

\appendix
\newpage

\section*{\bf Supplementary material}

\section{Model's basics}\label{app:ModelsBasics}
\Cref{def:PhiActivation} introduces the path-lifting and the path-activations associated with the general model described in \Cref{def:NN}. We choose to call it path-lifting as in \cite{BonaPellissier22NipsIdentifiability} (and not an embedding as initially named in \cite{Stock22Embedding}, since it is not injective). Formally, a lifting in the sense of category theory would factorize the mapping  $\param\mapsto R_\param$, which is not the case of $\param\mapsto\Phi(\param)$, but is the case with the extended  mapping  $\param\to (\sgn(\param), \Phi(\param))$. We chose the name lifting for $\Phi(\param)$ for the sake of brevity.

\begin{definition}[Paths and depth in a DAG]\label{def:Paths}
Consider a DAG $G=(N,E)$ as in \Cref{def:NN}. A path of $G$ is any sequence of neurons $v_0,\dots, v_d$ such that each $v_{i}\to v_{i+1}$ is an edge in $G$. Such a path is denoted $p = v_0 \to \ldots \to v_d$. This includes paths reduced to a single $v\in\NeuronSet$, denoted $p=v$. The {\em length} of a path is $\length{p}=d$ (the number of edges). We will denote $p_\ell := v_\ell$ the $\ell$-th neuron for a general $\ell\in\{0,\dots,\length{p}\}$ and use the shorthand $\pathend{p}=v_{\length{p}}$ for the last neuron. The {\em depth of the graph} $G$ is the maximum length over all of its paths. If $v_{d+1} \in \suc(\pathend{p})$ then $p \to v_{d+1}$ denotes the path $v_0 \to \ldots \to v_d \to v_{d+1}$. We denote by $\paths^{G}$ (or simply $\paths$) the set of paths ending at an output neuron of $G$.
\end{definition}
\begin{definition}[Sub-graph ending at a given neuron]\label{def:subgraphToV}
    Given a neuron $v$ of a DAG $G$, we denote $G^{\to v}$ the graph deduced from $G$ by keeping only the largest subgraph with the same inputs as $G$ and with $v$ as a single output: every neuron $u$ with no path to reach $v$ through the edges of $G$ is removed, as well as all its incoming and outcoming edges. We will use the shorthand $\pathsto{v}:=\paths^{G^{\to v}}$ to denote the set of paths in $G$ ending at $v$.
\end{definition}
\begin{definition}[Path-lifting and path-activations]\label{def:PhiActivation} 
Consider a ReLU neural network architecture $G$ as in \Cref{def:NN} and parameters $\param\in\R^G$ associated with $G$. For $p\in\paths$, define
\[
\Phi_p(\param) := \left\{\begin{array}{cc}
    \prod\limits_{\ell=1}^{\length{p}} \paramfromto{v_{\ell-1}}{v_{\ell}} & \textrm{if }p_0\in\NeuronsIn,\\
    \bias_{p_0} \prod\limits_{\ell=1}^{\length{p}} \paramfromto{v_{\ell-1}}{v_{\ell}} & \textrm{otherwise}, 
\end{array}\right.
\]
where an empty product is equal to $1$ by convention. The path-lifting $\Phi^{G}(\param)$ of $\param$ is
\[
    \Phi^G(\param) := (\Phi_p(\param))_{p\in\paths^G}.
\]
This is often denoted $\Phi$ when the graph $G$ is clear from the context. We will use the shorthand $\Phito{v} := \Phi^{G^{\to v}}$ to denote the path-lifting associated with $G^{\to v}$ (\Cref{def:subgraphToV}).

Consider an input $x$ of $G$. The activation of an {\em edge} $u\to v$ on $(\param,x)$ is defined to be $a_{u\to v}(\param,x):=1$ when $v$ is an identity neuron; $a_{u\to v}(\param,x):=\mathbbm{1}_{v(\param,x)>0}$ when $v$ is a ReLU neuron; and when $v$ is a $k$-max-pooling neuron, define $a_{u\to v}(\param,x):=1$ if the neuron $u$ is the first in $\ant(v)$ in lexicographic order to satisfy $u(\param,x):=\kpool\left((w(\param,x))_{w\in\ant(v)}\right)$ and $a_{u\to v}(\param,x):=0$ otherwise. The activation of a {\em neuron} $v$ on $(\param,x)$ is defined to be $a_v(\param,x):=1$ if $v$ is an input neuron, an identity neuron, or a $k$-max-pooling neuron, and $a_v(\param,x):=\mathbbm{1}_{v(\param,x)> 0}$ if $v$ is a ReLU neuron. We then define the activation of a {\em path} $p\in\paths$ with respect to input $x$ and parameters $\param$ as: $a_p(\param,x):=a_{p_0}(\param,x) \prod_{\ell=1}^{\length{p}} a_{v_{\ell-1} \to v_{\ell}}(\param,x)$ (with an empty product set to one by convention). %
Consider a new symbol $\biasNeuron$  that is not used for denoting neurons. The path-activations matrix $\activation(\param,x)$ is defined as the matrix in $\R^{\paths\times (\BiasAndNeuronsIn)}$ such that for any path $p\in\paths$ and neuron $u\in\BiasAndNeuronsIn$
\[
(\activation(\param,x))_{p, u}:= \left\{\begin{array}{cc}
        a_p(\param,x) \mathbbm{1}_{p_0=u} & \textrm{if }u\in\NeuronsIn, \\
        a_p(\param,x) \mathbbm{1}_{p_0\notin \NeuronsIn} & \textrm{otherwise when }u=\biasNeuron.
    \end{array}\right.
\]
\end{definition}
The next lemma shows how the path-lifting and the path-activations offer an equivalent way to define the model of \Cref{def:NN}.

\begin{lemma}\label{lem:ForwardWithPhi}
Consider a ReLU network as in \Cref{def:NN}. For every neuron $v$, every input $x$ and every parameters $\param$:
\begin{equation}\label{eq:ForwardWithPhiAppendix}
v(\param,x) = \inner{\Phito{v}(\param)}{\ActTo{v}(\param,x) \left(\begin{array}{c}
     x \\
     1 
\end{array}\right)}.
\end{equation}
\end{lemma}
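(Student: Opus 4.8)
The plan is to establish \eqref{eq:ForwardWithPhiAppendix} by strong induction along a topological ordering of the neurons of $G$ (equivalently, by induction on the depth $\max_{p\in\pathsto{v}}\length{p}$ of $v$ in $G^{\to v}$). The first step is to rewrite the right-hand side of \eqref{eq:ForwardWithPhiAppendix} as an explicit sum over paths: unwinding the definition of the path-activation matrix in \Cref{def:PhiActivation}, the coordinate indexed by a path $p=v_0\to\dots\to v_d\in\pathsto{v}$ of the vector $\ActTo{v}(\param,x)\binom{x}{1}$ equals $a_p(\param,x)\,c_p(x)$, where $c_p(x):=x_{v_0}$ if $v_0\in\NeuronsIn$ and $c_p(x):=1$ otherwise, so that the right-hand side of \eqref{eq:ForwardWithPhiAppendix} equals $\sum_{p\in\pathsto{v}}\Phi_p(\param)\,a_p(\param,x)\,c_p(x)$. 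The base case is $v\in\NeuronsIn$, where the only path ending at $v$ is the length-$0$ path $p=v$, with $\Phi_v(\param)=1$, $a_v(\param,x)=1$ and $c_v(x)=x_v$, so the sum equals $x_v=v(\param,x)$.

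For the inductive step I would assume $v\notin\NeuronsIn$ and use the combinatorial observation that $\pathsto{v}$ is the disjoint union of $\{v\}$ and, over $u\in\ant(v)$, of the sets $\{\,q\to v:q\in\pathsto{u}\,\}$, since a path ending at $v$ of positive length is determined uniquely by its penultimate neuron $u\in\ant(v)$ and by the path $q\in\pathsto{u}$ obtained by deleting its last edge. Along this decomposition one has $\Phi_{q\to v}(\param)=\Phi_q(\param)\,\paramfromto{u}{v}$, $c_{q\to v}(x)=c_q(x)$, and $a_{q\to v}(\param,x)=a_q(\param,x)\,a_{u\to v}(\param,x)$, while $\Phi_v(\param)=\bias_v$ and $c_v(x)=1$ for the length-$0$ path. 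Substituting, grouping the sum according to the penultimate neuron, and applying the induction hypothesis to each $u\in\ant(v)$ (which precedes $v$) to recognize $\sum_{q\in\pathsto{u}}\Phi_q(\param)\,a_q(\param,x)\,c_q(x)=u(\param,x)$, the right-hand side of \eqref{eq:ForwardWithPhiAppendix} reduces to
\[
a_v(\param,x)\,\bias_v\;+\;\sum_{u\in\ant(v)}\paramfromto{u}{v}\,a_{u\to v}(\param,x)\,u(\param,x).
\]

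It then remains to check that this expression equals $v(\param,x)$ by a case analysis on $\rho_v$, using the neuron- and edge-activation rules of \Cref{def:PhiActivation} and the recursion \eqref{def:Neurons}. If $\rho_v=\id$, all the activations above equal $1$ and the expression is exactly the pre-activation of $v$, hence $v(\param,x)$. If $\rho_v=\relu$, all the activations above equal $\mathbbm{1}_{s>0}$, where $s$ denotes the pre-activation of $v$ (using that $v(\param,x)=\relu(s)$), and one concludes from $\mathbbm{1}_{s>0}\,s=\relu(s)=v(\param,x)$. If $\rho_v=\kpool$, then $a_v(\param,x)=1$ and exactly one incoming edge $u^\ast\to v$ is active, namely the one whose antecedent is first, in lexicographic order over $\ant(v)$, to have its contribution $\bias_v+\paramfromto{u}{v}\,u(\param,x)$ to the pooled inputs equal to the selected $k$-th largest value $v(\param,x)$; hence the sum over $\ant(v)$ collapses and the whole expression equals $\bias_v+\paramfromto{u^\ast}{v}\,u^\ast(\param,x)=v(\param,x)$.

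I expect the only genuinely delicate point to be the $k$-max-pooling case: one has to verify that the lexicographic tie-breaking in \Cref{def:PhiActivation} makes \emph{exactly one} incoming edge of a $k$-max-pooling neuron active — existence because the $k$-th order statistic is attained among the antecedents' contributions, uniqueness because the rule selects the first such antecedent — so that the sum over antecedents collapses to precisely the pre-activation realizing that order statistic. The rest (unwinding the path-activation matrix, the disjoint-union bookkeeping, the three elementary activation identities, and the fact that $\Phi_q$, $a_q$, $c_q$ for a path $q$ ending at $u$ do not depend on whether they are read off $G$, $G^{\to v}$ or $G^{\to u}$) is routine.
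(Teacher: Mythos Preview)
Your proof is correct and follows essentially the same approach as the paper: both rewrite the inner product as $\sum_{p\in\pathsto{v}}\Phi_p(\param)\,a_p(\param,x)\,x_{p_0}$ (with the convention $x_u=1$ for $u\notin\NeuronsIn$), proceed by induction along a topological ordering, decompose $\pathsto{v}$ as $\{v\}\sqcup\bigcup_{u\in\ant(v)}\{q\to v:q\in\pathsto{u}\}$, and handle the three activation types by case analysis. The only cosmetic difference is that the paper first establishes the identity $v(\param,x)=a_v(\param,x)\bias_v+\sum_{u\in\ant(v)}a_{u\to v}(\param,x)\paramfromto{u}{v}u(\param,x)$ by case analysis and then applies the path decomposition and induction hypothesis, whereas you do these two steps in the reverse order.
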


\begin{proof}[Proof of \Cref{lem:ForwardWithPhi}]
For any neuron $v$, recall that $\pathsto{v}$ is the set of paths ending at neuron $v$ (\Cref{def:subgraphToV}). We want to prove
\begin{align}
    v(\param,x) & = \inner{\Phito{v}(\param)}{\ActTo{v}(\param,x) \left(\begin{array}{c}
         x \\
         1 
    \end{array}\right)} \nonumber\\
    & = \sum_{p\in\pathsto{v}} \Phi_p(\param) a_p(\param,x) x_{p_0}. \label{eq:GoalLemA.1}
\end{align}
where we recall that $p_0$ denotes the first neuron of a path $p$ (\Cref{def:Paths}). 
For paths starting at an input neuron, $x_{p_0}$ is simply the corresponding coordinate of $x$. For other paths, we use the convention $x_u:=1$ for any neuron $u \in \NeuronSet \setminus \NeuronsIn$.

The proof of \Cref{eq:GoalLemA.1} goes by induction on a topological sorting \cite{Cormen} of the neurons. We start with input neurons since by \Cref{def:NN}, these are the ones without antecedents so they are the first to appear in a topological sorting. Consider an input neuron $v$. The only path in $\pathsto{v}$ is $p=v$. By \Cref{def:PhiActivation}, it holds $\Phi_p(\param)=1$ (empty product) and $a_p(\param,x)=a_v(\param,x)=1$. Moreover, we have $v(\param,x)=x_v$ (\Cref{def:NN}). This proves \Cref{eq:GoalLemA.1} for input neurons.

Now, consider $v\notin\NeuronsIn$ and assume that \Cref{eq:GoalLemA.1} holds true for every neuron $u\in\ant(v)$. We first prove by cases that
\begin{equation}\label{eq:preInductionForward}
    v(\param,x) = a_v(\param,x) \bias_v + \sum_{u\in\ant(v)} u(\param,x)a_{u\to v}(\param,x)\paramfromto{u}{v}.
\end{equation}

\textbf{Case of an identity neuron.} If $v$ is an identity neuron, then by \Cref{def:NN}
\[
v(\param,x) = \bias_v + \sum_{u\in\ant(v)} u(\param,x)\paramfromto{u}{v}.
\]
Moreover, by \Cref{def:PhiActivation} we have $a_v(\param,x)=1$ and $a_{u\to v}(\param,x) = 1$, since $v$ is an identity neuron, so that the previous equality can also be written as
\[
v(\param,x) = {a_v(\param,x)} \bias_v + \sum_{u\in\ant(v)} u(\param,x){a_{u\to v}(\param,x)}\paramfromto{u}{v}.
\]
This shows \Cref{eq:preInductionForward} in this case.

\textbf{Case of a ReLU neuron.} If $v$ is a ReLU neuron then similarly 
\begin{align*}
    v(\param,x)  = \relu\left(\bias_v + \sum_{u\in\ant(v)} u(\param,x)\paramfromto{u}{v}\right) 
    & = \mathbbm{1}_{v(\param,x)>0}\left(\bias_v + \sum_{u\in\ant(v)} u(\param,x)\paramfromto{u}{v}\right) \\
    & = \underbrace{\mathbbm{1}_{v(\param,x)> 0}}_{=a_{v}(\param,x)}\bias_v + \sum_{u\in\ant(v)} u(\param,x) \underbrace{\mathbbm{1}_{v(\param,x)> 0}}_{=a_{u\to v}(\param,x)}\paramfromto{u}{v}
\end{align*}
This shows again \Cref{eq:preInductionForward}.

\textbf{Case of a $k$-max-pooling neuron.} When $v$ is a $k$-max-pooling neuron, it holds:
\begin{align*}
    v(\param,x) & = \kpool\left(\left( \bias_v + u(\param,x)\paramfromto{u}{v}\right)_{u\in\ant(v)}\right) \\
    & = \bias_v + \sum_{u\in\ant(v)} \underbrace{\mathbbm{1}_{\textrm{$u$ is the first to realize this $\kpool$}}}_{=a_{u\to v}(\param,x)\textrm{ (\Cref{def:PhiActivation})}}u(\param,x)\paramfromto{u}{v} && \textrm{(\Cref{def:ReluKpool})} \\
    & = \underbrace{a_v(\param,x)}_{=1\textrm{ (\Cref{def:PhiActivation})}} \bias_v + \sum_{u\in\ant(v)} u(\param,x)a_{u\to v}(\param,x)\paramfromto{u}{v}.
\end{align*}
This finishes proving \Cref{eq:preInductionForward} in every case. Using the induction hypothesis on the antecedents of $v$, \Cref{eq:preInductionForward} implies
\[
    v(\param,x) = a_v(\param,x)\bias_v + \sum_{u\in\ant(v)} \left(\sum_{p\in\pathsto{u}} \Phi_p(\param) a_p(\param,x)x_{p_0}\right)a_{u\to v}(\param,x)\paramfromto{u}{v}.
\]
We want to prove that this is equal to
\[
\sum_{p\in\pathsto{v}} \Phi_p(\param) a_p(\param,x) x_{p_0}.
\]
A path $\tilde{p}\in\pathsto{v}$ is either the path $\tilde{p}=v$ starting and ending at $v$, or it can be written in a unique way as $\tilde{p}=p\to v$ where $p\in\pathsto{u}$ is a path ending at an antecedent $u$ of $v$. For the simple path $\tilde{p}=v$, it holds by definition (\Cref{def:PhiActivation}): $a_{\tilde{p}}(\param,x)=a_v(\param,x)$, $\Phi_{\tilde{p}}(\param)=\bias_v$ and by the convention used in this proof we have $x_{\tilde{p}_0}=x_v=1$ since $v$ is not an input neuron. This shows:
\[
a_v(\param,x)\bias_v = \Phi_{\tilde{p}}(\param)a_{\tilde{p}}(\param,x)x_{\tilde{p}_0}.
\]
When $\tilde{p}=p\to v$ as above, it holds by definition: $x_{p_0} = x_{\tilde{p}_0}$, $\Phi_{\tilde{p}}(\param) = \Phi_p(\param)\paramfromto{u}{v}$ and $a_{\tilde{p}}(\param,x)=a_{p}(\param,x)a_{u\to v}(\param,x)$. It then holds:
\[
\Phi_p(\param) a_p(\param,x)x_{p_0}a_{u\to v}(\param,x)\paramfromto{u}{v} = \Phi_{\tilde{p}}(\param)a_{\tilde{p}}(\param,x)x_{\tilde{p}_0}.
\]
This concludes the induction and proves the result.
\end{proof}

A straightforward consequence of \Cref{lem:ForwardWithPhi} is that path-norms can be used to bound from above the Lipschitz constant of $x\mapsto R_\param(x)$.

\begin{definition}(Mixed path-norm)\label{def:MixedPathNorm}
    For $0< q,r\leq \infty$, define:
    \begin{equation}\label{eq:MixedPathNorm}
        \|\Phi(\param)\|_{q,r} := \left\|\left(\|\Phito{v}(\param)\|_q\right)_{v\in\NeuronsOut}\right\|_r.
    \end{equation}
\end{definition}

\begin{lemma}\label{lem:LipsX}
    Consider $0<r\leq\infty$. For every parameters $\param$ and inputs $x,x'$:
    \[
    \|R_\param(x)-R_{\param}(x')\|_r\leq \|\Phi(\param)\|_{1,r}\|x-x'\|_\infty.
    \]
\end{lemma}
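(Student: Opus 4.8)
The plan is to reduce the vector-valued bound to a scalar Lipschitz estimate for each output neuron and then recombine. Fix $v\in\NeuronsOut$; I first claim that $|v(\param,x)-v(\param,x')|\le\|\Phito{v}(\param)\|_1\,\|x-x'\|_\infty$ for all inputs $x,x'$. Granting this for every $v$ and writing $R_\param(x)-R_\param(x')=\big(v(\param,x)-v(\param,x')\big)_{v\in\NeuronsOut}$, it suffices to observe that for every $0<r\le\infty$ the map $\vz\mapsto\|\vz\|_r$ is nondecreasing with respect to the coordinatewise order on nonnegative vectors; applying it to the vector of upper bounds $\big(\|\Phito{v}(\param)\|_1\,\|x-x'\|_\infty\big)_{v\in\NeuronsOut}$ gives exactly $\|R_\param(x)-R_\param(x')\|_r\le\big\|\big(\|\Phito{v}(\param)\|_1\big)_{v\in\NeuronsOut}\big\|_r\,\|x-x'\|_\infty=\|\Phi(\param)\|_{1,r}\,\|x-x'\|_\infty$, by \Cref{def:MixedPathNorm}.

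For the scalar claim I use \Cref{lem:ForwardWithPhi}. Its content is that on the set of inputs where $\ActTo{v}(\param,\cdot)$ takes a fixed value $\alpha$, the neuron $v(\param,\cdot)$ coincides with the affine map $g_\alpha:x\mapsto\inner{\Phito{v}(\param)}{\alpha\binom{x}{1}}$. Expanding with \Cref{def:PhiActivation}, the part of $g_\alpha$ that is linear in $x$ is $\sum_{u\in\NeuronsIn}w_u\,x_u$ with $w_u=\sum_{p\in\pathsto{v}:\,p_0=u}\Phi_p(\param)\,\alpha_{p,u}$, since the $\biasNeuron$-column of $\alpha$ multiplies only the constant coordinate $1$. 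Because every entry $\alpha_{p,u}$ lies in $\{0,1\}$, the $\|\cdot\|_\infty$-Lipschitz constant of $g_\alpha$ equals $\|w\|_1=\sum_{u\in\NeuronsIn}\big|\sum_{p:\,p_0=u}\Phi_p(\param)\alpha_{p,u}\big|\le\sum_{p\in\pathsto{v}:\,p_0\in\NeuronsIn}|\Phi_p(\param)|\le\|\Phito{v}(\param)\|_1$. Thus each affine piece of $v(\param,\cdot)$ is $\|\Phito{v}(\param)\|_1$-Lipschitz for $\|\cdot\|_\infty$.

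It remains to glue the pieces along the segment joining $x$ and $x'$. The maps $\relu,\id$ and $\kpool$ (the $k$-th largest coordinate) are all continuous, so $x\mapsto v(\param,x)$ is continuous; and for fixed $\param$ each region $\{x:\ActTo{v}(\param,x)=\alpha\}$ is cut out by finitely many affine (in)equalities in $x$ (sign conditions and lexicographic tie-breaks), so its trace on a line segment is a finite union of intervals. Hence, setting $\gamma(t)=(1-t)x+t x'$ for $t\in[0,1]$, there are $0=t_0<\dots<t_m=1$ with $\ActTo{v}(\param,\gamma(t))$ constant, say equal to $\alpha_j$, on each open interval $(t_j,t_{j+1})$; on $[t_j,t_{j+1}]$ the two continuous functions $t\mapsto v(\param,\gamma(t))$ and $t\mapsto g_{\alpha_j}(\gamma(t))$ agree on the interior, hence everywhere, so $|v(\param,\gamma(t_{j+1}))-v(\param,\gamma(t_j))|\le\|\Phito{v}(\param)\|_1\,(t_{j+1}-t_j)\,\|x-x'\|_\infty$. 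Telescoping over $j$ and using $\sum_j(t_{j+1}-t_j)=1$ yields the scalar claim, which completes the proof.

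The only genuine obstacle is this last step: the path-activation matrix depends on the input, so one cannot directly write $v(\param,x)-v(\param,x')$ as an inner product of $\Phito{v}(\param)$ with a single fixed activation pattern applied to $\binom{x}{1}-\binom{x'}{1}$; the piecewise-affine/telescoping argument is exactly what bridges this gap, and is the only place where continuity and finiteness of the activation patterns are invoked. A slicker alternative, bypassing both \Cref{lem:ForwardWithPhi} and the piecewise-linear structure, is an induction along a topological order of the DAG: for input neurons $v(\param,x)=x_v$ and $\|\Phito{v}(\param)\|_1=1$, while for $v\notin\NeuronsIn$ one combines the $1$-Lipschitzness of $\relu$, $\id$ and $x\mapsto x_{(k)}$ (the last two with respect to $\|\cdot\|_\infty$ on the incoming vector) with the recursion $\|\Phito{v}(\param)\|_1=|\bias_v|+\sum_{u\in\ant(v)}|\paramfromto{u}{v}|\,\|\Phito{u}(\param)\|_1$ — itself immediate from the path decomposition used in the proof of \Cref{lem:ForwardWithPhi} — to push the bound from the antecedents of $v$ to $v$.
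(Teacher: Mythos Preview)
Your main argument is correct and essentially identical to the paper's: use \Cref{lem:ForwardWithPhi} to bound the Lipschitz constant on each constant-activation region, then glue along a segment via continuity and the triangle inequality; the only cosmetic difference is that you first establish the scalar bound for each output neuron and then recombine into the $\|\cdot\|_r$ estimate, whereas the paper applies H\"older directly to the vector output. The DAG-induction alternative you sketch at the end is a genuinely different and self-contained route that bypasses the piecewise-affine structure altogether.
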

For $r=1$, we simply have $\|\Phi(\param)\|_{1,r}=\|\Phi(\param)\|_1$. For this {\em specific value of $r$}, and in the {\em specific case} of  layered fully-connected neural networks without biases, the conclusion of \Cref{lem:LipsX} is already mentioned in \citet[before Section 3.4]{Neyshabur17ImplictRegularizationInDL}, and proven in \citet[Theorem 5]{PhDFurusho20Lipschitz}. \Cref{lem:LipsX} extends it to arbitrary DAG ReLU networks, and to arbitrary $r\in(0,\infty]$. This requires the introduction of {\em mixed} path-norms, which, to the best of our knowledge, have not been considered before in the literature.

\begin{proof}[Proof of \Cref{lem:LipsX}]
Consider parameters $\param$. Consider inputs $x,x'$ with the same path-activations with respect to $\param$: $\activation(\param,x)=\activation(\param,x')$. For $0<r<\infty$:
\begin{align*}
    \|R_\param(x) - R_\param(x')\|_{r}^r & \underset{\textrm{\Cref{lem:ForwardWithPhi}}}{=} \sum_{v\in\NeuronsOut} \left|\inner{\Phito{v}(\param)}{\ActTo{v}(\param,x) \left(\begin{array}{c}
     x \\
     1 
\end{array}\right) - \ActTo{v}(\param,x') \left(\begin{array}{c}
     x' \\
     1 
\end{array}\right)}\right|^r \nonumber\\
& \underset{\textrm{Hölder}}{\leq} \sum_{v\in\NeuronsOut} \|\Phito{v}(\param)\|_1^r \left\|\ActTo{v}(\param,x) \left(\begin{array}{c}
     x \\
     1 
\end{array}\right) - \ActTo{v}(\param,x') \left(\begin{array}{c}
     x' \\
     1 
\end{array}\right)\right\|_\infty^r \nonumber\\
& \underset{\activation(\param,x)=\activation(\param,x')}{\leq} \sum_{v\in\NeuronsOut} \|\Phito{v}(\param)\|_1^r \left\|\ActTo{v}(\param,x) \left(\left(\begin{array}{c}
     x \\
     1 
\end{array}\right) -\left(\begin{array}{c}
     x' \\
     1 
\end{array}\right)\right)\right\|_\infty^r \nonumber\\
& \underset{\|\ActTo{v}(\param,x)y\|_\infty\leq \|y\|_\infty}{\leq} \sum_{v\in\NeuronsOut} \|\Phito{v}(\param)\|_1^r \|x - x'\|_\infty^r \\
& = \|\Phi(\param)\|_{1,r}^r \|x - x'\|_\infty^r. \label{eq:Lips}
\end{align*}
When $0<r<\infty$, we just proved the claim locally on each region where the path-activations $\activation(\param,\cdot)$ are constant. This stays true on the boundary of these regions by continuity. It then expands to the whole domain by triangular inequality because on any segment joining any pair of inputs, there is a finite number of times when the region changes. The proof can be easily adapted to $r=\infty$.
\end{proof}

Another straightforward but important consequence of \Cref{lem:ForwardWithPhi} is that mixed path-norms can be computed in a single forward pass, up to replacing $*$-max-pooling neurons with linear ones.

\begin{theorem}\label{thm:ComputePathNorm}
    Consider an architecture $G=(\NeuronSet,E,(\rho_v)_{v\in\NeuronSet\setminus\NeuronsIn})$ as in \Cref{def:NN}. Consider the architecture $\tilde{G}:=(\NeuronSet,E,(\tilde{\rho}_v)_{v\in\NeuronSet\setminus\NeuronsIn})$ with $\tilde{\rho}_v:=\id$ if $v\in\NeuronSet_{\pool}$, and $\tilde{\rho}_v:=\rho_v$ otherwise. Consider $q\in(0,\infty)$, $r\in(0,\infty]$ and arbitrary parameters $\param\in\R^G=\R^{\tilde{G}}$. For a vector $\alpha$, denote $|\alpha|^q$ the vector deduced from $\alpha$ by applying $x\mapsto |x|^q$ coordinate-wise. Denote by ${\bf{1}}$ the input full of ones. It holds:
    \begin{equation}\label{eq:ComputePathNorm}
        \|\Phi(\param)\|_{q,r} = \| \ |\realization^{\tilde{G}}_{|{\param}|^q}({\bf{1}})|^{1/q}\ \|_r.
    \end{equation}
    Moreover, the formula is false in general if the $*$-max-pooling neurons have not been replaced with identity ones (\ie if the forward pass is done on $G$ rather than $\tilde{G}$).
\end{theorem}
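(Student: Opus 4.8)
The plan is to apply the forward formula of \Cref{lem:ForwardWithPhi} to the modified architecture $\tilde{G}$ with the specific input $\bm{1}$ and the specific parameters $|\param|^q$, and to observe that on this input all activations are trivial (all equal to one), so that the inner product collapses to a plain sum of nonnegative monomials whose total is exactly $\|\Phito{v}(\param)\|_q^q$ for each output neuron $v$. The key point enabling this is that in $\tilde{G}$ every neuron is either an input, an identity, or a ReLU neuron; and since the weights $|\paramfromto{u}{v}|^q$ and biases $|\bias_v|^q$ are all $\geq 0$ and the input $\bm 1$ is $\geq 0$, an easy induction on a topological sort shows that every pre-activation is $\geq 0$, hence every ReLU acts as the identity and every activation coefficient $a_{u\to v}(|\param|^q,\bm 1)$, $a_v(|\param|^q,\bm 1)$ equals $1$. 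Therefore $\ActTo{v}(|\param|^q,\bm 1)$ is the all-ones $\paths^{\to v}\times(\din+1)$ matrix, and \Cref{eq:ForwardWithPhiAppendix} applied at $(|\param|^q,\bm 1)$ in $\tilde G$ gives
\[
\realization^{\tilde G}_{|\param|^q}(\bm 1)_v \;=\; v(|\param|^q,\bm 1)\;=\;\sum_{p\in\pathsto{v}} \Phi_p^{\tilde G}(|\param|^q).
\]

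Next I would identify $\Phi_p^{\tilde G}(|\param|^q)$ with $|\Phi_p^{G}(\param)|^q$. Since $\tilde G$ and $G$ share the same graph and edge set, they share the same set of paths, and by the definition of $\Phi_p$ in \Cref{def:PhiActivation}, $\Phi_p^{\tilde G}(|\param|^q)$ is a product of the coordinates $|\paramfromto{v_{\ell-1}}{v_\ell}|^q$ (times $|\bias_{p_0}|^q$ if $p_0\notin\NeuronsIn$), which equals $\left(\bigl|\prod_\ell \paramfromto{v_{\ell-1}}{v_\ell}\bigr|\cdot(|\bias_{p_0}| \text{ if applicable})\right)^q = |\Phi_p^{G}(\param)|^q$. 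Hence $\realization^{\tilde G}_{|\param|^q}(\bm 1)_v = \sum_{p\in\pathsto{v}}|\Phi_p(\param)|^q = \|\Phito{v}(\param)\|_q^q$, all terms being nonnegative so no absolute value is lost. Taking coordinate-wise $1/q$-th power gives $|\realization^{\tilde G}_{|\param|^q}(\bm 1)_v|^{1/q} = \|\Phito{v}(\param)\|_q$, and applying $\|\cdot\|_r$ over $v\in\NeuronsOut$ yields exactly $\|\Phi(\param)\|_{q,r}$ by \Cref{def:MixedPathNorm}. This proves \Cref{eq:ComputePathNorm}.

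For the ``moreover'' part, I would exhibit a minimal counterexample showing the naive formula on $G$ fails: take a tiny network with a single $1$-max-pooling neuron fed by two input-linked neurons with weights, say, $2$ and $1$ (so kernel size $2$), and one output. On the genuine forward pass through $G$ with parameters $|\param|^q$ and input $\bm 1$, the pooling neuron outputs $\max(2^q,1^q)=2^q$, propagating only the ``active'' path, whereas $\|\Phi(\param)\|_q^q$ must sum the contributions of \emph{both} paths through the pooling neuron, i.e. $2^q+1^q$. Since $2^q+1^q>2^q$, the two quantities differ; one can also note the genuine $G$-output is input-dependent (it depends which coordinate is largest), while the true path-norm is not, which already rules out any single-forward-pass formula on $G$.

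The main obstacle is not conceptual but a matter of care: one must verify rigorously that replacing $\rho_v$ by $\id$ at $*$-max-pooling neurons, combined with the nonnegativity of $|\param|^q$ and $\bm 1$, genuinely forces every activation indicator to be $1$ — in particular handling possible ties and zeros (a zero pre-activation makes $\mathbbm{1}_{v>0}=0$, not $1$). This edge case is why the statement restricts to $q\in(0,\infty)$ and uses the convention that an empty/degenerate contribution is still counted; one sidesteps it by noting that whenever a pre-activation is exactly $0$ the corresponding $\Phi_p(\param)$ terms are $0$ as well, so the sum-of-nonnegative-terms identity still holds, and the only genuine subtlety is bookkeeping the lexicographic tie-breaking in the pooling activations, which is irrelevant once those neurons have been turned into identities.
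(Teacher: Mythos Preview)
Your approach is essentially the paper's: apply \Cref{lem:ForwardWithPhi} in $\tilde G$ with parameters $|\param|^q$ and input $\bm 1$, use nonnegativity so that ReLU acts as the identity, identify $\Phi_p(|\param|^q)=|\Phi_p(\param)|^q$, and conclude via \Cref{def:MixedPathNorm}; your counterexample for the ``moreover'' part is equivalent to the paper's (the paper takes both incoming weights equal to $1$). Two small remarks. First, $\ActTo{v}(|\param|^q,\bm 1)$ is not the all-ones matrix: by \Cref{def:PhiActivation} each row $p$ has at most one nonzero entry (equal to $a_p$, in the column indexed by $p_0$ or by $\biasNeuron$); your conclusion is nonetheless correct since with $x=\bm 1$ every $x_{p_0}=1$, so the inner product still collapses to $\sum_p \Phi_p(|\param|^q)$. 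Second, for the zero-pre-activation edge case you flag as the main obstacle, the paper sidesteps it more simply than you propose: by continuity of both sides of \Cref{eq:ComputePathNorm} in $\param$, one may assume every edge coordinate of $\param$ is nonzero, which forces every neuron output in $\tilde G$ at $(|\param|^q,\bm 1)$ to be strictly positive, hence every $a_p=1$ without exception. Your direct argument (zero activation implies the corresponding $\Phi_p$ vanish) can be made rigorous via a separate induction on a topological sort, but the continuity trick avoids that bookkeeping entirely. (Incidentally, the restriction $q\in(0,\infty)$ is not there to handle this edge case; it is simply what makes the coordinate-wise map $x\mapsto|x|^q$ and the formula meaningful.)
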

\begin{figure}
    \centering
    \includegraphics[scale=0.4]{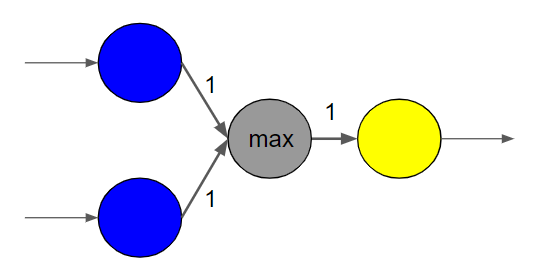}
    \caption{Example of a network where one must replace the max-pooling neuron to compute the path-norm with a single forward pass as in \Cref{eq:ComputePathNorm}.}
    \label{fig:CounterExampleComputePathNorm}
\end{figure}

\begin{proof}[Proof of \Cref{thm:ComputePathNorm}]
\Cref{fig:CounterExampleComputePathNorm} shows that \Cref{eq:ComputePathNorm}
is false if the $*$-max-pooling neurons have not been replaced with identity ones as the forward pass with input ${\bf{1}}$ yields output $1$ while the $L^1$ path-norm is $2$. 

We now establish \Cref{eq:ComputePathNorm}. By continuity in $\param$ of both sides of \Cref{eq:ComputePathNorm}, it is sufficient to prove it when every coordinate of $\param$ in $E$ is nonzero. Note that by \Cref{def:PhiActivation}, the path-lifting $\Phi^{G}$ and $\Phi^{\tilde{G}}$ associated respectively with $G$ and $\tilde{G}$ are the same since $G$ and $\tilde{G}$ have the same underlying DAG. Denote by $\Phi=\Phi^{G}=\Phi^{\tilde{G}}$ the common path-lifting, and by $a^{\tilde{G}}$ the path-activations associated with ${\tilde{G}}$. Denote by $\pathsto{v}$ the set of paths ending at a given neuron $v$ of $\tilde{G}$. According to \Cref{lem:ForwardWithPhi}, it holds for every output neuron $v$ of $\tilde{G}$ with $x={\bf{1}}$, using the convention $x_u:=1$ if $u\notin\NeuronsIn$
\[
    (R^{\tilde{G}}_{|{\param}|^q}({\bf{1}}))_v = \sum_{p\in\pathsto{v}} \Phi_p(|{\param}|^q) a^{\tilde{G}}_p(|{\param}|^q,x)x_{p_0} = \sum_{p\in\pathsto{v}} \Phi_p(|{\param}|^q) a^{\tilde{G}}_p(|{\param}|^q,{\bf{1}}).
\]
Since we restricted to $\param_{E}$ with nonzero coordinates, $|{\param}_{E}|^q$ has positive coordinates. As $\tilde{G}$ has only identity or ReLU neurons, it follows by a simple induction on the neurons that $u(|{\param}|^q,{\bf{1}}) > 0$ for every neuron $u$ of $\tilde{G}$. Thus, every path $p\in\paths^{\tilde{G}}$ is active, \ie, $a^{\tilde{G}}_p(|\param|^q,{\bf{1}})=1$. Since by \Cref{def:PhiActivation}, $\Phi_p(|{\param}|^q) = |\Phi_p({\param})|^q$, we obtain:
\[
(R^{\tilde{G}}_{|{\param}|^q}({\bf{1}}))_v = \sum_{p\in\pathsto{v}}  |\Phi_p(\param)|^q = \|\Phito{v}(\param)\|_q^q.
\]
The claim follows by \Cref{def:MixedPathNorm}.
\end{proof}

\section{Normalized parameters}

In the sequel it will be useful to restrict the analysis to {\em normalized} parameters.
\begin{definition}\label{def:NormalizedParameters}
    Consider $0< q\leq \infty$. Parameters $\param$ are said {\em $q$-normalized} if for every $v \in \NeuronSet \setminus (\NeuronsOut\cup\NeuronsIn)$:
    \[
        \|\Phito{v}(\param)\|_q = \|\left(\begin{array}{c}
     \paramto{v} \\
     {\bias}_v 
\end{array}\right)\|_q \in \{0, 1\},
    \]
    and if this is $0$ then it also holds $\param^{v\orange{\to}}=0$ (\orange{outgoing} edges of $v$).
\end{definition}

Thanks to the rescaling-invariance of ReLU neural network parameterizations, \Cref{alg:NormalizeAlgo} allows to rescale {\em any} parameters $\param$ into a normalized version $\tilde{\param}$ such that $R_{\tilde{\param}} = R_\param$ and $\Phi(\param) = \Phi(\tilde{\param})$ (\Cref{lem:AlgNormalization}). The rescaling-invariance are due to the positive-homogeneity of every activation function $\rho\in\{\id,\relu,\kpool\}$: $\rho(x) = \frac{1}{\lambda}\rho(\lambda x)$ for every $\lambda>0$ and $x\in\R$. 

\textbf{\Cref{alg:NormalizeAlgo}} The first line of the algorithm considers a topological sorting of the neurons \cite[Section 22.4]{Cormen}, \ie an order on the neurons such that if $u\to v$ is an edge then $u$ comes before $v$ in this ordering. Such an order always exists (and it can be computed in linear time). Moreover, note that a classical max-pooling neuron $v$ (corresponding to a $k$-max-pooling neuron with $k=1$, constant incoming weights all equal to one, and biases set to zero) has not anymore its incoming weights equal to one after normalization, in general. This has no incidence on the validity of the generalization bound on classical max-pooling neurons: rescaling is only used in the proof to reduce to another representation of the parameters that realize the same function and that is more handy to work with.
\begin{algorithm}
\caption{Normalization of parameters for norm $q\in(0,\infty]$}\label{alg:NormalizeAlgo}
\begin{algorithmic}[1]
\State Consider a topological sorting $v_1,\dots, v_k$ of the neurons
\For{$v=v_1,\dots,v_k$}
    \If{$v\notin\NeuronsIn\cup\NeuronsOut$}
    \State $\lambda_v \gets \left\|\left(\begin{array}{c}
    \paramto{v} \\
    \bias_v 
\end{array}\right)\right\|_q$
    \If{$\lambda_v=0$}
        \State $\paramfrom{v} \gets 0$
    \Else
        \State $\left(\begin{array}{c}
    \paramto{v} \\
    \bias_v 
\end{array}\right)\gets \frac{1}{\lambda_v} \left(\begin{array}{c}
    \paramto{v} \\
    \bias_v 
\end{array}\right)$ \Comment{normalize incoming weights and bias}
        \State $\paramfrom{v} \gets  \lambda_v \times \paramfrom{v}$ \Comment{rescale outgoing weights to preserve the function $\realization_\param$}
    \EndIf
    \EndIf
\EndFor
\end{algorithmic}
\end{algorithm}

\begin{lemma}\label{lem:AlgNormalization}
    Consider $q\in(0,\infty]$. If $\param$ is the output of \Cref{alg:NormalizeAlgo} for the input $\tilde{\param}$ then $\param$ is $q$-normalized (\Cref{def:NormalizedParameters}), $R_{\param} =R_{\tilde{\param}}$  and $\Phi(\param)=\Phi(\tilde{\param})$.
\end{lemma}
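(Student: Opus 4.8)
The plan is to prove the three assertions — that $\param$ is $q$-normalized, that $R_\param = R_{\tilde\param}$, and that $\Phi(\param)=\Phi(\tilde\param)$ — by analysing one iteration of the loop of \Cref{alg:NormalizeAlgo} and then chaining the iterations along the topological order $v_1,\dots,v_k$. Fix a neuron $v\notin\NeuronsIn\cup\NeuronsOut$ and consider the iteration at $v$. If $\lambda_v>0$ the iteration multiplies $(\paramto{v},\bias_v)$ by $c:=1/\lambda_v>0$ and $\paramfrom{v}$ by $1/c$; writing $\param'$ for the result, I would check by induction along the topological order that $u(\param',x)=u(\param,x)$ for every $u\neq v$ and $v(\param',x)=c\,v(\param,x)$, using positive homogeneity of $\relu,\id$ and $\kpool$ (for $\kpool$ this holds because scaling a vector by a positive scalar preserves the order of its coordinates). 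The factor $c$ on the preactivation of $v$ cancels against the factor $1/c$ on its outgoing edges when feeding each successor, so $R_{\param'}=R_\param$; and for the path-lifting, a path of $\paths$ passing through $v$ as an internal vertex picks up $c$ on its unique incoming edge at $v$ and $1/c$ on its unique outgoing edge, a path with first neuron $v\notin\NeuronsIn$ picks up $c$ on $\bias_v$ and $1/c$ on its first edge, and no path of $\paths$ ends at $v$ since $v\notin\NeuronsOut$; hence $\Phi^G(\param')=\Phi^G(\param)$.

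If instead $\lambda_v=0$, then $\paramto{v}=0$ and $\bias_v=0$, so the preactivation of $v$ is identically zero and $v(\param,x)\equiv 0$ for each of the three activation types. Hence setting $\paramfrom{v}\gets 0$ changes neither $R_\param$ (the contribution of $v$ to every successor was already zero) nor any $\Phi_p(\param)$ with $p\in\paths$ passing through $v$ (such a $\Phi_p$ already vanishes, since it contains either the factor $\bias_v=0$, if $v$ is the first neuron of $p$, or a factor $\paramfromto{u}{v}=0$, if $v$ is internal, and $v$ cannot be the last neuron of $p$). Chaining these two case analyses over all iterations immediately yields $R_\param=R_{\tilde\param}$ and $\Phi(\param)=\Phi(\tilde\param)$ for the final output $\param$.

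It remains to verify that the final $\param$ is $q$-normalized. The key bookkeeping observation is that if $u\to v\in E$ then $u$ precedes $v$ in the topological order, so the vector $\paramto{v}$ and the bias $\bias_v$ are never modified after $v$'s own iteration, whereas an outgoing coordinate $\paramfromto{v}{w}$ is afterwards only multiplied by a positive factor $1/\lambda_w$ (when $\lambda_w\neq 0$) or left untouched (when $\lambda_w=0$ only $\paramfrom{w}$ is altered). Consequently, in the final $\param$: for every $v\notin\NeuronsIn\cup\NeuronsOut$ one has $\|(\paramto{v},\bias_v)\|_q\in\{0,1\}$, being set to $1$ at $v$'s iteration when $\lambda_v\neq 0$ and staying $0$ when $\lambda_v=0$; and if it equals $0$ then $\paramfrom{v}$, having been set to $0$ at $v$'s iteration, is still $0$.

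Finally, to upgrade $\|(\paramto{v},\bias_v)\|_q\in\{0,1\}$ into the identity $\|\Phito{v}(\param)\|_q=\|(\paramto{v},\bias_v)\|_q$ of \Cref{def:NormalizedParameters}, I would prove for the final $\param$, by induction along the topological order and for every non-input $v$, the recursion $\|\Phito{v}(\param)\|_q^q=|\bias_v|^q+\sum_{u\in\ant(v)}|\paramfromto{u}{v}|^q\,\|\Phito{u}(\param)\|_q^q$ (with the obvious $\max$-version for $q=\infty$, and $\|\Phito{u}(\param)\|_q=1$ for input $u$); this follows from the partition of $\pathsto{v}$ into the one-vertex path $v$ and the families $\{p'\to v : p'\in\pathsto{u}\}$ for $u\in\ant(v)$, together with $\Phi_{p'\to v}(\param)=\Phi_{p'}(\param)\paramfromto{u}{v}$. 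Using the induction hypothesis, each term $|\paramfromto{u}{v}|^q\,\|\Phito{u}(\param)\|_q^q$ equals $|\paramfromto{u}{v}|^q$ — either because $\|\Phito{u}(\param)\|_q=1$, or because $\|\Phito{u}(\param)\|_q=0$ forces $\paramfromto{u}{v}=0$ (it is an outgoing edge of such a $u$) — so the recursion collapses to $\|\Phito{v}(\param)\|_q^q=\|(\paramto{v},\bias_v)\|_q^q$, which with the previous paragraph is exactly the normalization property. I expect the main obstacle to be this bookkeeping of which coordinates of $\param$ are touched by which iteration and in what order — in particular confirming that incoming weights and biases are frozen after a neuron's iteration while outgoing weights are only ever positively rescaled afterwards — since everything else (positive homogeneity, the path-lifting recursion) is routine once that is pinned down.
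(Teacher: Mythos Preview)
Your proposal is correct and follows essentially the same approach as the paper's proof: the paper also (i) tracks which coordinates are frozen after each iteration to get $\|(\paramto{v},\bias_v)\|_q\in\{0,1\}$ and $\paramfrom{v}=0$ in the degenerate case, (ii) proves $\|\Phito{v}(\param)\|_q=\|(\paramto{v},\bias_v)\|_q$ by induction via the recursion $\Phito{v}(\param)=\bigl((\Phito{u}(\param)\paramfromto{u}{v})_{u\in\ant(v)},\bias_v\bigr)$, and (iii) uses positive homogeneity and the same path-level case analysis to show that each iteration preserves $R_\param$ and $\Phi(\param)$. The only difference is order of presentation (you do invariance first, the paper does it last), and you are slightly more explicit about the later iterations only \emph{positively} rescaling outgoing edges, which is exactly the bookkeeping the paper relies on in its Step~2.
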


\begin{proof}
\textbf{Step 1.} We first prove that for every $v\in\NeuronSet\setminus(\NeuronsIn\cup\NeuronsOut)$:
\begin{equation}\label{eq:Normalization1}
    \left\|\left(\begin{array}{c}
    \paramto{v} \\
    \bias_v 
\end{array}\right)\right\|_q \in\{0,1\}.
\end{equation}
Consider $v\notin\NeuronsIn\cup\NeuronsOut$. Right after $v$ has been processed by the for loop of \Cref{alg:NormalizeAlgo} (line 2), it is clear that \Cref{eq:Normalization1} holds true for $v$. It remains to see that $\bias_v$ and $\paramto{v}$ are not modified until the end of \Cref{alg:NormalizeAlgo}. A bias can only be modified when this is the turn of the corresponding neuron, so $\bias_v$ is untouched after the iteration corresponding to $v$. And $\paramto{v}$ can only be modified when treating antecedents of $v$, but since antecedents must be before $v$ in any topological sorting, they cannot be processed after $v$. This proves that $\left\|\left(\begin{array}{c}
    \paramto{v} \\
    \bias_v 
\end{array}\right)\right\|_q \in\{0,1\}$.

\textbf{Step 2.} We now prove that if $\left\|\left(\begin{array}{c}
    \paramto{v} \\
    \bias_v 
\end{array}\right)\right\|_q =0$ then it also holds $\paramfrom{v}=0$ ({\em outgoing} edges of $v$). Indeed, $\left\|\left(\begin{array}{c}
    \paramto{v} \\
    \bias_v 
\end{array}\right)\right\|_q=0$ implies $\lambda_v=0$ in \Cref{alg:NormalizeAlgo}, which implies that $\paramfrom{v}=0$ right after rescaling (line 6) at the end of the iteration corresponding to $v$. Because the next iterations can only involve multiplication of the coordinates of $\paramfrom{v}$ by scalars, $\paramfrom{v}=0$ is also satisfied at the end of the algorithm. This proves the claim.

\textbf{Step 3.} In order to prove that $\param$ is $q$-normalized, it only remains to establish that $\|\Phito{v}(\param)\|_q = \|\left(\begin{array}{c}
     \paramto{v} \\
     {\bias}_v 
\end{array}\right)\|_q$. We prove this by induction on a topological sorting of the neurons.

A useful fact for the induction is that for every $v\notin\NeuronsIn$:
\begin{equation}\label{eq:PhiToVInduction}
        \Phito{v}(\param) = \left(\begin{array}{c}
         (\Phito{u}(\param)\paramfromto{u}{v})_{u\in\ant(v)} \\
         \bias_v
    \end{array}\right)
\end{equation}
where we recall that $\Phito{u}(\cdot)=1$ for input neurons $u$. \Cref{eq:PhiToVInduction} holds because $\Phito{v}$ is the path-lifting of $G^{\to v}$ (see \Cref{def:subgraphToV}), and the only paths in $G^{\to v}$ are $p=v$, and the paths going through antecedents of $v$ ($v$ has antecedents since it is not an input neuron).

Let's now start the induction. By definition, the first neuron $v\notin\NeuronsIn$ in a topological sorting has only input neurons as antecedents. Therefore, $\Phito{u}(\param)=1$ for every $u\in\ant(v)$. Using \Cref{eq:PhiToVInduction}, we get
\[
        \|\Phito{v}(\param)\|_q^q = |\bias_v|^q + \sum_{u\in\ant(v)}  |\paramfromto{u}{v}|^q  =\left\|\left(\begin{array}{c}
    \paramto{v} \\
    \bias_v 
\end{array}\right)\right\|_q^q
\]
This shows the claim for $v$. 

Assume the result to be true for $v\notin\NeuronsIn$ and all the neurons before $v$ in the considered topological order (in particular, for every $u\in\ant(u)$). By \Cref{eq:PhiToVInduction}, we have
\[
        \|\Phito{v}(\param)\|_q^q = |\bias_v|^q + \sum_{u\in\ant(v)}  \|\Phito{u}(\param)\|_q^q|\paramfromto{u}{v}|^q.
\]
The induction hypothesis guarantees that $c_u := \|\Phito{u}(\param)\|_q=\|\left(\begin{array}{c}
     \paramto{u} \\
     {\bias}_u 
\end{array}\right)\|_q$ for every $u\in\ant(v)$. By \Cref{eq:Normalization1}, we also have $c_u \in \{0,1\}$ for every $u \in \ant(v)$. 
When $c_u=1$, it clearly holds $\|\Phito{u}(\param)\|_q^q |\paramfromto{u}{v}|^q = |\paramfromto{u}{v}|^q$. Otherwise, $c_u=0$, hence $\paramfrom{u}=0$ as proved above, and we also obtain $\|\Phito{u}(\param)\|_q^q |\paramfromto{u}{v}|^q = |\paramfromto{u}{v}|^q$. We deduce that
    \begin{align*}
        \|\Phito{v}(\param)\|_q^q & = |\bias_v|^q + \sum_{u\in\ant(v)} |\paramfromto{u}{v}|^q =\left\|\left(\begin{array}{c}
    \paramto{v} \\
    \bias_v 
\end{array}\right)\right\|_q^q.
    \end{align*}
This concludes the induction.

\textbf{Step 4.} Each of the activation functions $\rho\in\{\id,\relu,\kpool\}$ is positive homogeneous, so classical arguments for layered fully-connected networsk (see e.g. \cite[Lemma 1 and Theorem 1]{Stock22Embedding}) are easily adapted to show that  $R_\param$ and $\Phi(\param)$ are both unchanged at each iteration of \Cref{alg:NormalizeAlgo} where $\lambda_v \neq 0$. Inspecting the iterations involving a neuron such that $\lambda_v=0$ reveals that: (i) $\Phi_p(\param)=0$ is unchanged for all paths $p$ going through this neuron; (ii) $\Phi_p(\param)$ is also unchanged for other paths since they only involve entries of $\param$ that are kept identical. As a result, $\Phi(\param)$ is also preserved at each such iteration. Finally, for each of the considered activations, $\lambda_v=0$ implies that $v(\param,x)=0$ for any $x$ (\Cref{def:Neurons}), so setting the outgoing weights $\paramfrom{v}=0$ does not change $R_\param$. This completes the proof.
\end{proof}

\section{Path-norms and products of operator norms}\label{app:PhiVsProducts}
For $0<q\leq\infty$, recall that the mixed (quasi-)norm $\|M\|_{q,\infty}$ of a matrix $M$ is:
\[
    \|M\|_{q,\infty} = \max_{\textrm{row of $M$}} \|\textrm{row}\|_q.
\]
{\em For $q=1$}, this is the operator norm of $M$ induced by the $L^\infty$-norm on the input and output spaces. As a consequence, this is the Lipschitz constant of $x\mapsto Mx+b$ with respect to these norms, and by composition, it is well-known (see, \eg, \cite{Combettes20Lipschitz}, where tighter bounds are also provided) that it can be used to derive a bound on the Lipschitz constant of layered fully-connected ReLU networks.

\begin{lemma}[see, \eg, {\cite{Combettes20Lipschitz}}]
    For a layered fully-connected ReLU network with biases of the form
\[
R_\param(x) = M_L\relu(M_{L-1}\dots \relu(M_1x + b_1) + b_{L-1}) + b_L,
\]
it holds for every $x,x'$ and {\em for $q=1$}:
\[
    \|R_{\param}(x) - R_{\param}(x')\|_\infty \leq \left(\prod_{\ell=1}^L \|M_\ell\|_{q,\infty}\right) \|x-x'\|_\infty.
\]
\end{lemma}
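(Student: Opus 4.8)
The plan is to prove the bound by writing $R_\param$ as a composition of maps, each of which is Lipschitz with respect to the relevant $L^\infty$ norms, and then multiplying the individual Lipschitz constants.

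First I would record the elementary inequality that, for any matrix $M$ and any vector $z$,
\[
\|Mz\|_\infty = \max_i \Big| \sum_j M_{ij} z_j \Big| \leq \max_i \sum_j |M_{ij}|\,|z_j| \leq \Big(\max_i \sum_j |M_{ij}|\Big)\|z\|_\infty = \|M\|_{1,\infty}\,\|z\|_\infty .
\]
Applying this to each affine layer $A_\ell : z \mapsto M_\ell z + b_\ell$ gives $\|A_\ell(z) - A_\ell(z')\|_\infty = \|M_\ell(z - z')\|_\infty \leq \|M_\ell\|_{1,\infty}\,\|z - z'\|_\infty$, so $A_\ell$ is $\|M_\ell\|_{1,\infty}$-Lipschitz from $(\R^{d_{\ell-1}}, \|\cdot\|_\infty)$ to $(\R^{d_\ell}, \|\cdot\|_\infty)$; note the bias cancels in the difference. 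Next I would observe that the coordinate-wise $\relu$ is $1$-Lipschitz for $\|\cdot\|_\infty$: from $|\relu(a) - \relu(b)| \leq |a - b|$ for all $a,b \in \R$, taking the maximum over coordinates yields $\|\relu(u) - \relu(u')\|_\infty \leq \|u - u'\|_\infty$.

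Finally I would write $R_\param = A_L \circ \relu \circ A_{L-1} \circ \cdots \circ \relu \circ A_1$ and use that the Lipschitz constant of a composition is at most the product of the Lipschitz constants of its factors. Since the $L-1$ occurrences of $\relu$ each contribute a factor $1$, this gives that $R_\param$ is $\big(\prod_{\ell=1}^L \|M_\ell\|_{1,\infty}\big)$-Lipschitz for $\|\cdot\|_\infty$, which is exactly the claimed inequality for $q = 1$.

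There is essentially no real obstacle here; the only points requiring any care are (i) checking that $\|M\|_{1,\infty}$, defined as the maximum $\ell^1$ norm of a row of $M$, indeed coincides with the operator norm induced by $\|\cdot\|_\infty$, which is precisely what the displayed computation establishes, and (ii) noticing that the biases play no role since they cancel in each difference $A_\ell(z) - A_\ell(z')$. The result and its chain-of-compositions proof are standard and already stated in \citet{Combettes20Lipschitz}, so in the paper I would simply cite it while including the short argument above for completeness.
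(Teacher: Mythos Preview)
Your proof is correct and follows essentially the same approach as the paper: both establish that each affine layer is $\|M_\ell\|_{1,\infty}$-Lipschitz for $\|\cdot\|_\infty$ (with the biases cancelling), that $\relu$ is $1$-Lipschitz, and then compose. The only cosmetic difference is that the paper writes out the $L=2$ case explicitly and says the general case follows by induction, whereas you invoke the composition-of-Lipschitz-maps principle directly.
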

\begin{proof}
    As this is well-known we only reproduce the proof for $L=2$, and the general case easily follows by an induction: (highlighting in \orange{orange} $x$ and $x'$):
    \begin{align*}
        \left\|R_{\param}(x) - R_{\param}(x')\right\|_\infty & = \left\|M_2\relu(M_1\orange{x} + b_1) + b_2 - M_2\relu(M_1\orange{x'} + b_1) + b_2\right\|_\infty \\
        & = \left\|M_2\left(\relu(M_1\orange{x} + b_1) - \relu(M_1\orange{x'} + b_1)\right)\right\|_\infty \\
        & \leq \|M_2\|_{1,\infty}\left\|\relu(M_1\orange{x} + b_1) - \relu(M_1\orange{x'} + b_1)\right\|_\infty \\
        & \leq \|M_2\|_{1,\infty}\left\|M_1\orange{x} + b_1 - M_1\orange{x'} + b_1\right\|_\infty && \textrm{$\relu$ is 1-Lipschitz}\\
        & \leq \|M_2\|_{1,\infty}\left\|M_1\left(\orange{x} -\orange{x'}\right)\right\|_\infty \\
        & \leq \|M_2\|_{1,\infty}\|M_1\|_{1,\infty}\left\|\orange{x} -\orange{x'}\right\|_\infty.
    \end{align*}
\end{proof}
Another bound on the Lipchitz constant with respect to the $L^\infty$-norm on both the input and the output is the mixed path-norm $\|\Phi(\param)\|_{1,\infty}$ (\Cref{lem:LipsX}), which is even valid for an arbitrary DAG ReLU network. Is there any relation between the mixed path-norm $\|\Phi(\param)\|_{1,\infty}$ and the product of operator norm $\prod_{\ell=1}^L \|M_\ell\|_{1,\infty}$ for layered fully-connected networks (LFCNs)? 

\textbf{Comparing the Lipschitz constants for a scalar-valued LFCN.} It is known that for a {\em scalar-valued} (i.e., with $\dout=1$) LFCN without biases, corresponding to $R_\param(x) = M_L\relu(M_{L-1}\dots \relu(M_1x))$, the bound $\|\Phi(\param)\|_{q} \leq \prod_{\ell=1}^L \|M_\ell\|_{q,\infty}$ holds \cite[Theorem 5]{Neyshabur15NormBasedControls} for every $0<q\leq \infty$, with equality if the parameters have been rescaled properly. In the specific case of a scalar-valued DAG ReLU network, it holds $\|\Phi(\param)\|_{q} = \|\Phi(\param)\|_{q,\infty}$ (\Cref{def:MixedPathNorm}). Thus, the result from \citet{Neyshabur15NormBasedControls} shows that the mixed path-norm $\|\Phi(\param)\|_{1,\infty}$ is a tighter Lipschitz bound than the product of operator norms $\prod_{\ell=1}^L \|M_\ell\|_{1,\infty}$ in the specific case of {\em scalar-valued} LFCN.

We now extend the comparison made in \cite{Neyshabur15NormBasedControls} from a {\em scalar-valued LFCN} without biases to an {\em arbitrary DAG ReLU network}. We will again derive that the mixed path-norm is a lower bound on (an extension of) the product of layers' norm for a general {\em DAG ReLU network}. Compared to \cite{Neyshabur15NormBasedControls}, this requires both our extension of the notion of path-norm to mixed path-norms, and extending the notion of product of layers' norm to deal with a DAG. 

{\em From now on, we always consider $q\in(0,\infty)$ and $r\in(0,\infty]$ if not specified otherwise.}

\textbf{From scalar-valued to vector-valued networks: from standard to mixed $L^q$-norms.} \Cref{lem:LipsX} shows that going from a {\em scalar-valued} network (LFCN or, more generally a DAG) to a {\em vector-valued} network requires going from standard $L^q$-norms $\|\Phi(\param)\|_q$, as considered in \citet{Neyshabur15NormBasedControls}, to mixed $L^q$-norms $\|\Phi(\param)\|_{q,r}$, which is, to the best of our knowledge, first introduced in \Cref{def:MixedPathNorm}.

\textbf{From LFCN to DAG: extending the product of layers' norm.} 
There is no notion a of a ``layer'' $M_\ell$ in a general DAG, hence the product $\prod_{\ell=1}^L \|M_\ell\|_{q,\infty}$ makes no sense anymore in this general context. We now introduce a quantity that generalizes this product for a general DAG. 
\begin{definition}\label{def:DAGProductOpNorms}
     Consider $q\in(0,\infty)$. For practical purposes, we denote 
     $\biascompleted_v:=\bias_v$ when $v\in\NeuronSet\setminus\NeuronsIn$, and $\biascompleted_v:=1$ if $v\in\NeuronsIn$. For every path $p\in\paths$, consider
\begin{equation}\label{eq:defPi}
        \pathProduct{p}{q}{\param} := \left(\sum_{\ell=0}^{\length{p}}\left|\biascompleted_{p_\ell}\right|^q \prod_{k=\ell+1}^{\length{p}} \|\paramto{p_k}\|_q^q\right)^{1/q},
\end{equation}

with the convention that an empty product is equal to one. For $q\in(0,\infty)$ and $r\in(0,\infty]$, define:
\begin{equation}
\Pi_{q,r}(\param) := \left\|\left(\max_{p\in\pathsto{v}} \pathProduct{p}{q}{\param}\right)_{v\in\NeuronsOut}\right\|_r.\label{eq:defPiGlobal}
\end{equation}
\end{definition}
Let us check that $\Pi_{q,\infty}$ generalizes the product of layers' norm to an arbitrary DAG.
\begin{lemma}
    Consider $q\in(0,\infty)$. In a LFCN $R_\param(x) = M_L\relu(M_{L-1}\dots \relu(M_1x))$ with $L$ layers and without biases, it holds:
    \[
    \Pi_{q,\infty}(\param) = \prod_{\ell=1}^{L} \|M_\ell\|_{q,\infty}.
\]
\end{lemma}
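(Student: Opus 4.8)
The plan is to unwind the definitions: in the bias‑free LFCN setting I expect $\pathProduct{p}{q}{\param}$ to collapse to a product of row‑norms along the path $p$, after which the maximization over paths will factorize layer by layer and reproduce $\prod_\ell \|M_\ell\|_{q,\infty}$.

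First I would reduce $\Pi_{q,\infty}$ to a single maximum over all paths: since $r=\infty$, \Cref{eq:defPiGlobal} gives $\Pi_{q,\infty}(\param)=\max_{v\in\NeuronsOut}\max_{p\in\pathsto{v}}\pathProduct{p}{q}{\param}$, and by \Cref{def:Paths} every path ending at an output neuron lies in $\paths$, so this equals $\max_{p\in\paths}\pathProduct{p}{q}{\param}$. Next I would discard the ``short'' paths. With no biases, $\biascompleted_v=\bias_v=0$ for non‑input $v$ and $\biascompleted_v=1$ for $v\in\NeuronsIn$, so in the sum of \Cref{eq:defPi} the term indexed by $\ell$ survives only when $p_\ell\in\NeuronsIn$; since an input neuron has no antecedent it can only occur as the start $p_0$ of a path, so the sum reduces to its $\ell=0$ term, and only when $p_0\in\NeuronsIn$. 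In an $L$‑layer LFCN a path from an input to an output has length exactly $L$ and hits one neuron per layer, so for such $p=p_0\to\dots\to p_L$ one gets $\pathProduct{p}{q}{\param}=\prod_{k=1}^{L}\|\paramto{p_k}\|_q$, while every other element of $\paths$ contributes $0$ and can be dropped from the maximum (consistently, $\prod_\ell\|M_\ell\|_{q,\infty}$ also vanishes whenever some $M_\ell=0$).

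Then I would identify the factors and interchange the maximum with the product. In an LFCN the incoming weight vector $\paramto{p_k}$ of a layer‑$k$ neuron $p_k$ is exactly the $p_k$‑th row of $M_k$, so $\|\paramto{p_k}\|_q=\|\text{row }p_k\text{ of }M_k\|_q$; and a full path is obtained by choosing, independently and freely (by full‑connectedness), one neuron $p_k$ in each layer $k=1,\dots,L$, the choice of $p_0$ being irrelevant to the value. Because the $k$‑th factor depends only on $p_k$ and all factors are non‑negative, the maximum of the product is the product of the maxima, which by the definition $\|M\|_{q,\infty}=\max_{\text{row}}\|\text{row}\|_q$ is exactly $\prod_{k=1}^{L}\|M_k\|_{q,\infty}$. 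Combining with the first reduction yields the claim.

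The whole argument is bookkeeping with no genuine obstacle; the one spot needing a little care is the $\max$–$\prod$ interchange, which is legitimate precisely because the per‑layer neuron choices are independent and the row‑norms are non‑negative, together with the degenerate case in which some layer matrix is zero, where one simply notes both sides equal $0$.
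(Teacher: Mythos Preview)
Your proposal is correct and follows essentially the same route as the paper: reduce $\Pi_{q,\infty}$ to $\max_{p\in\paths}\pathProduct{p}{q}{\param}$, use the bias-free assumption to kill all paths not starting at an input neuron, identify $\|\paramto{p_k}\|_q$ with a row norm of $M_k$, and then use full connectivity to factor the maximum of the product into the product of the maxima. The paper's proof is terser but the logical skeleton is identical.
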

\begin{proof}
    In a LFCN, all the neurons of two consecutive layers are connected, so $\prod_{\ell=1}^L \|M_\ell\|_{q,\infty}$ is also the maximum over all paths starting from an input neuron (and ending at an output neuron, by definition of $\paths$) of the product of $L^q$-norms: 
    \[
    \prod_{\ell=1}^L \|M_\ell\|_{q,\infty} = \max_{p\in\paths, p_0\in\NeuronsIn} \prod_{\ell=1}^{\length{p}} \|\paramto{p_\ell}\|_q. 
    \]
    Since there are no biases (corresponding to $\gamma_u=b_u=0$ for every $u\notin\NeuronsIn$) we have
\[
    \pathProduct{p}{q}{\param}= \left\{\begin{array}{cc}
        \prod_{\ell=1}^{\length{p}} \|\paramto{p_\ell}\|_q & \textrm{if $p_0\in\NeuronsIn$}, \\
        0 & \textrm{otherwise}.
    \end{array}\right.
\]
This shows the claim:
\begin{align*}
    \Pi_{q,\infty}(\param) & = \max_{v \in \NeuronsOut}  \max_{p \in \pathsto{v}} \pathProduct{p}{q}{\param} = \max_{p\in\paths} \pathProduct{p}{q}{\param} \\
    & = 
\max_{p\in\paths, p_0 \in \NeuronsIn} \prod_{\ell=1}^{\length{p}} \|\paramto{p_\ell}\|_q = \prod_{\ell=1}^{\length{p}} \|M_\ell\|_{q,\infty}.\qedhere
\end{align*}
\end{proof}

The next theorem proves that the mixed path-norm is a lower bound on this extended product of layers' norm. In particular, this shows that the Lipschitz bound given by $\|\Phi(\param)\|_{1,\infty}$ (which is computable in a single forward-pass by \Cref{thm:ComputePathNorm}) is always at least as good as the one given by $\prod_{\ell=1}^L \|M_\ell\|_{1,\infty}$ for a LFCN without biases. This extends as claimed the result from \cite{Neyshabur15NormBasedControls} to the vector-valued case.
\begin{theorem}\label{thm:Phi=MinRescalings}
    Consider $q\in(0,\infty)$ and $r\in(0,\infty]$. For every DAG ReLU network (\Cref{def:NN}) and every parameters $\param$:
\[        \|\Phi(\param)\|_{q,r} \leq \Pi_{q,r}(\param).
\]

    If $\param$ is $q$-normalized (\Cref{def:NormalizedParameters}) then:
    \[
    \|\Phi(\param)\|_{q,r} = \Pi_{q,r}(\param) = \left\|\left(\left\|\left(\begin{array}{c}
    \paramto{v} \\
    \bias_v 
\end{array}\right)\right\|_q\right)_{v\in\NeuronsOut}\right\|_r.
    \]
\end{theorem}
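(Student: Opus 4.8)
The plan is to reduce the statement to a pair of scalar recursions along a topological ordering of the neurons. Write $S_v := \|\Phito{v}(\param)\|_q^q$ and $T_v := \max_{p\in\pathsto{v}}\pathProduct{p}{q}{\param}^q$ for each neuron $v$. Since $\|\cdot\|_r$ is non-decreasing in each coordinate of a nonnegative vector and $t\mapsto t^{1/q}$ is non-decreasing on $[0,\infty)$, it suffices to prove: (i) $S_v\le T_v$ for every $v\in\NeuronsOut$, which yields $\|\Phi(\param)\|_{q,r}\le\Pi_{q,r}(\param)$; and (ii) when $\param$ is $q$-normalized, $S_v=T_v=\|(\paramto{v},\bias_v)\|_q^q$ for every $v\in\NeuronsOut$, which yields the chain of equalities (here, as usual, I assume $\NeuronsIn\cap\NeuronsOut=\emptyset$ so that $(\paramto{v},\bias_v)$ is meaningful for output neurons).

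First I would record two recursions. From \eqref{eq:PhiToVInduction} (already established in the proof of \Cref{lem:AlgNormalization}), $S_v=1$ for $v\in\NeuronsIn$ and $S_v = |\bias_v|^q + \sum_{u\in\ant(v)} S_u\,|\paramfromto{u}{v}|^q$ for $v\notin\NeuronsIn$. For the $T$-side I need a concatenation identity for $\pathProduct{\cdot}{q}{\param}$: expanding definition \eqref{eq:defPi} for a path $\tilde p = p\to v$ with $p\in\pathsto{u}$, $u\in\ant(v)$, the index $\ell=\length{\tilde p}$ contributes $|\biascompleted_v|^q=|\bias_v|^q$ while every $\ell\le\length{p}$ picks up exactly one extra factor $\|\paramto{v}\|_q^q$, so that $\pathProduct{\tilde p}{q}{\param}^q = |\bias_v|^q + \|\paramto{v}\|_q^q\,\pathProduct{p}{q}{\param}^q$. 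As the only remaining path of $\pathsto{v}$ is the singleton $p=v$ with value $|\bias_v|^q$, maximizing gives $T_v=1$ for $v\in\NeuronsIn$ and $T_v = |\bias_v|^q + \|\paramto{v}\|_q^q\,\max_{u\in\ant(v)}T_u$ otherwise. Carefully deriving this concatenation identity from \eqref{eq:defPi}, tracking the empty-product conventions, is the one step I expect to take actual work; after that everything is bookkeeping. Claim (i) is then immediate by induction on the topological ordering: substituting $S_u\le T_u$ into the $S$-recursion and using $\sum_{u\in\ant(v)} S_u|\paramfromto{u}{v}|^q \le (\max_{u\in\ant(v)}T_u)\,\|\paramto{v}\|_q^q$ reproduces the $T$-recursion, so $S_v\le T_v$; taking $\ell^r$-norms over $\NeuronsOut$ finishes.

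For claim (ii) I would use both clauses of \Cref{def:NormalizedParameters}. Every antecedent $u$ of a neuron has a successor, hence is an input or a hidden neuron, so $S_u\in\{0,1\}$ — equal to $1$ for input neurons and, for hidden neurons, equal to $\|(\paramto{u},\bias_u)\|_q^q\in\{0,1\}$ with $S_u=0\Rightarrow\paramfrom{u}=0$. Hence, term by term in the $S$-recursion, $S_u|\paramfromto{u}{v}|^q=|\paramfromto{u}{v}|^q$ (when $S_u=0$ the weight $\paramfromto{u}{v}$ vanishes), so $S_v = |\bias_v|^q+\|\paramto{v}\|_q^q = \|(\paramto{v},\bias_v)\|_q^q$ for every $v\notin\NeuronsIn$, in particular for output neurons; this already gives $\|\Phi(\param)\|_{q,r}=\|(\|(\paramto{v},\bias_v)\|_q)_{v\in\NeuronsOut}\|_r$. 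Finally, by induction on the topological ordering, $\max_{u\in\ant(v)}T_u\le\max_{u\in\ant(v)}S_u\le 1$ (antecedents being input or hidden), so the $T$-recursion gives $T_v\le|\bias_v|^q+\|\paramto{v}\|_q^q=S_v$; combined with (i) this forces $T_v=S_v$, hence $\Pi_{q,r}(\param)=\|\Phi(\param)\|_{q,r}$ and the full chain of equalities follows.

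I do not foresee a genuine obstacle beyond the concatenation identity for $\pathProduct{\cdot}{q}{\param}$; the remaining care points are invoking both halves of $q$-normalization correctly and the structural fact that antecedents of a neuron are never output neurons. The whole argument is uniform in $r\in(0,\infty]$, as only coordinatewise monotonicity of $\|\cdot\|_r$ is used.
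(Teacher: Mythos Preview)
Your proposal is correct and follows essentially the same approach as the paper: induction on a topological ordering, the concatenation identity $\pathProduct{p\to v}{q}{\param}^q=|\bias_v|^q+\|\paramto{v}\|_q^q\pathProduct{p}{q}{\param}^q$, and using both clauses of $q$-normalization (that $\|\Phito{u}(\param)\|_q\in\{0,1\}$ for hidden $u$, with $\paramfrom{u}=0$ when it vanishes). The only organizational difference is that the paper proves equality for normalized $\param$ within the same induction by showing $\sum_{u\in\ant(v)}S_u|\paramfromto{u}{v}|^q=\|\paramto{v}\|_q^q\max_{u\in\ant(v)}S_u$, whereas you first compute $S_v=\|(\paramto{v},\bias_v)\|_q^q$ and then run a separate induction for $T_v\le S_v$; both routes use the same ingredients.
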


\begin{figure}[ht]
    \centering
    \includegraphics[width=\textwidth]{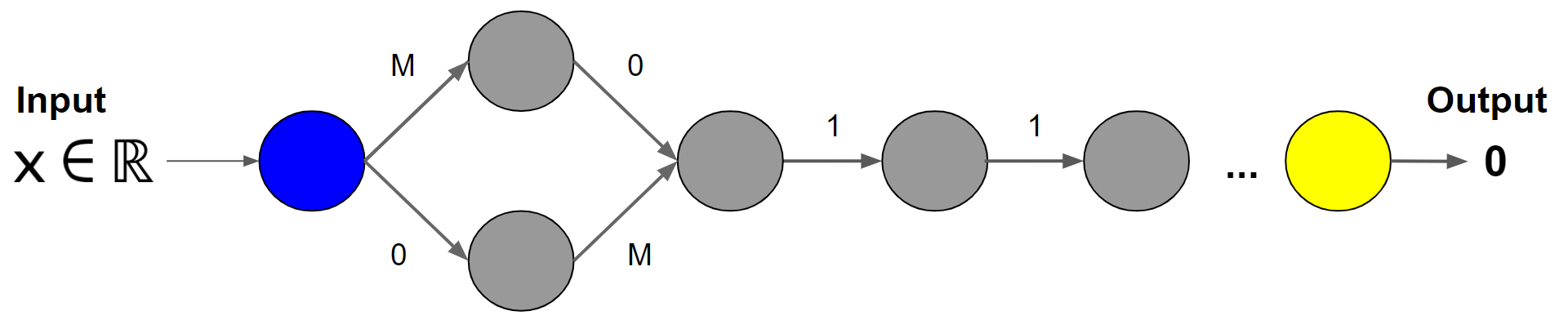}
    \caption{A network which path-norm is zero while the product of operator norms scales as $M^2$.}
    \label{fig:PhiNormVsProdNorm}
\end{figure}

\begin{proof}[Proof of \Cref{thm:Phi=MinRescalings}]
First, when $\param$ is $q$-normalized, by \Cref{def:NormalizedParameters} and \Cref{eq:MixedPathNorm}:
\[
\|\Phi(\param)\|_{q,r} = \left\|\left(\left\|\left(\begin{array}{c}
    \paramto{v} \\
    \bias_v 
\end{array}\right)\right\|_q\right)_{v\in\NeuronsOut}\right\|_r.
\]
Let us prove by induction on a topological sorting of the neurons that for every $v\in\NeuronSet$:
\begin{equation}\label{eq:Phito<=Pi}
    \|\Phito{v}(\param)\|_q\leq \max_{p\in\pathsto{v}} \pathProduct{p}{q}{\param}
\end{equation}
with equality if $\param$ is $q$-normalized. As a direct consequence of~\Cref{eq:MixedPathNorm,eq:defPiGlobal}, this will prove that $\|\Phi(\param)\|_{q,r} \leq \Pi_{q,r}(\param)$, with equality when $\param$ is $q$-normalized, yielding all the claimed results.

We start with $v\in\NeuronsIn$ since input neurons are the first to appear in a topological sorting. In this case, the only path in $\pathsto{v}$ is $p=v$, for which it holds $\pathProduct{p}{q}{\param}=|\gamma_{p_0}|=|\gamma_{v}|=1$ by \Cref{def:DAGProductOpNorms}. Moreover, we also have $\Phito{v}(\param)=1$ (\Cref{def:PhiActivation}). This proves \Cref{eq:Phito<=Pi} and the case of equality for input neurons even if $\param$ is not normalized.

Now, consider $v\notin\NeuronsIn$ and assume \Cref{eq:Phito<=Pi}, and the case of equality for $q$-normalized parameters, to be true for every antecedent of $v$. In this case, we have $\Phito{v}(\param) = \left(\begin{array}{c}
         (\Phito{u}(\param)\paramfromto{u}{v})_{u\in\ant(v)} \\
         \bias_v 
    \end{array}\right)$
    so it holds
    \[
    \|\Phito{v}(\param)\|_q^q = |\bias_v|^q + \sum_{u\in\ant(v)} \|\Phito{u}(\param)\|_q^q|\paramfromto{u}{v}|^q \leq |\bias_v|^q + \|\paramto{v}\|_q^q\max_{u\in\ant(v)}\|\Phito{u}(\param)\|_q^q.
    \]
Using the induction hypothesis on every $u\in\ant(v)$ yields:
\begin{align*}
   \|\Phito{v}(\param)\|_q^q & \leq |\bias_v|^q + \|\paramto{v}\|_q^q\max_{u\in\ant(v)}\max_{p\in\pathsto{u}} (\pathProduct{p}{q}{\param})^{q}
\end{align*}
Consider $u\in\ant(v)$ and $p\in\pathsto{u}$, and denote by $\tilde{p}=p\to v$ the path $p$ concatenated with the edge $u\to v$. By \Cref{def:DAGProductOpNorms}, we have (highlighting in \orange{orange} the important changes)
\begin{align*}
   & |\bias_v|^q + \|\paramto{v}\|_q^q(\pathProduct{p}{q}{\param})^q  \\
   & = \underbrace{|\bias_v|^q}_{=|\orange{\gamma}_v|^q\textrm{since $v\notin\NeuronsIn$}} + \|\paramto{v}\|_q^q \sum_{\ell=0}^{\length{p}}\left|\biascompleted_{p_\ell}\right|^q \prod_{k=\ell+1}^{\length{p}} \|\paramto{p_k}\|_q^q \\
   & = |\orange{\gamma}_{\orange{\pathend{\tilde{p}}}}|^q + \|\paramto{\orange{\pathend{\tilde{p}}}}\|_q^q \sum_{\ell=0}^{\orange{\length{\tilde{p}}-1}}\left|\biascompleted_{\orange{\tilde{p}}_\ell}\right|^q \prod_{k=\ell+1}^{\orange{\length{\tilde{p}}-1}} \|\paramto{\orange{\tilde{p}}_k}\|_q^q
   & = (\pathProduct{\orange{\tilde{p}}}{q}{\param})^q
\end{align*}
We deduce that
\[
\|\Phito{v}(\param)\|_q^q \leq \max_{u\in\ant(v)}\max_{p\in\pathsto{u}} 
(\pathProduct{p\to u}{q}{\param})^q = \max_{\tilde{p}\in\pathsto{v}} (\pathProduct{\tilde{p}}{q}{\param})^q.
\]
This proves \Cref{eq:Phito<=Pi} for $v$. To conclude the induction, it remains to treat the equality case for $v$ assuming that $\param$ is $q$-normalized. Since $v \notin \NeuronsIn$, $\ant(v) \neq \emptyset$, and since each neuron $u \in \ant(v)$ cannot be an output neuron, the fact that $\param$ is $q$-normalized implies by \Cref{def:NormalizedParameters} that $\|\Phito{u}(\param)\|_q\in\{0,1\}$, with $\paramfrom{u}=0$ as soon as $\|\Phito{u}(\param)\|_q=0$. This implies
\begin{align*}
    \|\Phito{u}(\param)\|_q^q|\paramfromto{u}{v}|^q &= |\paramfromto{u}{v}|^q
\end{align*}
As a result 
\begin{align*}
\sum_{u\in\ant(v)} \|\Phito{u}(\param)\|_q^q|\paramfromto{u}{v}|^q = \sum_{u\in\ant(v)} |\paramfromto{u}{v}|^q = \|\paramto{v}\|_q^q
= \|\paramto{v}\|_q^q \max_{u\in\ant(v)} \|\Phito{u}(\param)\|_q^q.
\end{align*}
By the induction hypothesis, we also have
\[
     \max_{u\in\ant(v)} \|\Phito{u}(\param)\|_q^q = \max_{u\in\ant(v)} \max_{p\in\pathsto{u}} (\pathProduct{p}{q}{\param})^q.
\]
Putting everything together yields
\begin{align*}
   \|\Phito{v}(\param)\|_q^q & = |\bias_v|^q + \sum_{u\in\ant(v)} \|\Phito{u}(\param)\|_q^q|\paramfromto{u}{v}|^q \\
   & = |\bias_v|^q + \|\paramto{v}\|_q^q \max_{u\in\ant(v)} \max_{p\in\pathsto{u}} (\pathProduct{p}{q}{\param})^q \\
   & = \max_{p\in\pathsto{v}} \pathProduct{p}{q}{\param}^q.
\end{align*}
This shows the case of equality and concludes the induction.
\end{proof}

\section{Relevant (and apparently new) contraction lemmas}\label{app:Contraction}
The main result is \Cref{lem:ContractionLemma}.
\begin{lemma}\label{lem:ContractionLemma}
    Consider finite sets $I,W,Z$, and for each $z\in Z$, consider a set $T^z\subset(\R^W)^I$. We denote $t=(t_i)_{i \in I}
    \in T^z$ with $t_i = (t_{i,w})_{w \in W}\in \R^{W}$. Consider functions $f_{i,z}:\R^W\to\R$ and a finite family $\eps=(\eps_j)_{j\in J}$ of independent identically distributed Rademacher variables, with the index set $J$ that will be clear from the context. Finally, consider a convex and non-decreasing function $\gPeeling:\R\to\R$. Assume that at least one of the following setting holds.
    
    \textbf{Setting 1: scalar input case.}  $|W|=1$ and for every $i\in I$ and $z\in Z$, $f_{i,z}$ is $1$-Lipschitz with $f_{i,z}(0)=0$.

    \textbf{Setting 2: $*$-max-pooling case.} For every $i\in I$ and $z\in Z$, there is $k_{i,z}\in\Ns$ such that for every $t\in T^z$, $f_{i,z}(t)=t_{(k_{i,z})}$ is the $k_{i,z}$-th largest coordinate of $t$.

    Then we have:
    \begin{equation}\label{eq:ContractionLemma}
        \begin{aligned}
            \E \max_{z\in Z} \sup_{t\in T^z} \gPeeling\left(\sum_{i\in I} \eps_{i,z} f_{i,z}(t_i)\right) \leq \E \max_{z\in Z} \sup_{t\in T^z} \gPeeling\left(\sum_{i\in I, w\in W}
            \eps_{i,w,z} t_{i,w}\right).
        \end{aligned}
    \end{equation}
\end{lemma}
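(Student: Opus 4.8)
The plan is to peel off the pairs $(i,z)\in I\times Z$ one at a time, reducing each peeling step to a ``single-slot'' contraction inequality that I would then prove separately in the two settings. Concretely, I would enumerate $I\times Z$ as $(i_1,z_1),\dots,(i_N,z_N)$ and introduce the hybrid quantity $H^{(m)}$ ($0\le m\le N$) obtained from the left-hand side of \eqref{eq:ContractionLemma} by replacing, for every $\ell\le m$, the summand $\eps_{i_\ell,z_\ell}f_{i_\ell,z_\ell}(t_{i_\ell})$ with $\sum_{w\in W}\eps_{i_\ell,w,z_\ell}t_{i_\ell,w}$ (all Rademacher variables living on one common probability space). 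Since $\E H^{(0)}$ and $\E H^{(N)}$ are the two sides of \eqref{eq:ContractionLemma}, it suffices to prove $\E H^{(m-1)}\le\E H^{(m)}$ for each $m$. For a fixed step I would condition on all Rademacher variables except $\eps_{i_m,z_m}$ and $(\eps_{i_m,w,z_m})_{w\in W}$: then every block $z\ne z_m$ is frozen, so $\max_{z\ne z_m}\sup_{t\in T^z}\gPeeling(\cdots)$ equals a constant $c$ that is the same in $H^{(m-1)}$ and $H^{(m)}$, and the $z_m$-block equals $\sup_{t\in T^{z_m}}\gPeeling(h(t)+[\text{slot }i_m])$ with $h(t)$ the frozen contribution of the indices $i\ne i_m$. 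Because a finite-valued convex function on $\R$ is continuous and $\gPeeling$ is non-decreasing, $\max(c,\sup_t\gPeeling(F(t)))=\Psi(\sup_t F(t))$ with $\Psi:=\max(c,\gPeeling(\cdot))$ still convex and non-decreasing. So the whole problem reduces (with $T=T^{z_m}$, $a(t)=t_{i_m}$, $f=f_{i_m,z_m}$) to proving, for every convex non-decreasing $\Psi:\R\to\R$,
\begin{equation}\label{eq:localContraction}
\E_{\eps}\Psi\Big(\sup_{t\in T}\big[h(t)+\eps f(a(t))\big]\Big)\ \le\ \E_{(\eps_w)_{w\in W}}\Psi\Big(\sup_{t\in T}\big[h(t)+\textstyle\sum_{w\in W}\eps_w\, a(t)_w\big]\Big).
\end{equation}

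In Setting~1 ($|W|=1$, $f$ $1$-Lipschitz, $f(0)=0$; identify $a(t)\in\R^W$ with a scalar), \eqref{eq:localContraction} is the classical one-sided contraction. I would set $A=\sup_t[h(t)+f(a(t))]$, $B=\sup_t[h(t)-f(a(t))]$, $A'=\sup_t[h(t)+a(t)]$, $B'=\sup_t[h(t)-a(t)]$, check $\max(A,B)\le\max(A',B')$ from $|f(s)|\le|s|$ and $A+B\le A'+B'$ by comparing near-optimizers via $|f(s)-f(s')|\le|s-s'|$, and invoke the elementary fact that for convex non-decreasing $\Psi$ the two relations $\max(A,B)\le\max(A',B')$ and $A+B\le A'+B'$ imply $\Psi(A)+\Psi(B)\le\Psi(A')+\Psi(B')$; expanding $\E_\eps$ over $\{\pm1\}$ then gives \eqref{eq:localContraction}.

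Setting~2 ($f(a)=a_{(k)}$, $1\le k\le|W|$) is the crux. Expanding $\E_\eps$, the left side of \eqref{eq:localContraction} is $\tfrac12\Psi(P)+\tfrac12\Psi(Q)$ with $P=\sup_t[h(t)+a(t)_{(k)}]$ and $Q=\sup_t[h(t)-a(t)_{(k)}]$. Using the min--max representations $a_{(k)}=\max_{|S|=k}\min_{w\in S}a_w$ and $-a_{(k)}=\max_{|S|=|W|-k+1}\min_{w\in S}(-a_w)$, I would pull $\sup_t$ through the outer $\max$ and bound $\sup\min\le\min\sup$ to obtain $P\le(P_\bullet)_{(k)}$ and $Q\le(Q_\bullet)_{(|W|-k+1)}$, where $P_w:=\sup_t[h(t)+a(t)_w]$, $Q_w:=\sup_t[h(t)-a(t)_w]$, and $(x_\bullet)_{(j)}$ denotes the $j$-th largest entry of $(x_w)_{w\in W}$. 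Since $\Psi$ is non-decreasing it commutes with $\max$ and $\min$, so $\Psi(P)\le(\Psi(P_\bullet))_{(k)}$ and $\Psi(Q)\le(\Psi(Q_\bullet))_{(|W|-k+1)}$. Next, the identity $a_w=\E_{\sigma:\sigma_w=+1}\langle\sigma,a\rangle$ (average over $\sigma\in\{\pm1\}^W$ with the $w$-th sign fixed to $+1$) gives $P_w\le\E_{\sigma:\sigma_w=+1}R_\sigma$ for $R_\sigma:=\sup_t[h(t)+\langle\sigma,a(t)\rangle]$, so Jensen plus monotonicity yield $\Psi(P_w)\le\E_{\sigma:\sigma_w=+1}\Psi(R_\sigma)=:\bar R^+_w$, and symmetrically $\Psi(Q_w)\le\E_{\sigma:\sigma_w=-1}\Psi(R_\sigma)=:\bar R^-_w$. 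Finally, for every $w$ one has $\bar R^+_w+\bar R^-_w=2\E_\sigma\Psi(R_\sigma)=:2G$, hence the $(|W|-k+1)$-th largest of $(\bar R^-_\bullet)$ equals $2G$ minus the $k$-th largest of $(\bar R^+_\bullet)$, and monotonicity of order statistics gives $\Psi(P)+\Psi(Q)\le(\bar R^+_\bullet)_{(k)}+(\bar R^-_\bullet)_{(|W|-k+1)}=2G=2\E_\sigma\Psi(R_\sigma)$; since $\E_\sigma\Psi(R_\sigma)$ is precisely the right side of \eqref{eq:localContraction}, this closes \eqref{eq:localContraction}, and with it the lemma.

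The hard part will be this last step: dominating the single-sign ``$k$-th largest'' slot by the $|W|$-sign linear slot requires both the order-statistic min--max representations (to move the supremum over $t$ inside, at the cost of a one-sided inequality) and the exact compensation $\bar R^+_w+\bar R^-_w=2\E_\sigma\Psi(R_\sigma)$ between the $\eps=+1$ and $\eps=-1$ halves; a naive coordinate-by-coordinate estimate both loses a constant factor and points the wrong way. A recurring minor technicality is that the suprema need not be attained, so I would replace optimizers by $\delta$-near-optimizers and let $\delta\to0$ (and the degenerate unbounded cases make both sides infinite, so they are immediate).
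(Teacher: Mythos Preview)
Your proof is correct. The overall architecture matches the paper's: reduce to a single slot by conditioning and iteration, then prove a one-slot contraction inequality separately in the two settings. Your reduction is organized a bit differently (you peel pairs $(i,z)$ directly and absorb the frozen $z\neq z_m$ blocks into $\Psi=\max(c,\gPeeling(\cdot))$, whereas the paper first reduces $|Z|$ to $1$ via an abstract ``$\E\max(A_z,c)\le\E\max(B_z,c)$ for all $c$ implies $\E\max_z A_z\le\E\max_z B_z$'' lemma, and only then reduces $|I|$ to $1$), but the two reductions are equivalent and equally clean.

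The genuine difference is in the $k$-max-pooling single-slot step. The paper's argument is combinatorial: for each pair of near-optimizers $s,t$ it picks one pivot coordinate $w(s,t)$ in the nonempty intersection $\{w:t_{1,w}\ge t_{1,(k)}\}\cap\{w:s_{1,w}\le s_{1,(k)}\}$ (sizes $\ge k$ and $\ge |W|-k+1$), uses $f(t_1)\le t_{1,w(s,t)}$ and $-f(s_1)\le -s_{1,w(s,t)}$, and then inserts the remaining Rademachers by Jensen. Your argument is algebraic: you replace the pivot by the min--max representation of order statistics, push the sup past the outer max (and lose nothing), bound each $P_w$ by $\E_{\sigma:\sigma_w=+1}R_\sigma$ via the centering identity $a_w=\E_{\sigma:\sigma_w=+1}\langle\sigma,a\rangle$ and Jensen, and then close with the exact compensation $\bar R^+_w+\bar R^-_w=2\E_\sigma\Psi(R_\sigma)$, which makes the $(k)$-th and $(|W|-k+1)$-th order statistics cancel. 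Both routes exploit the same underlying fact (one Rademacher sign can be traded for $|W|$ signs at no cost via Jensen), but the paper's pivot argument is shorter and avoids the order-statistic bookkeeping, while yours is more systematic and makes the role of convexity and the $\pm$ symmetry very explicit.
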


The scalar input case is a simple application of Theorem 4.12 in \citet{LedouxTalagrand91ProbaBanachSpaces}. Indeed, for $z\in Z$ and $t\in T^z$, define $s(z,t)\in \R^{I\times Z}$ to be the matrix with coordinates $(i,v)\in I\times Z$ given by $[s(z,t)]_{i,v}=t_i$ if $v=z$, $0$ otherwise. Define also $f_{i,v}=f_i$. Since $f_{i,v}(0)=0$, it holds: 
\[
\sum_{i\in I} \eps_{i,z} f_i(t_i) = \sum_{i\in I, v\in Z} \eps_{i,v} f_{i,v}([s(z,t)]_{i,v}).
\]
If $S:=\{s(z,t), z\in Z, t\in T^z\}$ and $J:=I\times Z$ then the result claimed in the scalar case reads
\[
\E \sup_{s\in S} \gPeeling\left(\sum_{j\in J} \eps_{j} f_{j}(s_j)\right) \leq \E \sup_{s\in S} \gPeeling\left(\sum_{j\in J}
            \eps_{j} s_j\right).
\]
The latter is true by Theorem 4.12 of \citet{LedouxTalagrand91ProbaBanachSpaces}. However, we present an additional proof below, which we employ to establish the new scenario involving $*$-max-pooling. This alternative proof closely follows the structure of the proof outlined in Theorem 4.12 of \citet{LedouxTalagrand91ProbaBanachSpaces}: the beginning of the proof is the same for the scalar case and the $*$-max-pooling case, and then the arguments become specific to each case.

Note that Theorem 4.12 of \citet{LedouxTalagrand91ProbaBanachSpaces} does not apply for the $*$-max-pooling case because the $t_i$'s are now vectors. The most related result we could find is a vector-valued contraction inequality \citep{Maurer16VectorContraction} that is known in the specific case where $|Z|=1$, $\gPeeling$ is the identity, and for arbitrary $1$-Lipschitz functions $f_{i,z}$ such that $f_{i,z}(0)=0$ (with a different proof, and with a factor $\sqrt{2}$ on the right-hand side). Here, the vector-valued case we are interested in is $f_{i,z}=k_{i,z}\textrm{-}\mathtt{pool}$ and $\gPeeling=\exp$, which is covered by \Cref{lem:ContractionLemma}. We could not find it stated elsewhere.

In the proof of \Cref{lem:ContractionLemma}, we reduce to the simpler case where $|Z|=1$ and $|I|=1$ that corresponds to the next lemma. Again, the scalar input case is given by \citet[Equation (4.20)]{LedouxTalagrand91ProbaBanachSpaces} while the $*$-max-pooling case is apparently new.

\begin{lemma}\label{lem:InductionStepContraction} Consider a finite set $W$, a set $T$ of elements $t=(t_1,t_2)\in\R^W\times \R$ and a function $f:\R^W\to \R$. Consider also a convex non-decreasing function $F:\R\to\R$ and a family of iid Rademacher variables $(\eps_j)_{j\in J}$ where $J$ will be clear from the context. Assume that we are in one of the two following situations.

\textbf{Scalar input case.} $f$ is $1$-Lipschitz, satisfies $f(0)=0$ and has a scalar input ($|W|=1$).

\textbf{$*$-max-pooling case.} There is $k\in\Ns$ such that $f$ computes the $k$-th largest coordinate of its input.

Denoting $t_1=(t_{1,w})_{w\in W}$, it holds:
\[
    \E \sup_{t\in T} F\left(\eps_1 f(t_1) + t_2\right) \leq \E \sup_{t\in T} F\left(\sum_{w} \eps_{1,w} t_{1,w} + t_2\right).
\]
\end{lemma}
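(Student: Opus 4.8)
The plan is to follow the induction step in the proof of the contraction principle of \citet[Theorem~4.12 and Eq.~(4.20)]{LedouxTalagrand91ProbaBanachSpaces}, treating the scalar input case essentially as there and developing a new variant for the $*$-max-pooling case. The reduction of \Cref{lem:ContractionLemma} to this lemma (peeling the indices of $I$ and $Z$ off one at a time) is then routine, so the content is concentrated here.

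\textbf{Step 1 (expand the shared Rademacher, reduce to two elements).} The variable $\eps_1$ enters the left-hand side only through $\eps_1 f(t_1)$ and is uniform on $\{-1,+1\}$, so the left-hand side equals $\tfrac12\bigl(\sup_{u\in T}F(f(u_1)+u_2)+\sup_{u\in T}F(-f(u_1)+u_2)\bigr)$. Picking $s,t\in T$ that attain these two suprema up to an arbitrarily small error, and using that the right-hand side is bounded below by $\E_{\eps}\max\{F(\sum_{w}\eps_{1,w}s_{1,w}+s_2),\,F(\sum_{w}\eps_{1,w}t_{1,w}+t_2)\}$, it suffices to prove, for all $s,t\in T$,
\[
F(f(s_1)+s_2)+F(-f(t_1)+t_2)\;\le\;2\,\E_{\eps}\max\Bigl\{F\bigl(\textstyle\sum_{w}\eps_{1,w}s_{1,w}+s_2\bigr),\ F\bigl(\textstyle\sum_{w}\eps_{1,w}t_{1,w}+t_2\bigr)\Bigr\}.
\]

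\textbf{Step 2 (scalar input case).} When $|W|=1$ we identify $u_1$ with a scalar, and the right-hand side above is exactly $\max\{F(s_1+s_2),F(t_1+t_2)\}+\max\{F(-s_1+s_2),F(-t_1+t_2)\}$. Hence it is enough to establish the classical four-point inequality: for convex non-decreasing $F$, $1$-Lipschitz $f$ with $f(0)=0$, and reals $u,v,\alpha,\beta$,
\[
F(f(u)+\alpha)+F(-f(v)+\beta)\;\le\;\max\bigl\{F(u+\alpha)+F(-v+\beta),\ F(v+\beta)+F(-u+\alpha)\bigr\}.
\]
I would prove this by setting $a:=f(u)$, $b:=f(v)$, noting $|a|\le|u|$, $|b|\le|v|$ and $|a-b|\le|u-v|$ (Lipschitzness together with $f(0)=0$), and then a short case distinction on the signs of $u$ and $v$ in which convexity of $F$ is used to slide the ``inner'' point $(a+\alpha,-b+\beta)$ to one of the two ``extreme'' points $(u+\alpha,-v+\beta)$ or $(-u+\alpha,v+\beta)$ while the monotonicity of $F$ absorbs the resulting change of level. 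This is exactly the argument behind Eq.~(4.20) of \citet{LedouxTalagrand91ProbaBanachSpaces}.

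\textbf{Step 3 ($*$-max-pooling case, the main obstacle).} Now $f(t_1)=(t_1)_{(k)}$ and the right-hand side of Step~1 carries $|W|$ Rademachers that are correlated through the sum $\sum_w\eps_{1,w}t_{1,w}$, so the max cannot be split term by term without a prohibitive loss. The plan is instead to peel the coordinates $w\in W$ off the Rademacher sum one at a time, each time decreasing $|W|$ by one until the scalar statement of Step~2 applies as a base case. Each peeling step would rely on three structural facts about the $k$-th largest coordinate: it always coincides with one of the coordinates $t_{1,w}$; it is coordinate-wise monotone and $1$-Lipschitz for $\|\cdot\|_\infty$, so flipping the sign of, or peeling off, a single coordinate moves it in a controlled way; and it admits the min--max representation $(t_1)_{(k)}=\max_{|A|=k}\min_{w\in A}t_{1,w}$, which commutes with $F(\cdot+t_2)$ since $F$ is non-decreasing and therefore reduces the order-statistic comparison to comparisons of single coordinates, to which a convexity inequality of the Step~2 type applies. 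Carrying this through over all $w\in W$ converts the single-Rademacher left-hand side into the full Rademacher average on the right. The delicate point—hence the main obstacle—is that the identity of the coordinate realizing the $k$-th largest value can change as coordinates are successively peeled and flipped, which is why I route the argument through the min--max representation and a coordinate-by-coordinate induction rather than a direct substitution.
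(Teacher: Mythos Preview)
Your Steps~1 and~2 are fine and match the paper, which likewise defers the scalar case to \citet[Eq.~(4.20)]{LedouxTalagrand91ProbaBanachSpaces}. The gap is Step~3: what you wrote is a plan, not an argument. You acknowledge the ``main obstacle'' (the realizing coordinate changes as signs are flipped) and propose to route around it via the min--max representation $(t_1)_{(k)}=\max_{|A|=k}\min_{w\in A}t_{1,w}$ together with a coordinate-by-coordinate induction, but you give neither an induction hypothesis nor an inductive step, and it is not clear how the min--max form interacts usefully with the Rademacher sum on the right (for $-f(t_1)$ the max and min swap roles, and $F$ does not commute with $\min$). As written, the obstacle is named but not removed.

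The paper's argument for the $*$-max-pooling case is a single step and needs neither an induction on $|W|$ nor the min--max representation. After your Step~1 expansion, fix $s,t\in T$. By counting, the set $\{w:\,t_{1,w}\ge t_{1,(k)}\}$ has at least $k$ elements and $\{w:\,s_{1,w}\le s_{1,(k)}\}$ has at least $|W|-k+1$, so their intersection is nonempty; pick any $w^\star=w^\star(s,t)$ in it. Monotonicity of $F$ then gives
\[
F\bigl(f(t_1)+t_2\bigr)\le F\bigl(t_{1,w^\star}+t_2\bigr),\qquad
F\bigl(-f(s_1)+s_2\bigr)\le F\bigl(-s_{1,w^\star}+s_2\bigr).
\]
Since the remaining Rademachers $(\eps_{1,w})_{w\ne w^\star}$ are centered and $F$ is convex, Jensen lets you insert $\sum_{w\ne w^\star}\eps_{1,w}t_{1,w}$ (resp.\ $\sum_{w\ne w^\star}\eps_{1,w}s_{1,w}$) inside $F$ for free. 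Bounding each resulting term by $\sup_{r\in T}$ and reading the two halves as the expectation over $\eps_{1,w^\star}\in\{-1,+1\}$ yields exactly the right-hand side of the lemma. The entire $*$-max-pooling case is thus: one pigeonhole choice of coordinate, monotonicity of $F$, and Jensen. None of your three structural facts are needed, and the ``changing realizer'' obstacle never arises because $w^\star$ is chosen once, jointly for $s$ and $t$.
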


The proof of \Cref{lem:InductionStepContraction} is postponed. We now prove \Cref{lem:ContractionLemma}.
\begin{proof}[Proof of \Cref{lem:ContractionLemma}]
    First, because of the Lipschitz assumptions on the $f_i$'s and the convexity of $\gPeeling$, everything is measurable and the expectations are well defined.

    We prove the result by reducing to the simpler case of \Cref{lem:InductionStepContraction}. This is inspired by the reduction done in the proof of \citet[Theorem 4.12]{LedouxTalagrand91ProbaBanachSpaces} in the special case of scalar $t_i$'s ($|W|=1$).

    \textbf{Reduce to the case $|Z|=1$ by conditioning and iteration.} For $z\in Z$, define 
    \begin{align*}
        A_z & := \sup_{t\in T^z} \gPeeling\left(\sum_{i\in I} \eps_{i,z} f_{i,z}(t_i)\right), \\
        B_z & := \sup_{t\in T^z} \gPeeling\left(\sum_{i\in I, w\in W} \eps_{i,w,z} t_{i,w}\right).
    \end{align*}
    \Cref{lem:ReduceZ=1} applies since these random variables are independent. Thus, it is enough to prove that for every $c\in[-\infty,\infty)$:
    \[
        \E \max(A_z, c)  \leq \E \max(B_z, c).
    \]
    Define $F(x)=\max(\gPeeling(x),c)$. This can be rewritten as (inverting the supremum and the maximum)
    \begin{equation}\label{eq:ReducedV=1}
        \E \sup_{t\in T^z} F\left(\sum_{i\in I} \eps_{i,z} f_{i,z}(t_i)\right) \leq \E \sup_{t\in T^z} F\left(\sum_{i\in I, w\in W} \eps_{i,w,z} t_{i,w}\right).
    \end{equation}
    We just reduced to the case where there is a single $z$ to consider, up to the price of replacing $\gPeeling$ by $F$. Since $\gPeeling$ and $x\mapsto \max(x,c)$ are non-decreasing and convex, so is $F$ by composition. Alternatively, note that we could also have reduced to the case $|Z|=1$ by defining $S:=\{s(z,t), z\in Z, t\in T^z\}$ just as it is done right after the statement of \Cref{lem:ContractionLemma}. In order to apply \Cref{lem:InductionStepContraction}, it remains to reduce to the case $|I|=1$.
    
    \textbf{Reduce to the case $|I|=1$ by conditioning and iteration.} \Cref{lem:ReduceI=1} shows that in order to prove \Cref{eq:ReducedV=1}, it is enough to prove that for every $i\in I$ and every subset $R\subset \R^W\times R$, denoting $r=(r_1,r_2)\in\R^W\times\R$, it holds
    \[
        \E \sup_{r\in R} F\left(\eps_{i,z} f_{i,z}(r_1) + r_2\right) \leq \E \sup_{r\in R} F\left(\sum_{w\in W} \eps_{i,w,z} r_{1,w} + r_2\right).
    \]    
    We just reduced to the case $|I|=1$ since one can now consider the indices $i$ one by one. The latter inequality is now a direct consequence of \Cref{lem:InductionStepContraction}. This proves the result.
\end{proof}

\begin{lemma}\label{lem:ReduceZ=1}
    Consider a finite set $Z$ and independent families of independent real random variables $(A_z)_{z\in Z}$ and $(B_z)_{z\in Z}$. If for every $z\in Z$ and every constant $c\in[-\infty, \infty)$, it holds $\E \max(A_z,c) \leq \E \max(B_z,c)$, then
    \[
    \E \max_{z\in Z} A_z \leq \E \max_{z\in Z} B_z.
    \]
\end{lemma}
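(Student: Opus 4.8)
The plan is to prove the inequality by a hybrid (telescoping) argument that replaces one $A_z$ by the corresponding $B_z$ at a time, reducing each single swap to the scalar hypothesis through conditioning. Enumerate $Z=\{z_1,\dots,z_m\}$ and, for $0\le k\le m$, set
\[
M_k := \max\bigl(B_{z_1},\dots,B_{z_k},\,A_{z_{k+1}},\dots,A_{z_m}\bigr),
\]
so that $M_0=\max_{z\in Z}A_z$ and $M_m=\max_{z\in Z}B_z$. It then suffices to prove $\E M_{k-1}\le \E M_k$ for each $k\in\{1,\dots,m\}$ and chain the inequalities; the case $m=0$ is vacuous and $m=1$ is the hypothesis itself.

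\textbf{The single swap.} For a fixed $k$, write $c_k := \max\bigl(B_{z_1},\dots,B_{z_{k-1}},A_{z_{k+1}},\dots,A_{z_m}\bigr)$, a real random variable, so that $M_{k-1}=\max(c_k,A_{z_k})$ and $M_k=\max(c_k,B_{z_k})$. The key point is that $c_k$ is measurable with respect to a $\sigma$-algebra that is independent of both $A_{z_k}$ and of $B_{z_k}$: indeed $A_{z_k}$ is independent of $A_{z_{k+1}},\dots,A_{z_m}$ by within-family independence and of every $B_z$ by independence of the two families, and symmetrically for $B_{z_k}$. Conditioning on $c_k$ via the tower property then gives
\[
\E M_{k-1} = \E\bigl[\,g_A(c_k)\,\bigr], \qquad \E M_k = \E\bigl[\,g_B(c_k)\,\bigr],
\]
where $g_A(c):=\E[\max(c,A_{z_k})]$ and $g_B(c):=\E[\max(c,B_{z_k})]$ — the independence is precisely what lets us substitute the unconditional $g_A(c)$ (resp.\ $g_B(c)$) for the conditional expectation on $\{c_k=c\}$. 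Since $c_k$ takes values in $\R\subseteq[-\infty,\infty)$, the hypothesis $\E\max(A_{z_k},c)\le\E\max(B_{z_k},c)$ yields $g_A(c_k)\le g_B(c_k)$ pointwise, hence $\E M_{k-1}\le\E M_k$.

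\textbf{Main points of care (no genuine obstacle).} The only items needing attention are bookkeeping ones. First, one should note that every expectation in sight is well defined: in the intended application $A_z,B_z\ge 0$, so all quantities are valued in $[0,\infty]$ and the manipulations are unconditionally valid; in general this follows from whatever integrability is implicit in the statement. Second, one should spell out exactly which independences enter the conditioning step, namely $A_{z_k}\perp\sigma(B_{z_1},\dots,B_{z_{k-1}},A_{z_{k+1}},\dots,A_{z_m})$ and the analogous statement for $B_{z_k}$ — both immediate from ``independent families of independent real random variables.'' I do not anticipate a real difficulty here: this is the standard one-coordinate-at-a-time coupling used to lift a scalar stochastic comparison to a comparison of maxima.
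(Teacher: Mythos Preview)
Your proposal is correct and follows essentially the same approach as the paper: both arguments replace the $A_z$ by $B_z$ one index at a time, condition on the ``rest'' (your $c_k$, the paper's $C$), and use the independence assumptions to reduce the single swap to the scalar hypothesis $\E\max(A_z,c)\le\E\max(B_z,c)$. Your handling of the edge case $m=1$ separately is equivalent to the paper's convention that the max over an empty set is $-\infty$.
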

\begin{proof}[Proof of \Cref{lem:ReduceZ=1}]
    The proof is by conditioning and iteration. To prove the result, it is enough to prove that if
    \[
    \E \max_{z\in Z} A_z \leq \E \max\left(\max_{z\in Z_1} A_z, \max_{z\in Z_2} B_z\right)
    \]
    for some partition $Z_1,Z_2$ of $Z$, with $Z_2$ possibly empty for the initialization of the induction, then for every $z_0\in Z_1$:
    \[
    \E \max_{z\in Z} A_z \leq \E \max\left(\max_{z\in Z_1\setminus\{z_0\}} A_z, \max_{z\in Z_2\cup\{z_0\}} B_z\right),
    \]
    with the convention that the maximum over an empty set is $-\infty$. Indeed, the claim would then come directly by induction on the size of $Z_2$. 
    
    Now, consider an arbitrary partition $Z_1,Z_2$ of $Z$, with $Z_2$ possibly empty, and consider $z_0\in Z_1$. It is then enough to prove that
    \begin{equation}\label{eq:GoalV=1}
        \E \max\left(\max_{z\in Z_1} A_z, \max_{z\in Z_2} B_z\right) \leq \E \max\left(\max_{z\in Z_1\setminus\{z_0\}} A_z, \max_{z\in Z_2\cup\{z_0\}} B_z\right).
    \end{equation}
    Define the random variable $C= \max\left(\max_{z\in Z_1\setminus\{z_0\}} A_z, \max_{z\in Z_2} B_z\right)$ which may be equal to $-\infty$ when the maximum is over empty sets, and which is independent of $A_{z_0}$ and $B_{z_0}$. It holds:
    \[
    \max\left(\max_{z\in Z_1} A_z, \max_{z\in Z_2} B_z\right) = \max\left(A_{z_0}, C\right)
    \]
    and
    \[
    \max\left(\max_{z\in Z_1\setminus\{z_0\}} A_z, \max_{z\in Z_2\cup\{z_0\}} B_z\right) = \max\left(B_{z_0}, C\right).
    \]
    \Cref{eq:GoalV=1} is then equivalent to
    \[
        \E \max(A_{z_0}, C) \leq \E \max(B_{z_0},C)
    \]
    with $C$ independent of $A_{z_0}$ and $B_{z_0}$. For a constant $c\in[-\infty, \infty)$, denote $A(c) = \E \max(A_{z_0},c)$ and $B(c) = \E \max(B_{z_0},c)$. We have:
    \begin{align*}
        \E \max(A_{z_0}, C) & = \E\left(\E\left(\max\left(A_{z_0}, C\right)|A_{z_0}\right)\right) && \text{law of total expectation} \\
        & = \E A(C) && \text{independence of $C$ and $A_{z_0}$}.        
    \end{align*}
    and similarly $\E \max(B_{z_0},C) = \E B(C)$. It is then enough to prove that $A(C)\leq B(C)$ almost surely. Since $C\in[-\infty, \infty)$, this is true by assumption. This proves the claims.
\end{proof}

\begin{lemma}\label{lem:ReduceI=1}
    Consider finite sets $I,W$ and independent families of independent real random variables $(\eps_i)_{i\in I}$ and $(\eps_{i,w})_{i\in I, w\in W}$. Consider functions $f_i:\R^W\to \R$ and $F:\R\to\R$ that are continuous. Assume that for every $i\in I$ and every subset $R\subset\R^W\times \R$, denoting $r=(r_1,r_2)\in R$ with $r_1=(r_{1,w})_w\in\R^W$ and $r_2\in\R$ the components of $r$, it holds 
    \[
        \E \sup_{r\in R} F(\eps_i f_i(r_1) + r_2)\leq \E \sup_{r\in R} F(\sum_{w \in W} \eps_{i,w} r_{1,w} + r_2).
    \]
    Consider an arbitrary $T\subset (\R^W)^I$ and for $t=(t_i)_{i\in I}\in T$, denote $t_{i,w}$ the $w$-th coordinate of $t_i\in\R^W$. It holds:
    \[
    \E \sup_{t\in T} F(\sum_{i\in I} \eps_i f_i(t_i))\leq \E \sup_{t\in T} F(\sum_{i\in I, w \in W} \eps_{i,w} t_{i,w}).
    \]
\end{lemma}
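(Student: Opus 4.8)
The plan is to prove the inequality by a telescoping argument that replaces the ``compressed'' contribution $\eps_i f_i(t_i)$ of one index $i\in I$ at a time by its ``expanded'' counterpart $\sum_{w\in W}\eps_{i,w}t_{i,w}$, each such replacement only increasing the expectation by virtue of the single-index hypothesis applied conditionally. This mirrors the reduction performed in the proof of Theorem~4.12 of \citet{LedouxTalagrand91ProbaBanachSpaces}.

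First I would fix an enumeration $I=\{i_1,\dots,i_m\}$ and, for $0\le k\le m$, introduce the random variable
\[
S_k := \sup_{t\in T} F\!\left(\sum_{\ell\le k}\sum_{w\in W}\eps_{i_\ell,w}\,t_{i_\ell,w} \;+\; \sum_{\ell>k}\eps_{i_\ell}f_{i_\ell}(t_{i_\ell})\right),
\]
so that $\E S_0$ is the left-hand side and $\E S_m$ the right-hand side of the desired inequality. Since the full Rademacher family $(\eps_i)_{i\in I},(\eps_{i,w})_{i\in I,w\in W}$ is finite, for each of its finitely many values $S_k$ equals a (deterministic) real number, hence $S_k$ is a finite-valued measurable random variable and all expectations below are well defined. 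It therefore suffices to establish $\E S_k\le\E S_{k+1}$ for every $0\le k<m$.

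To pass from $S_k$ to $S_{k+1}$, set $i:=i_{k+1}$ and condition on the sub-family
\[
\eps':=\big((\eps_{i_\ell,w})_{\ell\le k,\;w\in W},\ (\eps_{i_\ell})_{\ell>k+1}\big),
\]
which is independent of $\eps_i$ and of $(\eps_{i,w})_{w\in W}$ and which carries precisely the Rademacher variables occurring outside the $i$-term. For a fixed value of $\eps'$ and each $t\in T$ define $r_1(t):=t_i\in\R^W$ and
\[
r_2(t):=\sum_{\ell\le k}\sum_{w\in W}\eps_{i_\ell,w}\,t_{i_\ell,w}+\sum_{\ell>k+1}\eps_{i_\ell}f_{i_\ell}(t_{i_\ell})\ \in\R ,
\]
and let $R:=\{(r_1(t),r_2(t)):t\in T\}\subset\R^W\times\R$, which is a deterministic subset once $\eps'$ is fixed (because neither $r_1(t)$ nor $r_2(t)$ involves $\eps_i$ or $(\eps_{i,w})_{w\in W}$). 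Then, by construction,
\[
S_k=\sup_{r\in R}F\big(\eps_i f_i(r_1)+r_2\big),\qquad S_{k+1}=\sup_{r\in R}F\Big(\textstyle\sum_{w\in W}\eps_{i,w}\,r_{1,w}+r_2\Big).
\]
Applying the hypothesis of the lemma to this particular $R$ gives $\E_{\eps_i}S_k\le\E_{(\eps_{i,w})_w}S_{k+1}$, i.e. $\E(S_k\mid\eps')\le\E(S_{k+1}\mid\eps')$; taking expectation over $\eps'$ and using the tower property yields $\E S_k\le\E S_{k+1}$.

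Chaining these $m$ inequalities gives $\E S_0\le\E S_m$, which is the claim. The only point that requires care is the bookkeeping of the conditioning: one must choose $\eps'$ so that the set $R$ depends on none of the variables $\eps_i,(\eps_{i,w})_{w\in W}$, which is exactly why those variables are excluded from $\eps'$; with that in hand, the single-index hypothesis can be invoked with $R$ playing the role of its arbitrary subset. I do not anticipate any genuinely hard step beyond this bookkeeping.
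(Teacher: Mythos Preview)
Your proof is correct and follows essentially the same approach as the paper: a telescoping (equivalently, inductive) replacement of one index at a time, each step justified by conditioning on the remaining Rademacher variables so that the set $R$ becomes deterministic and the single-index hypothesis applies. The paper phrases this as an induction on the size of a subset $I_2\subset I$ and writes the conditioned quantity as $\phi(t,\eps_{-j})$, but the structure is identical to your $S_k$ and $r_2(t)$; your handling of measurability via the finiteness of the Rademacher family is if anything slightly crisper than the paper's appeal to continuity.
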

\begin{proof}[Proof of \Cref{lem:ReduceI=1}]
    The continuity assumption on $F$ and the $f_i$'s is only used to make all the considered suprema measurable. The proof goes by conditioning and iteration. For any $J\subset I$, denote $\eps_J$ the family that contains both $(\eps_j)_{j\in J}$ and $(\eps_{j,w})_{j\in J, w\in W}$. Define
    \begin{align*}
        h_J(t, \eps_J) & := \sum_{j\in J} \eps_{j} f_j(t_j), \\
        H_J(t, \eps_J) & := \sum_{j\in J, w \in W} \eps_{j,w} t_{j,w},
    \end{align*}
    with the convention that an empty sum is zero. To make notations lighter, if $J=\{j\}$ then we may write $h_j$ and $H_j$ instead of $h_J$ and $H_J$. We also omit to write the dependence on $\eps_J$ as soon as possible. What we want to prove is thus equivalent to
    \[
    \E \sup_{t\in T} F(h_I(t)) \leq \E \sup_{t\in T} F(H_I(t)).
    \]
    It is enough to prove that for every partition $I_1,I_2$ of $I$, with $I_2$ possibly empty, if
    \[
        \E \sup_{t\in T} F(h_I(t)) \leq \E \sup_{t\in T} F(h_{I_1}(t) + H_{I_2}(t)),
    \]
    then for every $j\in I_1$,
    \[
        \E \sup_{t\in T} F(h_I(t)) \leq \E \sup_{t\in T} F(h_{I_1\setminus\{j\}}(t) + H_{I_2\cup\{j\}}(t)).
    \]
    Indeed, the result would then come by induction on the size of $I_2$. Fix an arbitrary partition $I_1,I_2$ of $I$ with $I_2$ possibly empty, and $j\in I_1$. It is then enough to prove that    
    \begin{equation}\label{eq:ContractionInductionI}
         \E \sup_{t\in T} F(h_{I_1}(t) + H_{I_2}(t)) \leq \E \sup_{t\in T} F(h_{I_1\setminus\{j\}}(t) + H_{I_2\cup\{j\}}(t)).
    \end{equation}
    Denote $\eps_{-j} := \eps_{I\setminus \{j\}}$ and $\phi(t,\eps_{-j}) := h_{I_1\setminus\{j\}}(t, \eps_{I_1\setminus\{j\}}) + H_{I_2}(t, \eps_{I_2}))$. It holds:
    \[
        h_{I_1}(t) + H_{I_2}(t) = h_j(t,\eps_j) + \phi(t,\eps_{-j})
    \]
    and, writing $\eps_{j,\cdot}=(\eps_{j,w})_{w\in W}$:
    \[
    h_{I_1\setminus\{j\}}(t) + H_{I_2\cup\{j\}}(t) = H_j(t,\eps_{j,\cdot}) + \phi(t,\eps_{-j}).
    \]
    
    Consider the measurable functions
    \[
    g(\eps_j,\eps_{-j}):=\sup_{t\in T} F(h_j(t,\eps_j) + \phi(t,\eps_{-j}))
    \]
    and 
    \[
    G(\eps_{j,\cdot},\eps_{-j}):=\sup_{t\in T} F(H_j(t,\eps_{j,\cdot}) + \phi(t,\eps_{-j})).
    \]
    Denote $\Delta$ the ambiant space of $\eps_{-j}$ and consider a constant $\delta\in \Delta$. Define $\hat{g}(\delta) = \E g(\eps_j,\delta)$ and $\hat{G}(\delta) = \E G(\eps_{j,\cdot}, \delta)$. It holds
    \begin{align*}
        \E \sup_{t\in T} F(h_{I_1}(t) + H_{I_2}(t)) & = \E g(\eps_j,\eps_{-j}) && \text{by definition of $g$} \\
        & = \E\left(\E\left(g(\eps_j,\eps_{-j})|\eps_{-j}\right)\right) && \text{law of total expectation} \\
        & = \E \hat{g}(\eps_{-j}) && \text{independence of $\eps_j$ and $\eps_{-j}$}
    \end{align*}
    and similarly $\E \sup_{t\in T} F(h_{I_1\setminus\{j\}}(t) + H_{I_2\cup\{j\}}(t)) = \E \hat{G}(\eps_{-j})$. Thus, \Cref{eq:ContractionInductionI} is equivalent to $\E \hat{g}(\eps_{-j})\leq \E \hat{G}(\eps_{-j})$. For every $\delta\in \Delta$, we can define $R(\delta)=\{(t_j, \phi(t,\delta))\in\R^W\times\R, t\in T\}$ and it holds
    \[
    \hat{g}(\delta) = \E \sup_{r\in R} F(\eps_j f_j(r_1) + r_2)
    \]
    and
    \[
    \hat{G}(\delta) = \E \sup_{r\in R} F(\sum_{w\in } \eps_{j,w} r_{1,w} + r_2).
    \]
    Thus, $\hat{g}(\delta)\leq \hat{G}(\delta)$ for every $\delta\in\Delta$ by assumption. This shows the claim.
\end{proof}
\begin{proof}[Proof of \Cref{lem:InductionStepContraction}] Recall that we want to prove
\begin{equation}\label{eq:GoalContractionProof}
    \E \sup_{t\in T} F\left(\eps_1 f(t_1) + t_2\right) \leq \E \sup_{t\in T} F\left(\sum_{w\in W} \eps_{1,w} t_{1,w} + t_2\right).
\end{equation}
    
    \textbf{Scalar input case.} In this case, $|W|=1$ \ie the inputs $t_1$ are scalar and the result is well-known, see \citet[Equation (4.20)]{LedouxTalagrand91ProbaBanachSpaces}.
    
    \textbf{$k$-max-pooling case.} In this case, $f$ computes the $k$-th largest coordinate of its input. Computing explicitly the expectation where the only random thing is $\eps_1\in\{-1,1\}$, the left-hand side of \Cref{eq:GoalContractionProof} is equal to
    \[
    \frac{1}{2} \sup_{t\in T} F\left(f(t_1) + t_2\right) + \frac{1}{2} \sup_{s\in T} F\left(- f(s_1) + s_2\right).
    \]
    Consider $s,t\in T$. Recall that $s_1,t_1\in\R^W$. Denote $s_{1,(k)}$ the $k$-th largest component of vector $s_1$. The set $\{w\in W: s_{1,w}\leq s_{1,(k)}\}$ has at least $|W|-k+1$ elements, and $\{w\in W: t_{1,(k)}\leq t_{1,w}\}$ has at least $k$ elements, so their intersection is not empty. Consider {\em any}\footnote{The choice of a specific $w$ has no importance, unlike when defining the activations of $k$-max-pooling neurons.} $w(s,t)$ in this intersection. We are now going to use that both $f(t_1) = t_{1,(k)} \leq t_{1,w(s,t)}$ and $-f(s_1) = - s_{1,(k)} \leq - s_{1,w(s,t)}$. Even if we are not going to use it, note that this implies $f(t) - f(s) \leq t_{1,w(s,t)} - s_{1,w(s,t)}$: we are exactly using an argument that establishes that $f$ is $1$-Lipschitz. Since $f(t_1) = t_{1,(k)} \leq t_{1,w(s,t)}$ and $F$ is non-decreasing, it holds:
    \begin{align*}
        F\left(f(t_1) + t_2\right) \leq & F\left(t_{1,w(s,t)} + t_2\right) \\
        \underset{\eps\textrm{ centered}}{=} & F\left(t_{1,w(s,t)} + \E\left(\sum_{w\neq w(s,t)} \eps_{1,w} t_{1,w}\right) + t_2\right) \\
        \underset{\textrm{Jensen}}{\leq} & \E F\left(t_{1,w(s,t)} + \sum_{w\neq w(s,t)} \eps_{1,w} t_{1,w} + t_2\right).
    \end{align*}
    Moreover, $-f(s_1) = - s_{1,(k)} \leq - s_{1,w(s,t)}$ so that in a similar way:
    \begin{align*}
        F\left(-f(s_1) + s_2\right) \leq & F\left(- s_{1,w(s,t)} + s_2\right) \\
        \leq & F\left(- s_{1,w(s,t)} + \E\left(\sum_{w\neq w(s,t)} \eps_{1,w} s_{1,w}\right) + s_2\right) \\
        \leq & \E F\left(-s_{1,w(s,t)} + \sum_{w\neq w(s,t)} \eps_{1,w} s_{1,w} + s_2\right).
    \end{align*}
    At the end, we get
    \begin{align*}
        & \frac{1}{2} F\left(f(t_1) + t_2\right) + \frac{1}{2} F\left(- f(s_1) + s_2\right) \\
        & \leq \frac{1}{2} \E F\left(t_{1,w(s,t)} + \sum_{w\neq w(s,t)} \eps_{1,w} t_{1,w} + t_2\right) \\
        & + \frac{1}{2} \E F\left(-s_{1,w(s,t)} + \sum_{w\neq w(s,t)} \eps_{1,w} s_{1,w} + s_2\right) \\
        & \leq \frac{1}{2} \E \sup_{r\in T} F\left(r_{1,w(s,t)} + \sum_{w\neq w(s,t)} \eps_{1,w} r_{1,w} + r_2\right) \\
        & + \frac{1}{2} \E \sup_{r\in T} F\left(-r_{1,w(s,t)} + \sum_{w\neq w(s,t)} \eps_{1,w} r_{1,w} + r_2\right) \\
        & = \E \sup_{r\in T} F\left(\eps_{1,w(s,t)} r_{1,w(s,t)} + \sum_{w\neq w(s,t)} \eps_{1,w} r_{1,w} + r_2\right) \\
        & = \E \sup_{r\in T} F\left(\sum_{w} \eps_{1,w} r_{1,w} + r_2\right).
    \end{align*}
    The latter is independent of $s,t$. Taking the supremum over all $s,t\in T$ yields \Cref{eq:GoalContractionProof} and thus the claim.
\end{proof}

\section{Peeling argument}\label{app:Peeling}
First, we state a simple lemma that will be used several times.
\begin{lemma}\label{lem:GetRidOfAbsValues}
Consider a vector $\eps\in\R^n$ with iid Rademacher coordinates, meaning that $\P(\eps_i=1)=\P(\eps_i=-1)=1/2$. Consider a function $\gPeeling:\R\to\Rp$. Consider a set $X\subset\R^n$. It holds:
\[
\E_\eps \sup_{x\in X} \gPeeling\left(\left|\sum_{i=1}^n \eps_i x_i\right|\right) \leq 2 \E_\eps \sup_{x\in X} \gPeeling\left(\sum_{i=1}^n \eps_i x_i\right).
\]
\end{lemma}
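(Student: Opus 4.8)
The plan is to combine the nonnegativity of $\gPeeling$ with the symmetry of the Rademacher distribution. First I would record the elementary pointwise inequality valid for every real number $a$: since $|a|$ equals either $a$ or $-a$ and $\gPeeling$ is nonnegative, one has $\gPeeling(|a|)\le \gPeeling(a)+\gPeeling(-a)$. Instantiating this with $a=\sum_{i=1}^n\eps_i x_i$ for a fixed $x\in X$, then taking the supremum over $x\in X$ and using that the supremum of a sum is at most the sum of the suprema, I obtain
\[
\sup_{x\in X}\gPeeling\Bigl(\bigl|\textstyle\sum_{i=1}^n\eps_i x_i\bigr|\Bigr)\le \sup_{x\in X}\gPeeling\Bigl(\textstyle\sum_{i=1}^n\eps_i x_i\Bigr)+\sup_{x\in X}\gPeeling\Bigl(-\textstyle\sum_{i=1}^n\eps_i x_i\Bigr),
\]
which holds pointwise in $\eps$.

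Next I would take $\E_\eps$ of both sides. The first term on the right is already the quantity appearing in the target bound, so it only remains to show that the second term equals it as well. For this I would use that $(-\eps_1,\dots,-\eps_n)$ has the same joint law as $(\eps_1,\dots,\eps_n)$, hence $\E_\eps\sup_{x\in X}\gPeeling(-\sum_i\eps_i x_i)=\E_\eps\sup_{x\in X}\gPeeling(\sum_i\eps_i x_i)$. Adding the two contributions produces the factor $2$ and finishes the argument.

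I do not anticipate a real obstacle: the proof is a one-line reduction. The only points worth a brief comment are measurability — which is automatic since $\eps$ ranges over the finite set $\{-1,1\}^n$, so each expectation is a finite average — and the observation that no monotonicity, convexity, or Lipschitz property of $\gPeeling$ is needed, only $\gPeeling\ge 0$, which is exactly what licenses the step $\gPeeling(|a|)\le\gPeeling(a)+\gPeeling(-a)$.
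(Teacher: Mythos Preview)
Your proof is correct and follows essentially the same approach as the paper: the pointwise inequality $\gPeeling(|a|)\le \gPeeling(a)+\gPeeling(-a)$ from nonnegativity, followed by the sup bound and the symmetry of the Rademacher vector to identify the two expectations. Your additional remarks on measurability and on the fact that only $\gPeeling\ge 0$ is needed are accurate and not present in the paper's terser write-up.
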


\begin{proof}[Proof of \Cref{lem:GetRidOfAbsValues}]
    Since $\gPeeling\geq 0$, it holds $\gPeeling(|x|) \leq \gPeeling(x) + \gPeeling(-x)$. Thus
    \[
    \E_\eps \sup_{x\in X} \gPeeling\left(\left|\sum_{i=1}^n \eps_i x_i\right|\right) \leq \E_\eps \sup_{x\in X} \gPeeling\left(\sum_{i=1}^n \eps_i x_i\right) + \E_\eps \sup_{x\in X} \gPeeling\left(\sum_{i=1}^n (-\eps_i) x_i\right).
    \]
    Since $\eps$ is symmetric, that is $-\eps$ has the same distribution as $\eps$, we deduce that the latter is just $ 2 \E_\eps \sup_{x\in X} \gPeeling\left(\sum_{i=1}^n \eps_i x_i\right)$. This proves the claim.
\end{proof}

\textbf{Notations} We now fix for all the next results of this section $n$ vectors $x_1,\dots,x_n\in\R^\din$, for some $\din\in\Ns$. We denote $x_{i,u}$ the coordinate $u$ of $x_i$.

For any neural network architecture, recall that $v(\param,x)$ is the output of neuron $v$ for parameters $\param$ and input $x$, and $\ant^d(v)$ is the set of neurons $u$ for which there exists a path from $u$ to $v$ of distance $d$. For a set of neurons $V$, denote $R_V(\param,x)=(v(\param,x))_{v\in V}$. 

\textbf{Introduction to peeling} This section shows that some expected sum over output neurons $v$ can be reduced to an expected maximum over $\ant(v)$, and iteratively over an expected maximum over $\ant^{d}(v)$ for increasing $d$'s. Eventually, the maximum is only over input neurons as soon as $d$ is large enough. We start with the next lemma which is the initialization of the induction over $d$: it peels off the output neurons $v$ to reduce to their antecedents $\ant(v)$.

\begin{lemma}\label{lem:PeelingNOut}
    Consider a neural network architecture as in \Cref{def:NN} with $\NeuronsIn\cap\NeuronsOut=\emptyset$. Consider an associated set $\Param$ of parameters $\param$ such that $\sum\limits_{v\in\NeuronsOut} \|\paramto{v}\|_1 + |\bias_v|\leq r$. Consider a family of independent Rademacher variables $(\eps_j)_{j\in J}$ with $J$ that will be clear from the context. Consider a non-decreasing function $\gPeeling:\R\to\Rp$. Consider a new neuron $\biasNeuron$  and set by convention $x_{\biasNeuron}=1$ for every input $x$. It holds
\begin{multline*}
 \E_{\eps}\gPeeling\left( \sup\limits_{\param\in\Param} \sum_{\substack{i=1,\dots,n,\\v\in\NeuronsOut}} \eps_{i,v} v(\param,x_i)\right) \\
 \leq \E_{\eps}\gPeeling\left( r\max_{v\in\NeuronsOut} \max_{u\in(\ant(v)\cap \NeuronsIn)\cup\{\biasNeuron\}} \left|\sum_{i=1}^n \eps_{i,v} x_{i,u}\right|\right) \\
 + \E_{\eps}\gPeeling\left(r\max_{v\in\NeuronsOut} \max_{u\in\ant(v)\setminus\NeuronsIn} \sup\limits_{\param} \left|\sum_{i=1}^n \eps_{i,v} u(\param,x_i)\right|\right)
\end{multline*}
where  in the last term if $ant(v)\setminus\NeuronsIn=\emptyset$, by convention $\max_{u\in\ant(v)\setminus\NeuronsIn} = -\infty$, and $\gPeeling(-\infty):=0$.
\end{lemma}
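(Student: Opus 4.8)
The plan is to bound the argument of $\gPeeling$ \emph{pointwise in $\param$} by a constant-times-maximum, and only afterwards to reinstate the supremum and take the Rademacher expectation.

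First, since every $v\in\NeuronsOut$ is an identity neuron, \Cref{def:NN} gives $v(\param,x_i)=\bias_v+\sum_{u\in\ant(v)}u(\param,x_i)\paramfromto{u}{v}$. Folding the bias into the auxiliary neuron $\biasNeuron$ by setting $\paramfromto{\biasNeuron}{v}:=\bias_v$ and recalling $x_{i,\biasNeuron}=1$ (so that $u(\param,x_i)=x_{i,\biasNeuron}=1$ when $u=\biasNeuron$), one gets, for every fixed $\param$,
\[
\sum_{i=1}^{n}\sum_{v\in\NeuronsOut}\eps_{i,v}\,v(\param,x_i)=\sum_{v\in\NeuronsOut}\ \sum_{u\in\ant(v)\cup\{\biasNeuron\}}\paramfromto{u}{v}\Big(\sum_{i=1}^{n}\eps_{i,v}\,u(\param,x_i)\Big).
\]
This is a linear form in the ``output-layer vector'' $(\paramfromto{u}{v})_{v\in\NeuronsOut,\,u\in\ant(v)\cup\{\biasNeuron\}}$ whose $\ell^1$-norm equals $\sum_{v\in\NeuronsOut}\big(\|\paramto{v}\|_1+|\bias_v|\big)\le r$ by the hypothesis defining $\Param$. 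By the Hölder bound $\langle w,c\rangle\le\|w\|_1\|c\|_\infty$ — and because the $\ell^\infty$-term below is $\ge0$ — this yields, still for every fixed $\param$,
\[
\sum_{i=1}^{n}\sum_{v\in\NeuronsOut}\eps_{i,v}\,v(\param,x_i)\ \le\ r\max_{v\in\NeuronsOut}\ \max_{u\in\ant(v)\cup\{\biasNeuron\}}\Big|\sum_{i=1}^{n}\eps_{i,v}\,u(\param,x_i)\Big|.
\]

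Next I would take $\sup_{\param\in\Param}$ on both sides and commute it with the (finite) maximum over the pairs $(v,u)$, so that the right-hand side becomes $r\max_{v,u}\sup_{\param}\big|\sum_{i}\eps_{i,v}u(\param,x_i)\big|$. Splitting $\ant(v)\cup\{\biasNeuron\}$ into the disjoint parts $(\ant(v)\cap\NeuronsIn)\cup\{\biasNeuron\}$ — on which $u(\param,x_i)=x_{i,u}$ is independent of $\param$, so the $\sup_\param$ disappears — and $\ant(v)\setminus\NeuronsIn$ — on which it is kept — turns this into $r\max(A_\eps,B_\eps)$, where $A_\eps$ and $B_\eps$ are precisely the two random quantities in the statement (a maximum over $\emptyset$ read as $-\infty$). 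Applying the non-decreasing map $\gPeeling$, using $\gPeeling(r\max(A_\eps,B_\eps))=\max(\gPeeling(rA_\eps),\gPeeling(rB_\eps))$ and then $\max(s,t)\le s+t$ for $s,t\ge0$ (legitimate since $\gPeeling\ge0$, with $\gPeeling(-\infty):=0$ covering the empty case), and finally taking $\E_\eps$ by linearity, gives exactly the claimed two-term bound.

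I do not expect a real obstacle here: this is essentially the base case of the peeling induction. The two points that need care are (i) absorbing the biases into the $\biasNeuron$-weight so that the assumption on $\Param$ becomes a genuine radius-$r$ $\ell^1$-ball constraint, making the Hölder step tight; and (ii) the bookkeeping above — that $\sup_\param$ commutes with a finite maximum, and that splitting $\max$ into a sum of two $\gPeeling$-terms costs only the nonnegativity of $\gPeeling$. In particular one does \emph{not} need to decouple the output-layer weights from the rest of $\param$ (which would use $\ant(v)\cap\NeuronsOut=\emptyset$ for $v\in\NeuronsOut$), since the statement retains $\sup_\param$ inside the second term.
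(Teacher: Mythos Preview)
Your proposal is correct and follows essentially the same approach as the paper: expand the identity output neurons, apply H\"older against the $\ell^1$-ball constraint on the concatenated output-layer weights (bias folded into $\biasNeuron$), push the $\sup_\param$ inside the finite maximum, and split the maximum over $\ant(v)\cup\{\biasNeuron\}$ into the input-plus-bias part and the non-input part. Your final step $\gPeeling(r\max(A_\eps,B_\eps))=\max(\gPeeling(rA_\eps),\gPeeling(rB_\eps))\le \gPeeling(rA_\eps)+\gPeeling(rB_\eps)$ is in fact spelled out more carefully than in the paper, which simply asserts that ``everything is non-negative so the maximum \ldots\ is smaller than the sum of the maxima'' without making explicit that monotonicity of $\gPeeling$ is what allows the split to pass through $\gPeeling$.
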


\begin{proof}[Proof of \Cref{lem:PeelingNOut}]
    Recall that for a set of neurons $V$, we denote $R_V(\param,x)=(v(\param,x))_{v\in V}$. Recall that by \Cref{def:NN}, output neurons $v$ have $\rho_v=\id$ so for every $v\in\NeuronsOut$, $\param\in\Param$ and every input $x$:
    \[
    v(\param,x) = \inner{\left(\begin{array}{c}
                     \paramto{v}  \\
                     \bias_v
                \end{array}\right)}{\left(\begin{array}{c}
                     R_{\ant(v)}(\param, x) \\
                     1
                \end{array}\right)}.
    \]
    Denote by $\biasNeuron$ a new neuron that computes the constant function equal to one ($\biasNeuron(\param,x)=1$), we get:
    \begin{align*}
                & \E_{\eps}\gPeeling\left( \sup\limits_{\param} \sum\limits_{\substack{i=1,\dots,n \\ v\in\NeuronsOut}} \eps_{i,v} v(\param,x_i)\right) \\
                & = \E_{\eps}\gPeeling\left( \sup\limits_{\param} \sum\limits_{v\in\NeuronsOut} \inner{\left(\begin{array}{c}
                     \paramto{v}  \\
                     \bias_v
                \end{array}\right)}{\sum_{i=1}^n \eps_{i,v} \left(\begin{array}{c}
                     R_{\ant(v)}(\param, x_i) \\
                     1
                \end{array}\right)}\right) \\
                & \underset{\textrm{Hölder}}{\leq} \E_{\eps}\gPeeling\left( \sup\limits_{\param} \underbrace{\left(\sum\limits_{v\in\NeuronsOut} \|\paramto{v}\|_1 + |\bias_v|\right)}_{\leq r\textrm{ by assumption}
                } \max_{v\in\NeuronsOut}\left(\left|\sum_{i=1}^n \eps_{i,v}\right|,\max_{u\in\ant(v)} \left|\sum_{i=1}^n \eps_{i,v} u(\param,x_i)\right|\right)\right) \\
                & \leq \E_{\eps}\gPeeling\left( r\max_{v\in\NeuronsOut} \max_{u\in\ant(v)\cup\{\biasNeuron\}} \sup\limits_{\param} \left|\sum_{i=1}^n \eps_{i,v} u(\param,x_i)\right|\right).
            \end{align*}
        Everything is non-negative so the maximum over $u\in\ant(v)\cup\{\biasNeuron\}$ is smaller than the sum of the maxima over $u\in(\ant(v)\cap \NeuronsIn)\cup\{\biasNeuron\}$ and $u\in\ant(v)\setminus\NeuronsIn$. Note that when $u$ is an input neuron, it simply holds $u(\param,x_i)=x_{i,u}$. This proves the result.
\end{proof}
We now show how to peel neurons to reduce the maximum over $\ant^d(v)$ to $\ant^{d+1}(v)$. Later, we will repeat that until the maximum is only on input neurons. Compared to the previous lemma, note the presence of an index $m=1,\dots,M$ in the maxima. This is because after $d$ steps of peeling (when the maximum over $u$ has been reduced to $u\in\ant^{d}(v)$), we will have $M=K^{d-1}$ where $K$ is the kernel size. Indeed, the number of copies indexed by $m$ gets multiplied by $K$ after each peeling step.

\begin{lemma}\label{lem:PeelingInduction}
Consider a neural network architecture with an associated set $\Param$ of parameters $\param$ such that every neuron $v\notin\NeuronsOut\cup\NeuronsIn$ satisfies $\|\paramto{v}\|_1 + |\bias_v|\leq 1$. Assume that $\bias_v=0$ for every $v\in\NeuronSet_{\pool}$. Consider a family of independent Rademacher variables $(\eps_j)_{j\in J}$ with $J$ that will be clear from the context. Consider arbitrary $M,d\in\N$ and a convex non-decreasing function $\gPeeling:\R\to\Rp$. Take a symbol $\biasNeuron$  which does not correspond to a neuron ($\biasNeuron\notin\NeuronSet$) and set by convention $x_{\biasNeuron}=1$ for every input $x$. Define $P:=|\{k\in\Ns, \exists u\in\NeuronSet_{\kpool}\}|$ as the number of different types of $*$-max-pooling neurons in $G$, and $K:=\max_{u\in\NeuronSet_{\pool}} |\ant(u)|$ the maximal kernel size of the network ($K:=1$ if $P=0$). It holds:
\begin{multline*}
\E_{\eps}\gPeeling\left( \max_{\substack{v\in\NeuronsOut, \\ m=1,\dots,M}} \max_{u\in\ant^d(v)\setminus \NeuronsIn} \sup\limits_{\param} \left|\sum_{i=1}^n \eps_{i,v,m} u(\param, x_i)\right|\right) \\
         \leq (3+2P)\E_{\eps}\gPeeling\left( \max_{\substack{v\in\NeuronsOut, \\ m=1,\dots,KM}} \max_{u\in(\ant^{d+1}(v)\cap \NeuronsIn)\cup\{\biasNeuron\}} \left|\sum_{i=1}^n \eps_{i,v,m} x_{i,u}\right| \right) \\
         + (3+2P)\E_{\eps}\gPeeling\left( \max_{\substack{v\in\NeuronsOut, \\ m=1,\dots,KM}} \max_{u\in\ant^{d+1}(v)\setminus \NeuronsIn} \sup\limits_{\param} \left|\sum_{i=1}^n \eps_{i,v,m} u(\param,x_i)\right| \right)
\end{multline*}
with similar convention as in \Cref{lem:PeelingNOut} for empty maxima.
\end{lemma}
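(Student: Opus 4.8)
The plan is to peel off simultaneously all neurons at distance exactly $d$ from the outputs, treating separately the three possible kinds of activation they can carry. Throughout, the pair $(v,m)$ (with $v\in\NeuronsOut$, $1\le m\le M$) plays the role of the independent-copy index $z$ of the contraction lemmas, while the inner $\max_{u\in\ant^d(v)\setminus\NeuronsIn}$ together with $\sup_{\param}$ is absorbed into a single supremum $\sup_{t\in T^v}$: once a group of neurons carrying the \emph{same} activation $\rho$ is fixed, sending $(u,\param)$ to the vector of pre-activations of $u$ at the inputs $x_1,\dots,x_n$ identifies $\max_u\sup_\param\gPeeling\bigl(\sum_i\eps_{i,v,m}\,u(\param,x_i)\bigr)$ with $\sup_{t\in T^v}\gPeeling\bigl(\sum_i\eps_{i,v,m}\,\rho(t_i)\bigr)$. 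Since $\gPeeling:\R\to\Rp$ is non-decreasing, $\gPeeling(\max(a,b))\le\gPeeling(a)+\gPeeling(b)$, so the first move is to split $\ant^d(v)\setminus\NeuronsIn$ into the $2+P$ groups $\NeuronSet_{\id}$, $\NeuronSet_{\relu}$ and $\NeuronSet_{k_j\textrm{-}\mathtt{pool}}$ ($j=1,\dots,P$), and to bound the left-hand side by the sum over these groups of the corresponding expectation.

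For the identity group, $u(\param,x)=\bias_u+\sum_{w\in\ant(u)}w(\param,x)\paramfromto{w}{u}$ with $\|\paramto{u}\|_1+|\bias_u|\le1$, so H\"older's inequality ($\ell^1$/$\ell^\infty$ duality, using the convention $x_{\biasNeuron}=1$) gives directly $\bigl|\sum_i\eps_{i,v,m}u(\param,x_i)\bigr|\le\max_{w\in\ant(u)\cup\{\biasNeuron\}}\bigl|\sum_i\eps_{i,v,m}w(\param,x_i)\bigr|$; applying $\gPeeling$, the $\max_u$ and $\sup_\param$ (and using $\ant(u)\subseteq\ant^{d+1}(v)$), then splitting $w$ into inputs and $\biasNeuron$ — where $w(\param,x_i)=x_{i,w}$ — versus non-inputs, produces one term of each shape on the right-hand side, still with $M\le KM$ copies. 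No symmetrization is needed here, because the absolute values introduced by H\"older already coincide with those on the right-hand side, so this group contributes the coefficient $1$. For the ReLU group I would first remove the \emph{outer} absolute value at the cost of a factor $2$, as in \Cref{lem:GetRidOfAbsValues} (using $\gPeeling\geq0$ and the symmetry of the Rademacher variables), reaching $\E\max_{v,m}\sup_{t\in T^v}\gPeeling\bigl(\sum_i\eps_{i,v,m}\relu(t_i)\bigr)$ with $T^v$ the set of pre-activation vectors of the ReLU neurons in $\ant^d(v)$; then apply \Cref{lem:ContractionLemma} in Setting~1 (with the convex non-decreasing $\gPeeling$ and $f_{i,z}=\relu$, which is $1$-Lipschitz, vanishes at $0$, and has scalar input) to erase the $\relu$, then H\"older as above, then split $w$; this group contributes the coefficient $2$. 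For each $k_j$-max-pooling group I would again symmetrize (factor $2$) and then apply \Cref{lem:ContractionLemma} in Setting~2 (with $f_{i,z}=k_j\textrm{-}\mathtt{pool}$, after padding all kernels of this type to the common size $K$): this replaces the scalar $\eps_{i,z}$ by a vector $(\eps_{i,w,z})_w$ indexed also by the (at most $K$) kernel coordinates, so that, re-indexing the pair $(w,m)$ as one copy index, the number of copies gets multiplied by at most $K$, i.e.\ becomes $\le KM$ — this is where $K$ enters. Since $\bias_u=0$ for pooling neurons, the pre-activation is $(w(\param,x_i)\paramfromto{w}{u})_w$, and H\"older with $\|\paramto{u}\|_1\le1$ reduces the inner sum to $\max_{w\in\ant(u)}\bigl|\sum_i\eps_{i,w,z}w(\param,x_i)\bigr|$; splitting $w$ into inputs versus non-inputs then yields the two required shapes. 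Each pooling group contributes the coefficient $2$, hence $2P$ in total.

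Summing the $2+P$ groups, the coefficient in front of each right-hand-side term is $1+2+2P=3+2P$. To close, I would pad every copy count up to exactly $KM$ — which only increases the expectations, since $\gPeeling$ is non-decreasing and a maximum over more independent copies is pointwise larger — and note that each group's input term is dominated by $\max_{v,m}\max_{u\in(\ant^{d+1}(v)\cap\NeuronsIn)\cup\{\biasNeuron\}}\bigl|\sum_i\eps_{i,v,m}x_{i,u}\bigr|$ and each non-input term by $\max_{v,m}\max_{u\in\ant^{d+1}(v)\setminus\NeuronsIn}\sup_\param\bigl|\sum_i\eps_{i,v,m}u(\param,x_i)\bigr|$, which is exactly the claimed inequality. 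The main obstacle is the bookkeeping of the copy index, especially making the vector-valued contraction inequality (Setting~2) applicable to $*$-max-pooling — this is what forces the $K$-fold blow-up of the number of copies — together with the routine but fiddly padding needed when kernel sizes within one type differ; obtaining the constant exactly $3+2P$ rather than $2(2+P)$ relies precisely on the identity group needing no symmetrization.
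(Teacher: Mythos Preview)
Your proposal is correct and follows essentially the same route as the paper's proof: split the neurons at depth $d$ by activation type, avoid symmetrization for the identity group (yielding coefficient $1$), symmetrize and apply \Cref{lem:ContractionLemma} in Setting~1 for $\relu$ (coefficient $2$) and in Setting~2 for each $\kpool$ type (coefficient $2$ each, with the $K$-fold copy blow-up coming from the vector-valued contraction), then use H\"older with the normalization $\|\paramto{u}\|_1+|\bias_u|\le1$ and finally split into input and non-input antecedents. The only presentational difference is that the paper packages the intermediate bounds into named quantities $e(\rho)$, $E(\rho)$, $F$, $H$ and postpones the input/non-input split until after reaching the common upper bound $H$, whereas you perform the split group by group; the resulting constant $3+2P$ and the mechanism producing it are identical.
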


\begin{proof}
    \textbf{Step 1: split the neurons depending on their activation function.} In the term that we want to bound from above, the neurons $u\in\ant^d(v)\setminus \NeuronsIn$ are not input neurons so they compute something of the form $\rho_u(\dots)$ where $\rho_u$ is the activation associated with $u$, $1$-Lipschitz, and satisfies $\rho_u(0)=0$. The first step of the proof is to get rid of $\rho_u$ using a contraction lemma similar to Theorem 4.12 in \citet{LedouxTalagrand91ProbaBanachSpaces}. However, here, the function $\rho_u$ depends on the neuron $u$, what we are taking a maximum over so that classical contraction lemmas do not apply directly. To resolve this first obstacle, we split the neurons according to their activation function. Below, we highlight in \orange{orange} what is important and/or the changes from one line to another. Denote $\NeuronSet_\rho$ the neurons that have $\rho$ as their associated activation function, and the term with a maximum over all $u\in\NeuronSet_\rho$ is denoted:
    \[
        e(\rho) := \E_{\eps}\gPeeling\left( \max_{\substack{v\in\NeuronsOut, \\ m=1,\dots,M}} \max_{u\in(\ant^d(v)\cap \orange{\NeuronSet_{\rho}})\setminus\NeuronsIn} \sup\limits_{\param} \left|\sum_{i=1}^n \eps_{i,v,m} u(\param,x_i)\right|\right),
    \]
    with the convention $e(\rho)=0$ if $\NeuronSet_\rho$ is empty. This yields a first bound
    \[
                 \E_{\eps}\gPeeling\left( \max_{\substack{v\in\NeuronsOut, \\ m=1,\dots,M}} \max_{u\in\ant^d(v)\setminus\NeuronsIn} \sup\limits_{\param} \left|\sum_{i=1}^n \eps_{i,v,m} u(\param,x_i)\right|\right) \\
                 \leq e(\relu) + e(\id) + \sum_{k} e(\kpool)
    \]
    where the sum of the right-hand side is on all the $k\in\Ns$ such that there is at least one neuron in $\NeuronSet_{\kpool}$. Define $E(\rho)$ to be the same thing as $e(\rho)$ but without the absolute values:
    \[
        E(\rho) := \E_{\eps}\gPeeling\left( \max_{\substack{v\in\NeuronsOut, \\ m=1,\dots,M}} \max_{u\in(\ant^d(v)\cap \orange{\NeuronSet_{\rho}})\setminus\NeuronsIn} \sup\limits_{\param} \sum_{i=1}^n \eps_{i,v,m} u(\param,x_i)\right).
    \]
    \Cref{lem:GetRidOfAbsValues} gets rid of the absolute values by paying a factor 2:
    \[
        e(\rho)\leq 2 E(\rho).
    \]
    We now want to bound each $E(\rho)$.
    
    \textbf{Step 2: get rid of the $*$-max-pooling and ReLU activation functions.} Since the maximal kernel size is $K$, any $*$-max-pooling neuron $u$ must have at most $K$ antecedents. When a $u\in\NeuronSet_{\pool}$ has less than $K$ antecedents, we artificially add neurons $w$ to $\ant(u)$ to make it of cardinal $K$, and we set by convention $\paramfromto{w}{u}=0$. We also fix an arbitrary order on the antecedents of $u$ and write $\ant(u)_w$ for the antecedent number $w$, with $R_{\ant(u)_w}$ the function associated with this neuron. For a ReLU or $*$-max-pooling neuron $u$, define the pre-activation of $u$ to be
    \begin{align*}
        \pre_u(\param,x):=\left\{\begin{array}{cc}
            \inner{\left(\begin{array}{c}
                     \paramto{u}  \\
                     \bias_u 
                \end{array}\right)}{\left(\begin{array}{c}
                     R_{\ant(u)}(\param, x) \\
                     1
                     \end{array}\right)} & \textrm{if }u\in\NeuronSet_{\relu}, \\
            \left(\bias_u + \paramfromto{\ant(u)_w}{u}R_{\ant(u)_w}(\param,x)\right)_{w=1,\dots,k} & \textrm{otherwise when }u\in\NeuronSet_{\pool}.
        \end{array}\right.
    \end{align*}
    where we recall that $\bias_u=0$ for $u\in\NeuronSet_{\pool}$ by assumption. We nevertheless keep $\bias_u$ in the computation until the point where this assumption is apparently actually needed to continue.
    Note that the pre-activation has been defined to satisfy $u(\param,x)=\rho_u(\pre_u(\param,x))$. When $\rho$ is the ReLU or $\kpool$, we can thus rewrite $E(\rho)$ in terms of the pre-activations:
    \[
        E(\rho) = \E_{\eps}\gPeeling\left( \max_{\substack{v\in\NeuronsOut, \\ m=1,\dots,M}} \max_{u\in(\ant^d(v)\cap\NeuronSet_{\rho})\setminus\NeuronsIn} \sup\limits_{\param} \sum_{i=1}^n \eps_{i,v,m} \orange{\rho(\pre_u(\param,x_i))}\right).
    \]
    Consider the finite set $Z=\{(v,m), v\in\NeuronsOut, m=1,\dots,M\}$ and for every $z=(v,m)\in Z$, define $T^{z}=\{(\pre_u(\param,x_i))_{i=1,\dots,n}: u\in(\ant^d(v)\cap \NeuronSet_{\rho})\setminus\NeuronsIn, \param\in\Param\}$. An element of $T^z$ will be denoted $t=(t_i)_{i=1}^n\in T^z\subset\R^n$ if $\rho=\relu$, and $t=(t_i)_{i=1}^n \in T^z\subset(\R^{k})^n$ with $t_i=(t_{i,w})_{w=1}^k\in\R^k$ if $u\in\NeuronSet_{\pool}$. We can again rewrite $E(\rho)$ as
    \[
        E(\rho) = \E_{\eps}\gPeeling\left(\max_{\orange{z\in Z}} \sup_{\orange{t\in T^z}} \sum_{i=1}^n \eps_{i,\orange{z}} \orange{\rho(t_i)}\right).
    \]
    We now want to get rid of the activation function $\rho$ with a contraction lemma. There is a second difficulty that prevents us from directly applying classical contraction lemmas such as Theorem 4.12 of \citet{LedouxTalagrand91ProbaBanachSpaces}. It is the presence of a maximum over multiple copies indexed by $z\in Z$ of a supremum that depends on iid families $(\eps_{i,z})_{i=1\dots n}$. Indeed, Theorem 4.12 of \citet{LedouxTalagrand91ProbaBanachSpaces} only deals with a single copy ($|Z|=1$). This motivates the contraction lemma established for the occasion in \Cref{lem:ContractionLemma}. Once the activation functions removed, we can conclude separately for $\rho=\relu,\id$ and $\rho=\kpool$.
    
    \textbf{Step 3a: deal with $\rho=\kpool$ via rescaling.} In the case $\rho=\kpool$, \Cref{lem:ContractionLemma} shows that
    \begin{align*}
        & \E_{\eps}\gPeeling\left( \max_{z\in Z} \sup_{t\in T^z} \sum_{i=1}^n \eps_{i,z} \orange{\kpool}(t_i)\right) \\
        & \leq \E_{\eps}\gPeeling\left( \max_{z\in Z} \sup_{t\in T^z} \sum_{\substack{i=1,\dots,n,\\ w=1,\dots,K}} \eps_{i,z,w} t_{i,w}\right).
    \end{align*}
    The right-hand side is equal to
    \begin{equation}\label{eq:PeelingMaxRescaling}
        \E_{\eps}\gPeeling\left( \max_{\substack{v\in\NeuronsOut, \\ m=1,\dots,M}} \sup_{\substack{u\in(\ant^d(v)\cap \NeuronSet_{\pool})\setminus\NeuronsIn,\\ \param\in\Param}} \sum_{\substack{i=1,\dots,n,\\ w=1,\dots,K}} \eps_{i,v,m,w}(\bias_u + \paramfromto{\ant(u)_w}{u}R_{\ant(u)_w}(\param,x_i))\right).
    \end{equation}
    We now deal with this using the assumption on the norm of incoming weights.
    Recalling that $\bias_u=0$ for every $u\in\NeuronSet_{\pool}$:
    
    \begin{multline*}
         \sum_{\substack{i=1,\dots,n,\\ w=1,\dots,K}} \eps_{i,v,m,w}(\underbrace{\bias_u}_{=0} + \paramfromto{\ant(u)_w}{u}R_{\ant(u)_w}(\param,x_i)) \\
         = \sum_{w=1,\dots,K} \paramfromto{\ant(u)_w}{u} \left(\sum_{i=1,\dots,n} \eps_{i,v,m,w}R_{\ant(u)_w}(\param,x_i)\right) \\
         \underset{\textrm{Hölder}}{\leq} \underbrace{\|\paramto{u}\|_1}_{
        \leq 1\textrm{ by assumption}
        } \max_{w=1,\dots,K} \left| \sum_{i=1,\dots,n} \eps_{i,v,m,w}R_{\ant(u)_w}(\param,x_i)\right| \\
         \underset{\textrm{decoupling $w$ and $\ant(u)_w$}}{\leq} \max_{\orange{w}\in\ant(u)} \max_{\orange{w'}=1,\dots,K}\left|\sum_{i=1}^n \eps_{i,v,m,\orange{w'}} \orange{w}(\param,x_i)\right|.
    \end{multline*}
    Note for the curious reader that this inequality is the current obstacle when the biases are nonzero since we would end up with $K|\bias_u| + \|\paramto{u}\|_1$ and we could not anymore use that this is $\leq 1$.
    
    We deduce that \Cref{eq:PeelingMaxRescaling} is bounded from above by
    \[
        \E_{\eps}\gPeeling\left(\max_{\substack{v\in\NeuronsOut, \\ m=1,\dots,M}} \sup_{u\in(\ant^d(v)\cap \NeuronSet_{\pool})\setminus\NeuronsIn, \param\in\Param} \max_{w\in\ant(u)} \max_{w'=1,\dots,K}\left|\sum_{i=1}^n \eps_{i,v,m,w'} w(\param,x_i)\right|\right).
    \]
    Instead of having $\eps_{i,v,m,w'}$ with $m=1,\dots,M$ and $w'=1,\dots,K$, we re-index it as $\eps_{i,v,m}$ with $m=1,\dots,KM$. Note also that $u\in(\ant^d(v)\cap \NeuronSet_{\pool})\setminus\NeuronsIn$ and $w\in\ant(u)$ implies $w\in\ant^{d+1}(v)$, so considering a maximum over $w\in\ant^{d+1}(v)$ can only yield something larger. Moreover, we can add a new neuron $\biasNeuron$  that computes the constant function equal to one ($\biasNeuron(\param,x)=1$) and add $\biasNeuron$ to the maximum over $w$. Implementing all these changes, \Cref{eq:PeelingMaxRescaling} is bounded by
    \[
        H:=\E_{\eps}\gPeeling\left(\max_{\substack{v\in\NeuronsOut, \\ {\color{orange} m=1,\dots,KM}}} \sup_{\orange{w\in\ant^{d+1}(v)\cup\{\biasNeuron\}}} \sup_{\param\in\Param} \left|\sum_{i=1}^n \eps_{i,v,{\orange{m}}} w(\param,x_i)\right|\right).
    \]
    We now derive similar inequalities when $\rho=\id$ and $\rho=\relu$.

    \textbf{Step 3b: deal with $\rho=\id,\relu$ via rescaling.} In the case $\rho=\relu$, \Cref{lem:ContractionLemma} shows that
    \begin{align*}
        & \E_{\eps}\gPeeling\left(\max_{z\in Z} \sup_{t\in T^z} \sum_{i=1}^n \eps_{i,z} \orange{\relu}(t_i)\right) \\
        & \leq \E_{\eps}\gPeeling\left( \max_{z\in Z} \sup_{t\in T^z} \sum_{i=1,\dots,n} \eps_{i,z} t_{i}\right).
    \end{align*}
    The difference with the $*$-max-pooling case is that each $t_{i}$ is scalar so this does not introduce an additional index $w$ to the Rademacher variables. The right-hand side can be rewritten as
    \begin{equation*}
        \E_{\eps}\gPeeling\left(\max_{\substack{v\in\NeuronsOut, \\ m=1,\dots,M}} \sup_{u\in(\ant^d(v)\cap \orange{\NeuronSet_{\relu}})\setminus\NeuronsIn, \param\in\Param} \sum_{i=1,\dots,n} \eps_{i,v,m} \inner{\left(\begin{array}{c}
                     \paramto{u}  \\
                     \bias_u 
                \end{array}\right)}{\left(\begin{array}{c}
                     R_{\ant(u)}(\param, x_i) \\
                     1
                     \end{array}\right)}\right)
    \end{equation*}
    We can only increase the latter by considering a maximum over all $u\in\ant^d(v)$, not only the ones in $\NeuronSet_{\relu}$. We also add absolutes values. This is then bounded by
    \begin{equation}
        \orange{F} := \E_{\eps}\gPeeling\left(\max_{\substack{v\in\NeuronsOut, \\ m=1,\dots,M}} \sup_{u\in\ant^d(v)\setminus\NeuronsIn, \param\in\Param} \orange{\biggl|}\sum_{i=1,\dots,n} \eps_{i,v,m} \inner{\left(\begin{array}{c}
                     \paramto{u}  \\
                     \bias_u 
                \end{array}\right)}{\left(\begin{array}{c}
                     R_{\ant(u)}(\param, x_i) \\
                     1
                     \end{array}\right)}\orange{\biggr|}\right).
    \end{equation}
    This means that $E(\relu)\leq F$. Let us also observe that $e(\id)\leq F$. Indeed, recall that by definition
    \begin{align*}
        e(\id) =  \E_{\eps}\gPeeling\left(\max_{\substack{v\in\NeuronsOut, \\ m=1,\dots,M}} \max_{u\in(\ant^d(v)\cap \orange{\NeuronSet_{\id}})\setminus\NeuronsIn} \sup\limits_{\param} \left|\sum_{i=1}^n \eps_{i,v,m} u(\param,x_i)\right|\right).
    \end{align*}
    We can only increase the latter by considering a maximum over all $u\in\ant^d(v)$. Moreover, for an identity neuron $u$, it holds $u(\param,x)=\inner{\left(\begin{array}{c}
                     \paramto{u}  \\
                     \bias_u 
                \end{array}\right)}{\left(\begin{array}{c}
                     R_{\ant(u)}(\param, x) \\
                     1
                     \end{array}\right)}$. %
    This shows that $e(\id)\leq F$. It remains to bound $F$ using that the assumption on the norm of the parameters. Introduce a new neuron $\biasNeuron$  that computes the constant function equal to one: $\biasNeuron(\param,x)=1$. Note that
    \begin{align*}
        & \sum_{i=1,\dots,n} \eps_{i,v,m} \inner{\left(\begin{array}{c}
                     \paramto{u}  \\
                     \bias_u 
                \end{array}\right)}{\left(\begin{array}{c}
                     R_{\ant(u)}(\param, x_i) \\
                     1
                     \end{array}\right)} \\
        & = \inner{\left(\begin{array}{c}
                     \paramto{u}  \\
                     \bias_u 
                \end{array}\right)}{\sum_{i=1,\dots,n} \eps_{i,v,m} \left(\begin{array}{c}
                     R_{\ant(u)}(\param, x_i) \\
                     1
                     \end{array}\right)} \\
        & \underset{\textrm{Hölder}}{\leq} \underbrace{\left(\|\paramto{u}\|_1 + |\bias_u|\right)}_{
        \leq 1\textrm{ by assumption}
        } \max_{w\in\ant(u)\cup\{\biasNeuron\}} \left|\sum_{i=1}^n \eps_{i,v,m} w(\param,x_i)\right|.
    \end{align*} 
    This shows that
    \begin{align*}
        F \leq \E_{\eps}\gPeeling\left(\max_{\substack{v\in\NeuronsOut, \\ m=1,\dots,M}} \sup_{u\in\ant^d(v)\setminus\NeuronsIn, \param\in\Param} \max_{w\in\ant(u)\cup\{\biasNeuron\}} \left|\sum_{i=1}^n \eps_{i,v,m} w(\param,x_i)\right|\right).
    \end{align*}
    Obviously, introducing additional copies of $\eps$ to make the third index going from $m=1$ to $KM$ can only make it larger. Moreover, $u\in\ant^d(v)\setminus\NeuronsIn$ and $w\in\ant(u)$ implies $w\in\ant^{d+1}(u)$, so we can instead consider a maximum over $w\in\ant^{d+1}(v)$. This gives the upper-bound
    \begin{align*}
        F & \leq \E_{\eps}\gPeeling\left(\max_{\substack{v\in\NeuronsOut, \\ {\color{orange} m=1,\dots,KM}}} \max_{\orange{w\in\ant^{d+1}(v)\cup\{\biasNeuron\}}} \sup_{\param\in\Param} \left|\sum_{i=1}^n \eps_{i,v,m} w(\param,x_i)\right|\right) \\
        & = H.
    \end{align*}
    
    \textbf{Step 4: putting everything together.}
    At the end, recalling that there are at most $P$ different $k\in\Ns$ associated with an existing $k$-max-pooling neuron, we get the final bound
    \begin{align*}
        & \E_{\eps}\gPeeling\left(\max_{\substack{v\in\NeuronsOut, \\ m=1,\dots,M}} \max_{u\in\ant^d(v)\setminus\NeuronsIn} \sup\limits_{\param} \left|\sum_{i=1}^n \eps_{i,v,m} u(\param,x_i)\right|\right) \\
        & \leq e(\id) + e(\relu) +  \sum_k e(\kpool) \\
        & \leq e(\id) + 2E(\relu) + 2\sum_k E(\kpool) \\
        & \leq F + 2F + 2\sum_k E(\kpool) \\
        & \leq H + 2H + 2\sum_k H \\
        & \leq H + 2H + 2PH = (3+2P)H.
    \end{align*}
    The term $(3+2P)H$ can again be bounded by splitting the maximum over $w\in\ant^{d+1}(v)\cup\{\biasNeuron\}$ between the $w$'s that are input neurons, and those that are not, since everything is non-negative. This yields the claim.
\end{proof}

\begin{remark}[Improved dependencies on the kernel size]
    Note that in the proof of \Cref{lem:PeelingInduction}, the multiplication of $M$ by $K$ can be avoided if there are no $*$-max-pooling neurons in $\ant^d(v)$. Because of skip connections, even if there is a single $*$-max-pooling neuron in the architecture, it can be in $\ant^d(v)$ for many $d$'s. A more advanced version of the argument is to peel only the ReLU and identity neurons, by leaving the $*$-max-pooling neurons as they are, until we reach a set of $*$-max-pooling neurons large enough that we decide to peel simultaneously. This would prevent the multiplication by $K$ every time $d$ is increased.
\end{remark}

We can now state the main peeling theorem, which directly result from \Cref{lem:PeelingNOut} and \Cref{lem:PeelingInduction} by induction on $d$. Note that these lemmas contain assumptions on the size of the incoming weights of the different neurons. These assumptions are met using \Cref{alg:NormalizeAlgo}, which rescales the parameters without changing the associated function nor the path-norm.

\begin{theorem}\label{thm:Peeling}
    Consider a neural network architecture as in \Cref{def:NN} with $\NeuronsOut\cap\NeuronsIn=\emptyset$. Assume that $\bias_v=0$ for every $v\in\NeuronSet_{\pool}$. Define $P:=|\{k\in\Ns, \exists u\in\NeuronSet_{\kpool}\}|$ the number of different types of $*$-max-pooling neurons in $G$, and $K:=\max_{u\in\NeuronSet_{\pool}}$ the maximum kernel size ($K:=1$ by convention if $P=0$). Denote by $\biasNeuron$ a new input neuron and define $x_{\biasNeuron}=1$ for any input $x$. For any set of parameters $\Param$ associated with the network, such that $\|\Phi(\param)\|_1\leq r$ for every $\param\in\Param$, it holds for every convex non-decreasing function $\gPeeling:\R\to\Rp$
 \[
        \begin{aligned}
        & \E_{\eps} \gPeeling\left(\sup_{\param\in\Param} \sum_{\substack{i=1,\dots,n,\\ v\in\NeuronsOut}} \eps_{i,v} v(\param,x_i)\right) \\
        & \leq 
        \frac{(3+2P)^D}{2+2P}\E_{\eps}\gPeeling\left(r \max_{\substack{v\in\NeuronsOut, \\ m=1,\dots,K^{D-1}}} \max_{u\in\NeuronsIn\cup\{\biasNeuron\}} \left|\sum_{i=1}^n \eps_{i,v,m} x_{i,u}\right| \right).
        \end{aligned}
 \]
\end{theorem}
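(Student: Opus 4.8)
The plan is to combine \Cref{lem:PeelingNOut} (which peels off the output neurons) and \Cref{lem:PeelingInduction} (which peels off, one at a time, the successive layers of antecedents) by induction on the peeling depth, after a preliminary normalization of the parameters that makes the weight budgets needed by both lemmas available simultaneously; the geometric series of the resulting constants $3+2P$ then produces the prefactor $\tfrac{(3+2P)^D}{2+2P}$, and the depth-$D$ truncation kills the leftover ``non-input'' term.

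\textbf{Step 1: reduce to $1$-normalized parameters.} The left-hand side depends on $\Param$ only through the family $\{v(\param,x_i):\param\in\Param\}_{i,v\in\NeuronsOut}$, i.e. only through the realized functions $R_\param$ at the data points, while the constraint on $\Param$ involves $\param$ only through $\Phi(\param)$. By \Cref{lem:AlgNormalization}, applying \Cref{alg:NormalizeAlgo} (for $q=1$) to each $\param\in\Param$ preserves $R_\param$ and $\Phi(\param)$ and yields $1$-normalized parameters; the property $\bias_v=0$ for $v\in\NeuronSet_{\pool}$ is preserved by the rescaling. So I may assume every $\param\in\Param$ is $1$-normalized. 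For such $\param$ and any $v\notin\NeuronsIn$, every $u\in\ant(v)$ satisfies $u\notin\NeuronsOut$ (it has the successor $v$), hence is either an input neuron (so $\Phito{u}(\param)=1$) or an internal neuron with $\|\Phito{u}(\param)\|_1\in\{0,1\}$ and $\paramfrom{u}=0$ when this is $0$; in all cases $\|\Phito{u}(\param)\|_1|\paramfromto{u}{v}|=|\paramfromto{u}{v}|$. Combining this with \Cref{eq:PhiToVInduction} gives $\|\Phito{v}(\param)\|_1=\|\paramto{v}\|_1+|\bias_v|$ for every $v\notin\NeuronsIn$. Since $\paths^G$ is the disjoint union of the $\pathsto{v}$ over $v\in\NeuronsOut$, summing over output neurons yields $\sum_{v\in\NeuronsOut}(\|\paramto{v}\|_1+|\bias_v|)=\|\Phi(\param)\|_1\le r$ (the hypothesis of \Cref{lem:PeelingNOut}), while for $v\notin\NeuronsOut\cup\NeuronsIn$ one gets $\|\paramto{v}\|_1+|\bias_v|\in\{0,1\}\le 1$ (the hypothesis of \Cref{lem:PeelingInduction}).

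\textbf{Step 2: the induction on the peeling depth.} For integers $d\ge1$, $M\ge1$ set
\[
\alpha_d^M:=\E_{\eps}\gPeeling\Big(r\max_{\substack{v\in\NeuronsOut\\ m\le M}}\ \max_{u\in(\ant^d(v)\cap\NeuronsIn)\cup\{\biasNeuron\}}\Big|\textstyle\sum_{i=1}^n\eps_{i,v,m}x_{i,u}\Big|\Big),
\]
\[
\beta_d^M:=\E_{\eps}\gPeeling\Big(r\max_{\substack{v\in\NeuronsOut\\ m\le M}}\ \max_{u\in\ant^d(v)\setminus\NeuronsIn}\ \sup_{\param}\Big|\textstyle\sum_{i=1}^n\eps_{i,v,m}u(\param,x_i)\Big|\Big).
\]
\Cref{lem:PeelingNOut} gives $\E_{\eps}\gPeeling(\sup_{\param}\sum_{i,v}\eps_{i,v}v(\param,x_i))\le \alpha_1^1+\beta_1^1$ (with $m=1$ the only copy). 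For the inductive step I apply \Cref{lem:PeelingInduction} with $\gPeeling$ replaced by $x\mapsto\gPeeling(rx)$, which is still convex, non-decreasing and $\R\to\Rp$ because $r\ge0$, and which turns the statement of that lemma into $\beta_d^M\le(3+2P)\big(\alpha_{d+1}^{KM}+\beta_{d+1}^{KM}\big)$. Iterating this for $d=1,\dots,D-1$, while tracking the accumulated constants and the $K$-fold growth of the number of Rademacher copies, yields
\[
\E_{\eps}\gPeeling\Big(\sup_{\param}\sum_{i,v}\eps_{i,v}v(\param,x_i)\Big)\le\sum_{d=1}^{D}(3+2P)^{d-1}\alpha_d^{K^{d-1}}+(3+2P)^{D-1}\beta_D^{K^{D-1}}.
\]

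\textbf{Step 3: terminate and sum; the main obstacle.} Because $D$ is the maximal length of a path ending at an output neuron, any $u\in\ant^D(v)$ with $v\in\NeuronsOut$ must lie in $\NeuronsIn$ (an incoming edge of a non-input $u$ would give a path of length $D+1$ ending at an output), so $\ant^D(v)\setminus\NeuronsIn=\emptyset$ and $\beta_D^{K^{D-1}}=0$ by the empty-max convention. Next, each $\alpha_d^{K^{d-1}}$ is bounded by $\Theta:=\E_{\eps}\gPeeling\big(r\max_{v\in\NeuronsOut,\,m\le K^{D-1}}\max_{u\in\NeuronsIn\cup\{\biasNeuron\}}|\sum_i\eps_{i,v,m}x_{i,u}|\big)$, since enlarging the $u$-index set to all of $\NeuronsIn\cup\{\biasNeuron\}$ and increasing the number of independent copies from $K^{d-1}$ to $K^{D-1}\ge K^{d-1}$ only increases the argument of the non-decreasing $\gPeeling$ (extra copies leave the marginal law unchanged). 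Hence
\[
\E_{\eps}\gPeeling\Big(\sup_{\param}\sum_{i,v}\eps_{i,v}v(\param,x_i)\Big)\le\Theta\sum_{d=1}^{D}(3+2P)^{d-1}=\Theta\,\frac{(3+2P)^{D}-1}{2+2P}\le\frac{(3+2P)^{D}}{2+2P}\,\Theta,
\]
which is the claimed bound. I expect no genuinely hard step here: the delicate points are purely bookkeeping — reconciling the global $\NeuronsOut$ weight budget used by \Cref{lem:PeelingNOut} with the per-neuron budget $\le1$ used by \Cref{lem:PeelingInduction} (both delivered by the normalization of Step~1), correctly propagating the factor $K$ on the copy count at every peeling, and justifying the termination at depth $D$. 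The real content, namely obtaining a constant $3+2P$ per step rather than an exponential-in-depth blow-up and absorbing identity and $*$-max-pooling neurons, is already encapsulated in \Cref{lem:PeelingInduction,lem:ContractionLemma}.
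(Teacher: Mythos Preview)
Your proposal is correct and follows essentially the same route as the paper: normalize via \Cref{alg:NormalizeAlgo} to obtain the weight budgets required by \Cref{lem:PeelingNOut} and \Cref{lem:PeelingInduction}, induct on the peeling depth to obtain $\sum_{d=1}^{D}(3+2P)^{d-1}\alpha_d^{K^{d-1}}+(3+2P)^{D-1}\beta_D^{K^{D-1}}$, kill $\beta_D$ since $\ant^D(v)\setminus\NeuronsIn=\emptyset$, and uniformly bound the $\alpha$-terms before summing the geometric series. Your explicit remarks that $\bias_v=0$ is preserved for $*$-max-pooling neurons under normalization, and that one applies \Cref{lem:PeelingInduction} with $x\mapsto\gPeeling(rx)$, are details the paper leaves implicit but are exactly the intended mechanism.
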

\begin{proof}[Proof of \Cref{thm:Peeling}] Without loss of generality, we can replace $\Param$ by its image under \Cref{alg:NormalizeAlgo} with $q=1$, as \Cref{alg:NormalizeAlgo} does not change the associated function $R_\param$ nor the path-norm $\|\Phi(\param)\|_1$ (\Cref{lem:AlgNormalization}) so that we still have $\|\Phi(\param)\|_1\leq r$ and the supremum over $\param\in\Param$ on the left-hand side can be taken over rescaled parameters. By \Cref{lem:AlgNormalization}, the parameters are $1$-normalized (\Cref{def:NormalizedParameters}), so we will be able to use \Cref{lem:PeelingNOut} and \Cref{lem:PeelingInduction}. Indeed, $1$-normalized parameters $\param$ satisfy $\sum_{v\in\NeuronsOut} \|\paramto{v}\|_1 + |\bias_v| = \|\Phi(\param)\|_1\leq r$ so \Cref{lem:PeelingNOut} applies. We also have $\|\paramto{v}\|_1 + |\bias_v|\leq 1$ for every  $v\notin\NeuronsIn\cup\NeuronsOut$, by definition of $1$-normalization, so \Cref{lem:PeelingInduction} also applies.

By induction on $d\geq 1$, we prove that (highlighting in \orange{orange} what is important)
\begin{align*}
    & \E_{\eps}\gPeeling\left(\sup\limits_{\param\in\Param} \sum_{\substack{i=1,\dots,n,\\v\in\NeuronsOut}} \eps_{i,v} v(\param,x_i)\right) \\
    & \leq \sum_{\ell=1}^d (3+2P)^{\ell-1} \E_{\eps}\gPeeling\left(r\max_{\substack{v\in\NeuronsOut, \\ m=1,\dots,K^{\ell-1}}} \max_{u\in(\ant^\ell(v)\orange{\cap \NeuronsIn})\cup\{\biasNeuron\}} \left|\sum_{i=1}^n \eps_{i,v,m} \orange{x_{i,u}}\right| \right) \\
    & + (3+2P)^{d-1} \E_{\eps}\gPeeling\left(r\max_{\substack{v\in\NeuronsOut, \\ m=1,\dots,K^{d-1}}} \max_{u\in\ant^d(v)\orange{\setminus\NeuronsIn}} \sup\limits_{\param\in\Param} \left|\sum_{i=1}^n \eps_{i,v,m} \orange{u(\param,x_i)}\right|\right),
\end{align*}
with the same convention as in \Cref{lem:PeelingNOut} for maxima over empty sets. This is true for $d=1$ by \Cref{lem:PeelingNOut}. The induction step is verified using \Cref{lem:PeelingInduction}. This concludes the induction. Applying the result for $d=D$, and since $\ant^D(v)\setminus\NeuronsIn=\emptyset$, we get:
\begin{align*}
    & \E_{\eps}\gPeeling\left(\sup\limits_{\param\in\Param} \sum_{\substack{i=1,\dots,n,\\v\in\NeuronsOut}} \eps_{i,v} v(\param,x_i)\right) \\
    & \leq \sum_{d=1}^D (3+2P)^{d-1} \E_{\eps}\gPeeling\left(r\max_{\substack{v\in\NeuronsOut, \\ m=1,\dots,K^{d-1}}} \max_{u\in(\ant^d(v)\cap \NeuronsIn)\cup\{\biasNeuron\}} \left|\sum_{i=1}^n \eps_{i,v,m} x_{i,u}\right| \right).
\end{align*}
We can only increase the right-hand side by considering maximum over all $u\in\NeuronsIn\cup\{\biasNeuron\}$ and by adding independent copies indexed from $m=1$ to $m=K^{D-1}$. Moreover, $\sum_{d=1}^D (3+2P)^{d-1} = ((3+2P)^D-1)/(2+2P)$. This shows the final bound:
\begin{align*}
    & \E_{\eps}\gPeeling\left( \sup\limits_{\param\in\Param} \sum_{\substack{i=1,\dots,n,\\v\in\NeuronsOut}} \eps_{i,v} v(\param,x_i)\right) \\
    & \leq \frac{(3+2P)^D}{2+2P} \E_{\eps}\gPeeling\left( r\max_{\substack{v\in\NeuronsOut, \\ m=1,\dots,K^{D-1}}} \max_{u\in\NeuronsIn\cup\{\biasNeuron\}} \left|\sum_{i=1}^n \eps_{i,v,m} x_{i,u}\right| \right).
\end{align*}
\end{proof}

\section{Details to derive the generalization bound (\Cref{thm:GeneralizationBound})}\label{app:DetailsProofGeneralizationBound}
\begin{proof}[Proof of \Cref{thm:GeneralizationBound}]
The proof is given directly in the general case with nonzero biases on every $v\notin\NeuronSet_{\pool}$ and uses an additional input neuron $\biasNeuron$. We highlight along the proof where improved results can be obtained assuming zero biases, yielding \Cref{thm:GeneralizationBound} as a consequence. Define the random matrices $E=(\eps_{i,v})_{i,v}\in\R^{n\times \dout}$ and $R(\param,\rv{X})=(v(\param,\rv{X}_i))_{i,v}\in\R^{n\times \dout}$ so that $\inner{E}{R(\param,\rv{X})} = \sum_{i,v} \eps_{i,v} (R_{\param}(\rv{X}_i))_{v}$. It holds:
\begin{align*}
            \E_{\rv{Z}} \textrm{ $\ell$-generalization error of $\hatparam(\rv{Z})$} & \leq \frac{2}{n} \E_{\rv{Z}, \eps}\left( \sup_{\param} \sum_{i=1}^n \eps_i\ell\left(R_{\param}(\rv{X}_i), \rv{Y}_i\right)\right)\\
            & \leq \frac{2\sqrt{2}L}{n} \E_{\rv{Z}, \eps}\left( \sup\limits_{\param} \inner{E}{R(\param,\rv{X})}\right).
\end{align*}
The first inequality is the symmetrization property given by \citet[Theorem 26.3]{ShalevShwartz14UnderstandingML}, and the second inequality is the vector-valued contraction property given by \citet{Maurer16VectorContraction}. These are the relevant versions of very classical arguments that are widely used to reduce the problem to the Rademacher complexity of the model \citep[Propositions 4.2 and 4.3]{BachBook}\citep[Equations (4.17) and (4.18)]{Wainwright19HDS}\citep[Proof of Theorem 8]{Bartlett02RademacherComplexity}\citep[Theorem 26.3]{ShalevShwartz14UnderstandingML}\citep[Equation (4.20)]{LedouxTalagrand91ProbaBanachSpaces}. In particular, this step has nothing specific with neural networks. Note that the assumption on the loss is used for the second inequality.

We now condition on $\rv{Z}=(\rv{X},\rv{Y})$ and denote $\E_{\eps}$ the conditional expectation. For any random variable $\lambda(\rv{Z})>0$ measurable in $\rv{Z}$, it holds
\begin{align*}
    \E_{\eps}\left(\sup\limits_{\param} \inner{E}{R(\param,\rv{X})}\right) = & \frac{1}{\lambda(\rv{Z})}\log \exp\left(\lambda(\rv{Z}) \E_{\eps}\left( \sup\limits_{\param} \inner{E}{R(\param,\rv{X})}\right)\right) \\
    \underset{\textrm{$\lambda$ measurable in $\rv{Z}$}}{=} & \frac{1}{\lambda(\rv{Z})}\log \exp\left(\E_{\eps}\left(\lambda(\rv{Z}) \sup\limits_{\param} \inner{E}{R(\param,\rv{X})}\right)\right) \\
    \underset{\textrm{Jensen}}{\leq} & \frac{1}{\lambda(\rv{Z})}\log \E_{\eps}\exp\left(\lambda(\rv{Z}) \sup\limits_{\param} \inner{E}{R(\param,\rv{X})}\right).
\end{align*}
For $z=((x_i,y_i))_{i=1}^n\in(\R^\din\times\R^\dout)^n$, denote 
\[
e(z) = \E_{\eps}\exp\left(\lambda(z) \sup\limits_{\param} \inner{E}{R(\param,x)}\right).
\]
Since $\rv{Z}$ is independent of $\eps$, it holds
\[
\E_{\eps}\exp\left(\lambda(\rv{Z}) \sup\limits_{\param} \inner{E}{R(\param,\rv{X})}\right) = e(\rv{Z}).
\]
Denote $r=\sup_{\param\in\Param} \|\Phi(\param)\|_1$. For $z$ as above, simply denote $\lambda:=\lambda(z)$. Since $\NeuronsIn\cap\NeuronsOut=\emptyset$ and the biases of $*$-max-pooling neurons are null, the peeling argument given by \Cref{thm:Peeling} for $\gPeeling:t\in\R\mapsto \exp(\lambda t)$ guarantees:
\[
    e(z) \leq  
    \frac{(3+2P)^D}{2+2P}\E_{\eps}\exp\left(\lambda r\max_{\substack{v\in\NeuronsOut, \\ m=1,\dots,K^{D-1}}} \max_{u\in\NeuronsIn\cup\{\biasNeuron\}} \left|\sum_{i=1}^n \eps_{i,v,m} x_{i,u}\right| \right),
\]
where $x_{i,u}$ is coordinate $u$ of vector $x_i\in\R^\din$, and where $\biasNeuron$  is an added neuron for which we set by convention $x_{\biasNeuron}=1$ for any input $x$. It is easy to check that the same bound holds true with a maximum only over $u\in\NeuronsIn$ (not considering $\biasNeuron$ in the maximum) when all biases are constrained to be null. In such a setting, all the $\max_{u\in \NeuronsIn\cup\{\biasNeuron\}}$ below can be replaced by $\max_{u\in \NeuronsIn}$.  Denote
\[
\sigma(x) := \max_{u\in \NeuronsIn\cup\{\biasNeuron\}} \left(\sum_{i=1}^n x_{i,u}^2\right)^{1/2} \geq \sqrt{n}.
\]
Using \Cref{lem:RadInputBound}, it holds
\begin{align*}
    \E_{\eps}\exp\left(\lambda r \max_{\substack{v\in \NeuronsOut, \\ u\in\NeuronsIn\cup\{\biasNeuron\}, \\ m=1,\dots,K^{D-1}}} \left|\sum_{i=1}^n \eps_{i,v,m} (\rv{X}_i)_{u}\right| \right) \leq 2K^{D-1}(\din+1)\dout \exp\left(\frac{(r \lambda(z)\sigma(x))^2}{2}\right).
\end{align*}
When biases are constrained to be null, $\din+1$ is replaced by $\din$. Putting everything together, we get:
\begin{align*}
    \E_{\eps}\left(\sup\limits_{\param} \inner{E}{R(\param,\rv{X})}\right) = e(\rv{Z}) & \leq \left(\frac{1}{\lambda}\log(C_1) + \lambda(\rv{Z}) C_2(\rv{X})\right)
\end{align*}
with
\[
C_1 = 2K^{D-1}(\din+1)\dout \times \frac{(3+2P)^D}{2+2P} = \frac{3+2P}{1+P}((3+2P)K)^{D-1}(\din+1)\dout
\]
(again with $\din+1$ replaced by $\din$ when all biases are null) and
\[
C_2(\rv{X}) = \frac{1}{2}(r\sigma({\rv{X}}))^2.
\]
Choosing $\lambda(\rv{Z})=\sqrt{\frac{\log(C_1)}{C_2(\rv{X})}}$ yields:
\begin{align*}
    & \E_{\eps}\left(\sup\limits_{\param} \inner{E}{R(\param,\rv{X})}\right) \leq 2\sqrt{\log(C_1)C_2(\rv{X})} \\
    & \leq \underbrace{\sqrt{2}\sigma(\rv{X}) r}_{=2\sqrt{C_2(\rv{X})}} \underbrace{\left(\log\left(\frac{3+2P}{1+P}(\din+1)\dout\right) + D\log\left((3+2P)K\right)\right)^{1/2}}_{\geq \sqrt{\log(C_1)}}
\end{align*}
with $\din+1$ replaced by $\din$ when all biases are null.
Taking the expectation on both sides over $\rv{Z}$, and multiplying this by 
$\frac{2\sqrt{2}L}{n}$ yields \Cref{thm:GeneralizationBound}.
\end{proof}

The next lemma is classical \citep[Section 7.1]{Golowich18GeneralizationBound} and is here only for completeness.

\begin{lemma}\label{lem:RadInputBound}
    For any $d,k\in\Ns$ and $\lambda>0$, it holds
  \[
  \E_{\eps}\exp\left(\lambda \max_{\substack{m=1,\dots,k, \\ u=1,\dots, d}}\left|\sum_{i=1}^n \eps_{i,m} (\rv{X}_i)_{u}\right| \right) \leq 2kd\max_{u=1,\dots,d}\exp\left(\frac{\lambda^2}{2} \sum_{i=1}^n (\rv{X}_i)_{u}^2\right).
 \]
\end{lemma}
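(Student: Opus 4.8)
The plan is to prove Lemma~\ref{lem:RadInputBound} by a standard union bound over the finitely many coordinates, combined with the subgaussian behaviour of Rademacher sums (Hoeffding's lemma). First I would note that, since $t\mapsto e^{\lambda t}$ is nonnegative and increasing, for any finite family of reals $(Y_j)_j$ one has $\exp(\lambda\max_j|Y_j|)\le\sum_j\exp(\lambda|Y_j|)\le\sum_j\big(\exp(\lambda Y_j)+\exp(-\lambda Y_j)\big)$. Applying this with the $kd$ quantities $Y_{m,u}:=\sum_{i=1}^n\eps_{i,m}(\rv{X}_i)_u$ for $m=1,\dots,k$ and $u=1,\dots,d$, and taking $\E_\eps$, the problem reduces to bounding each $\E_\eps\exp\big(\pm\lambda\sum_{i=1}^n\eps_{i,m}(\rv{X}_i)_u\big)$ separately.

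Second, I would use independence of the Rademacher variables across $i$ to factor the expectation: $\E_\eps\exp\big(\lambda\sum_{i=1}^n\eps_{i,m}(\rv{X}_i)_u\big)=\prod_{i=1}^n\E_\eps\exp\big(\lambda\eps_{i,m}(\rv{X}_i)_u\big)=\prod_{i=1}^n\cosh\!\big(\lambda(\rv{X}_i)_u\big)$. Then the elementary inequality $\cosh(t)\le e^{t^2/2}$ (Hoeffding's lemma for $\pm1$ variables) gives $\prod_{i=1}^n\cosh\!\big(\lambda(\rv{X}_i)_u\big)\le\exp\!\big(\tfrac{\lambda^2}{2}\sum_{i=1}^n(\rv{X}_i)_u^2\big)$, and the identical bound holds with $-\lambda$ in place of $\lambda$ since $\cosh$ is even.

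Finally, summing over the $kd$ pairs $(m,u)$ and bounding each $\sum_{i=1}^n(\rv{X}_i)_u^2$ by its maximum over $u$ yields $\E_\eps\exp\big(\lambda\max_{m,u}|Y_{m,u}|\big)\le 2kd\max_{u=1,\dots,d}\exp\!\big(\tfrac{\lambda^2}{2}\sum_{i=1}^n(\rv{X}_i)_u^2\big)$, which is exactly the claimed bound. There is essentially no real obstacle here: this is a textbook maximal inequality, and the only points deserving a little care are the factor $2$ produced by symmetrizing the absolute value via $e^{\lambda|Y|}\le e^{\lambda Y}+e^{-\lambda Y}$, and invoking Hoeffding's lemma in the sharp form $\cosh(t)\le e^{t^2/2}$ so that no spurious constant appears.
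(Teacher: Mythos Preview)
Your proof is correct and follows essentially the same approach as the paper: bound the max by a sum over the $kd$ indices, handle the absolute value at the cost of a factor $2$, factor the expectation by independence, and apply $\cosh(t)\le e^{t^2/2}$. The only cosmetic difference is that the paper invokes its \Cref{lem:GetRidOfAbsValues} to remove the absolute value (which here amounts to the same inequality $e^{\lambda|Y|}\le e^{\lambda Y}+e^{-\lambda Y}$ combined with the symmetry of Rademacher variables), whereas you write this step out directly.
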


\begin{proof}
It holds
\[
\E_{\eps}\exp\left(\lambda \max_{\substack{m=1,\dots,k, \\ u=1,\dots, d}}\left|\sum_{i=1}^n \eps_{i,m} (\rv{X}_i)_{u}\right| \right) \leq \sum_{\substack{m=1,\dots,k, \\ u=1,\dots, d}}\E_{\eps}\exp\left(\lambda \left|\sum_{i=1}^n \eps_{i,m} (\rv{X}_i)_{u}\right| \right).
\]
For given $u$ and $m$:
    \begin{multline*}
        \E_{\eps}\exp\left(\lambda \left|\sum_{i=1}^n \eps_{i,m} (\rv{X}_i)_{u}\right| \right)  \underset{\textrm{\Cref{lem:GetRidOfAbsValues}}} \leq  2 \E_{\eps}\exp\left(\lambda \sum_{i=1}^n \eps_{i,m} (\rv{X}_i)_{u}\right) \\ = 2 \prod_{i=1}^n \frac{\exp\left(\lambda (\rv{X}_i)_{u}\right) + \exp\left(- \lambda (\rv{X}_i)_{u}\right)}{2} 
        \leq  2 \exp\left(\frac{\lambda^2}{2} \sum_{i=1}^n (\rv{X}_i)_{u}^2\right)
    \end{multline*}
using $\exp(x)+\exp(-x)\leq 2\exp(x^2/2)$ in the last inequality.
\end{proof}

\section{The cross-entropy loss is Lipschitz}\label{app:CrossEntropy}
\Cref{thm:GeneralizationBound} \emph{applies to the cross-entropy loss with $L=\sqrt{2}$}. To see this, first recall that with $C$ classes, the cross-entropy loss is defined as 
\[
\ell:(x,y)\in\R^{C}\times \{0,1\}^{C}\mapsto -\sum_{c=1}^\dout y_c \log\left(\frac{\exp(x_c)}{\sum_{d} \exp(x_d)}\right).
\]
Consider $y\in\{0,1\}^C$ with exactly one nonzero coordinate and an exponent $p\in[1,\infty]$ with conjugate exponent $p'$ ($1/p+1/p'=1$). For every $x,x'\in\R^C$:
\[
       \ell(x,y) - \ell(x',y) \leq 2^{1/p'} \|x-x'\|_p.
\]
Consider a class $c\in\{1,\dots,C\}$ and take $y\in\{0,1\}^C$ to be a one-hot encoding of $c$ (meaning that $y_{c'} = \mathbbm{1}_{c'=c}$). Consider an exponent $p\in[1,\infty]$ with conjugate exponent $p'$ ($1/p + 1/p'=1$). The function $f:x\mapsto \ell(x,y) =  -\sum_c y_c \log\left(\frac{\exp(x_c)}{\sum_{c'=1}^C \exp(x_{c'})}\right) = - \log\left(\frac{\exp(x_c)}{\sum_{c'=1}^C \exp(x_{c'})}\right)$ is continuously differentiable so that for every $x,x'\in\R^C$:
\[
f(x) - f(x') = \int_{0}^1 \inner{\nabla f(tx + (1-t)x')}{x-x'}dt \leq \sup_{t\in[0,1]} \|\nabla f(tx + (1-t)x')\|_p \|x-x'\|_{p'}.
\]
In order to differentiate $f$, let's start to differentiate $g(x) = \frac{\exp(x_c)}{\sum_{c'=1}^C \exp(x_{c'})}$. Denote $\partial_i$ the partial derivative with respect to coordinate $i$. For $i\neq c$:
\begin{align*}
    \partial_{c} g(x) & = \frac{\exp(x_c) \left(\sum_{c'} \exp(x_{c'})\right) - \exp(x_c) \left(\exp(x_c)\right)}{\left(\sum_{c'} \exp(x_{c'})\right)^2} \\
    & = g(x) \frac{\sum_{c'\neq c} \exp(x_{c'})}{\sum_{c'} \exp(x_{c'})}. \\
    \partial_{i} g(x) & = \frac{0 \left(\sum_{c'} \exp(x_{c'})\right) - \exp(x_c) \left(\exp(x_i)\right)}{\left(\sum_{c'} \exp(x_{c'})\right)^2} \\
    & = g(x) \frac{- \exp(x_i)}{\sum_{c'} \exp(x_{c'})}.
\end{align*}
Since $f(x)=(-\log\circ h)(x)$:
\begin{align*}
    \partial_i f(x) & = - \frac{\partial_i g(x)}{g(x)} \\
    & = \frac{1}{\sum_{c'=1}^C \exp(x_{c'})} \times \left\{\begin{array}{cc}
        - \sum_{{c'}\neq c} e^{x_{c'}} & \textrm{ if }i=c,\\
        e^{x_i} & \textrm{ otherwise}.
    \end{array}\right.
\end{align*}
Thus
\begin{align*}
    \|\nabla f(x)\|_p^p & = \sum_{i=1}^C |\partial_i f(x)|^p \\
    & = \frac{\left(\sum_{{c'}\neq c} \exp({x_{c'}})\right)^p + \sum_{c'\neq c} \exp(x_{c'})^p}{\left(\sum_{c'=1}^C \exp(x_{c'})\right)^p} \\
    & \leq 2 \frac{\left(\sum_{{c'}\neq c} \exp({x_{c'}})\right)^p}{\left(\sum_{c'=1}^C \exp(x_{c'})\right)^p} \\
    & \leq 2 \frac{\left(\sum_{{c'}\neq c} \exp({x_{c'}})\right)^p}{\left(\sum_{c'\neq c} \exp(x_{c'})\right)^p} \\
    & = 2.
\end{align*}
where we used in the first inequality that $\|v\|_p^p \leq \|v\|_1^p$ for any vector $v$. This shows that for every $x,x'\in\R^C$:
\[
    \ell(x,y) - \ell(x',y) \leq 2^{1/p} \|x-x'\|_{p'}.
\]

\section{The top-1 accuracy loss is not Lipschitz}\label{app:Top1Acc}

\Cref{thm:GeneralizationBound} \emph{does not apply to the top-1 accuracy loss} $\ell(\hat{y}, y)=\mathbbm{1}_{\argmax \hat{y} = \argmax y}$ as \Cref{hyp:LossLips} cannot be satisfied by $\ell$. Indeed, it is easy to construct situations where $\hat{y}_1 = R_{\param}(x_1)$ is arbitrarily close to $\hat{y}_2=R_{\param}(x_2)$ with $x_2$ correctly classified, while $x_1$ is not (just take $x_2$ on the boundary decision of the network and $x_1$ on the wrong side of the boundary), so that the left-hand side is equal to $1$ and the right-hand side is arbitrarily small. Thus, there is no finite $L>0$ that could satisfy \Cref{hyp:LossLips}.

\section{The margin-loss is Lipschitz}\label{app:MarginLoss}
For $\hat{y}\in\R^\dout$ and a one-hot encoding $y\in\R^\dout$ of the class $c$ of $x$ (meaning that $y_{c'}=\mathbbm{1}_{c'=c}$ for every $c'$), the margin $M(\hat{y},y)$ is defined by
\[
    M(\hat{y}, y) := [\hat{y}]_c - \max_{c'\neq c} [\hat{y}]_{c'}.
\]
For $\gamma>0$, recall that the $\gamma$-margin-loss is defined by
\begin{equation}\label{eq:defMarginLoss}
    \ell(\hat{y}, y) = \left\{\begin{array}{cc}
    0 & \textrm{if  }\gamma < M(\hat{y}, y), \\
    1 - \frac{M(\hat{y}, y)}{\gamma}  & \textrm{if  }0\leq M(\hat{y}, y)\leq \gamma, \\
    1 & \textrm{if  }M(\hat{y}, y)<0.
\end{array}\right.
\end{equation}

For any class $c$ and one-hot encoding $y$ of $c$, it is known that $\hat{y}\in\R^{\dout}\mapsto M(\hat{y}, y)$ is $2$-Lipschitz with respect to the $L^2$-norm on $\hat{y}$ \citep[Lemma A.3]{Bartlett17SpectralGeneralizationBound}. Moreover, the function 
\[
r\in\R\mapsto \left\{\begin{array}{cc}
    0 & \textrm{if  }r < -\gamma, \\
    1 +\frac{r}{\gamma}  & \textrm{if  }-\gamma\leq  r \leq 0, \\
    1 & \textrm{if  }r>0.
\end{array}\right.
\]
is $\frac{1}{\gamma}$-Lipschitz. By composition, this shows that $\hat{y}\in\R^\dout\mapsto \ell_\gamma(\hat{y},y)$ is $\frac{2}{\gamma}$-Lipschitz with respect to the $L^2$-norm.

\begin{proof}[Proof of \Cref{thm:MarginLoss}]
    Since the labels $\rv{Y}$ are one-hot encodings, we equivalently consider $\rv{Y}$ either in $\R^{\dout}$ or in $\{1,\dots,\dout\}$. It holds \citep[Lemma A.4]{Bartlett17SpectralGeneralizationBound}
\[
    \P\left(\argmax_{c} [R_\param(\rv{X})]_c \neq \rv{Y}\right) \leq \E\left(\ell_\gamma(R_\param(\rv{X}), \rv{Y})\right)
\]
for any $\gamma>0$ and associated $\gamma$-margin-loss $\ell_\gamma$. Thus, considering the generalization error for $\ell_\gamma$:
\begin{align*}
    \P\left(\argmax_{c} [R_\param(\rv{X})]_c \neq \rv{Y}\right) & \leq \underbrace{\frac{1}{n}\sum_{i=1}^n \ell_\gamma\left(R_{\hatparam(\rv{Z})}(\rv{X}_i), \rv{Y}_i\right)}_{= \textrm{ training error of $\hatparam(\rv{Z})$}} + \E_{\rv{Z}} \textrm{ $\ell_\gamma$-generalization error of $\hatparam(\rv{Z})$}.
\end{align*}
By definition of $\ell_\gamma$, the training error of $\hatparam(\rv{Z})$ is at most $\frac{1}{n}\sum_{i=1}^n \mathbbm{1}_{[R_{\hatparam(\rv{Z})}(\rv{X}_i)]_{\rv{Y}_i} \leq \gamma + \max_{c \neq \rv{Y}_i}[R_{\hatparam(\rv{Z})}(\rv{X}_i)]_{c}}$. Moreover, \Cref{thm:GeneralizationBound} can be used to bound the generalization error associated with $\ell_\gamma$ with $L=2/\gamma$. This proves the claim.
\end{proof}

\section{Details on the experiments of \Cref{sec:exp}}\label{app:expes}
\textbf{Details for \Cref{tab:ValueBoundResNets}.} All the experiments are done on ImageNet-1k using $99\%$ of the 1,281,167 images of the training set for training, the other $1\%$ is used for validation. Thus, $n=1268355 = \floor{0.99 \times 1281167}$ in our experiments, $\din=224\times 224 \times 3=150528$, $\dout=1000$. We also estimated $B=2.640000104904175$ by taking the maximum of the $L^\infty$ norms of the training images normalized for inference\footnote{The constant $\sigma$ in \Cref{thm:GeneralizationBound} corresponds to data $\rv{Z}_i$ drawn from the distribution for which we want to evaluate the test error. This is then the data normalized for inference. Thus, the training loss appearing in \Cref{thm:GeneralizationBound} is also evaluated on the training data $\rv{Z}_i$ normalized for inference. In the experiments, we ignore this fact and still evaluate the training loss on the data augmented for training. Moreover, note that it is not possible to recover the training images augmented for training from the images normalized for inference, because cropping is done at random for training. Thus, the real life estimator is not a function of the images $\rv{Z}_i$ normalized for inference, and \Cref{thm:GeneralizationBound} does not apply stricto sensu. This fact is ignored here.}. The PyTorch code for normalization at inference is standard:
\begin{lstlisting}
inference_normalization = transforms.Compose([
        transforms.Resize(256),
        transforms.CenterCrop(224),
        transforms.ToTensor(),
        transforms.Normalize(mean=[0.485, 0.456, 0.406], std=[0.229, 0.224,  0.225]),
    ])
\end{lstlisting}

We consider ResNets. They have a single max-pooling layer of kernel size $3\times 3$ so that $K=9$. The depth is $D=3+\textrm{\# basic blocks}\times \textrm{\# conv per basic block}$, where $3$ accounts for the conv1 layer, the average-pooling layer, the fc layer, and the rest accounts for all the convolutional layers in the basic blocks. Note that the average-pooling layer can be incorporated into the next fc layer as it only contains identity neurons, see the discussion after \Cref{thm:GeneralizationBound}, so we can actually consider $D=2+\textrm{\# basic blocks}\times \textrm{\# conv per basic block}$. \Cref{tab:ResNetsBlocks} details the relevant values related to basic blocks. 
\begin{table}[htbp]
    \centering
    \caption{Number of basic blocks, of convolutional layer per basic blocks and associated $D$ for ResNets \citep[Table 1]{He16ResNets}.}
    \label{tab:ResNetsBlocks}
    \begin{tabular}{|c|c|c|c|c|c|}
        \hline
        ResNet & 18 & 34 & 50 & 101 & 152 \\ 
        \hline
        \# basic blocks & 8 & \multicolumn{2}{c|}{16} & 33 & 50 \\
        \hline
        \# conv per basic block & \multicolumn{2}{c|}{2} & \multicolumn{3}{c|}{3} \\
        \hline
        $D$ & 18 & 34 & 50 & 101 & 152 \\
        \hline
    \end{tabular}
\end{table}

\textbf{Pretrained ResNets.} The PyTorch pretrained weights that have been selected are the ones with the best performance: \lstinline{ResNetX_Weights.IMAGENET1K_V1} for ResNets 18 and 34, and \lstinline{ResNetX_Weights.IMAGENET1K_V2} otherwise. 

\textbf{Choice of $\gamma>0$ for \Cref{thm:MarginLoss}.} In \Cref{eq:GeneBoundMargin}, note that there is a trade-off when choosing $\gamma>0$. Indeed, the first term of the right-hand side is non-decreasing with $\gamma$ while the second one is non-increasing. The first term is simply the proportion of datapoints that are not correctly classified with a margin at least equal to $\gamma$. Defining the margin of input $i$ on parameters $\param$ to be $R_{\param}(\rv{X}_i)_{\rv{Y}_i} - \argmax_{c\neq \rv{Y}_i} R_\param(\rv{X}_i)_{c}$, this means that the first term is (approximately) equal to $q$ if $\gamma=\gamma(q)$ is the $q$-quantile of the distribution of the margins over the training set. 

Note that since the second term in \Cref{eq:GeneBoundMargin} is of order $1/\sqrt{n}$, it would be desirable to choose the $1/\sqrt{n}$-quantile (up to a constant) for $\gamma$. However, this is not possible in practice as soon as the training top-1 accuracy is too large compared to $1/\sqrt{n}$ (eg. on ImageNet). Indeed, if the training top-1 error is equal to $e\in[0,1]$, then at least a proportion $e$ of the data margins should be negative\footnote{A data margin is negative if and only if it is misclassified.} so that any $q$-quantile with $q<e$ is negative and cannot be considered for \Cref{thm:MarginLoss}

The distribution of the margins on the training set of ImageNet can be found in \Cref{fig:MarginsPretrained}. The maximum training margin is roughly of size $30$, which is insufficient to compensate the size of the $L^1$ path-norm of pretrained ResNets reported in \Cref{tab:IncreasingDepth}. For $\gamma>30$, the first term of the right-hand side of \Cref{thm:MarginLoss} is greater than one, so that the bound is not informative. This shows that there is no possible choice for $\gamma>0$ that makes the bound informative on these pretrained ResNets. \Cref{tab:QuantilesMargin} reports a quantile for these pretrained ResNets.

\begin{table}[htbp]
    \centering
    \caption{The $q$-quantile $\gamma(q)$ for $q=\frac{1}{3}e + \frac{2}{3}$, with $e$ being the top-1 error, on ImageNet, of pretrained ResNets available on PyTorch.}
    \label{tab:QuantilesMargin}
    \begin{tabular}{|c|c|c|c|c|c|}
         \hline
         ResNet & 18 & 34 & 50 & 101 & 152  \\
         \hline
         $\gamma(q)$ & $5.0$ & $5.6$ & $4.2$ & $5.6$ & $5.8$ \\ 
         \hline
    \end{tabular}
\end{table}

\begin{figure}
\centering
\begin{subfigure}{0.49\textwidth}
\centering
\includegraphics[width = \textwidth]{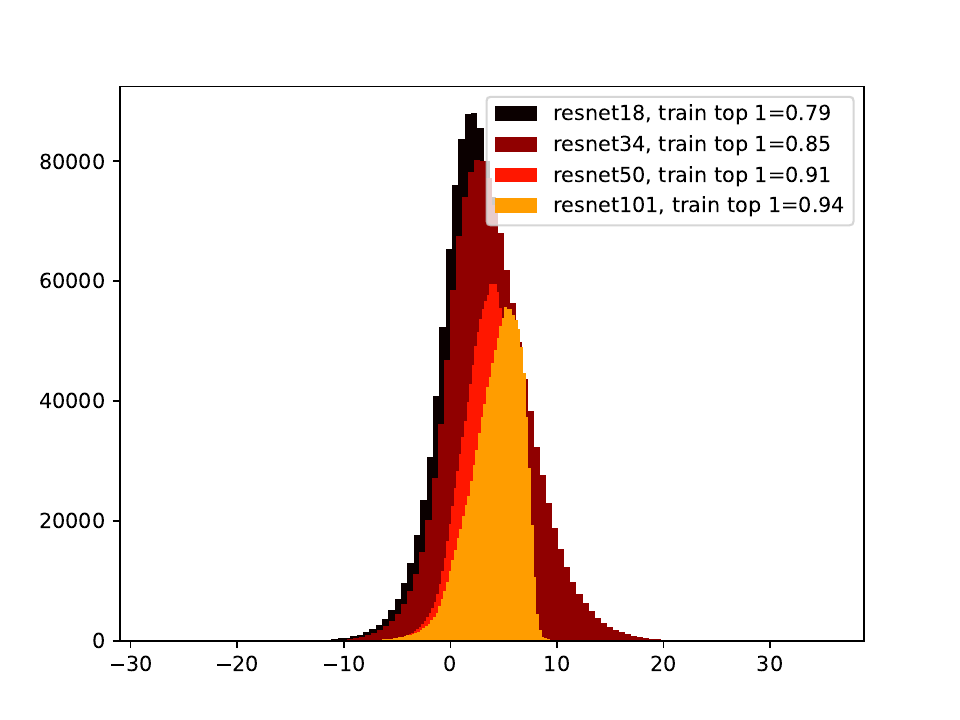}
\end{subfigure}
\begin{subfigure}{0.49\textwidth}
\centering
\includegraphics[width = \textwidth]{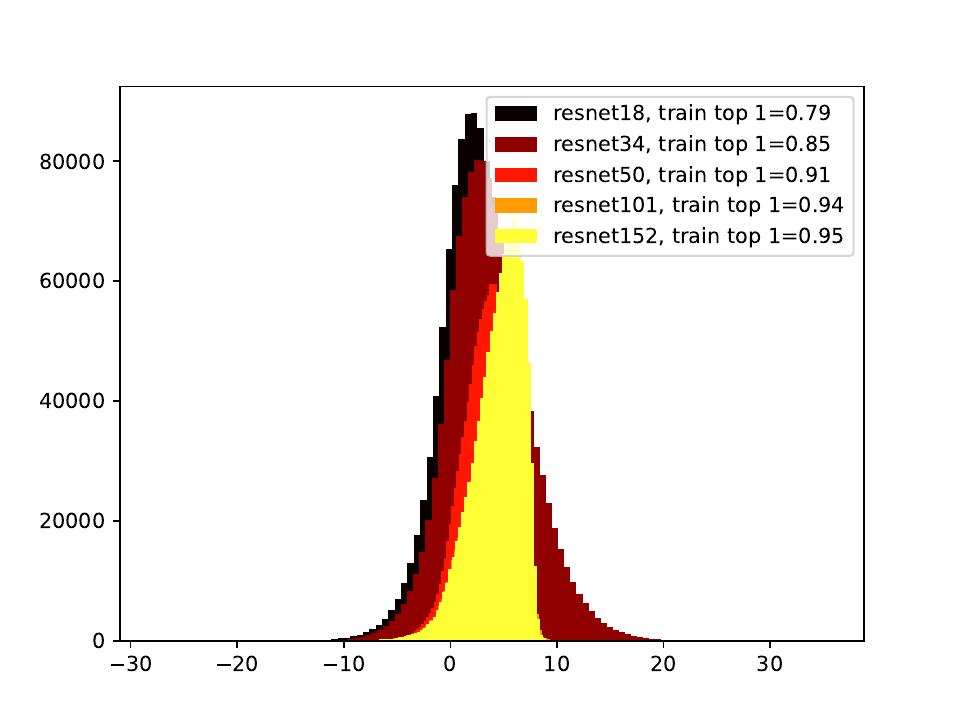}
\end{subfigure}
\caption{Distribution of the margins on the training set of ImageNet, with the pretrained ResNets available on PyTorch.}
\label{fig:MarginsPretrained}
\end{figure}

\textbf{Details for sparse networks.} ResNet18 is trained on $99\%$ of ImageNet with a single GPU using SGD for $90$ epochs, learning rate $0.1$, weight-decay $0.0001$, batch size $1024$, and a multi-step scheduler where the learning rate is divided by $10$ at epochs $30$, $60$ and $80$. The epoch out of the $90$ ones with maximum validation top-1 accuracy is considered as the final epoch. Pruning is done iteratively accordingly to \citet{Frankle21MissingTheMark}. We prune $20\%$ of the remaining weights of each convolutional layer, and $10\%$ of the final fully connected layer, at each pruning iteration, save the mask and rewind the weights to their values after the first $5$ epochs of the dense network, and train for $85$ remaining epochs, before pruning again etc. Results for a single run are shown in \Cref{fig:SparseNetworks}.

\textbf{Details for increasing the train size.} Instead of training on $99\%$ of ImageNet ($n=1268355$), we trained a ResNet18 on $n/2^k$ samples drawn at random, for $1\leq k\leq 5$. For each given $k$, the results are averaged over 3 seeds. The hyperparameters are the same as for sparse networks (except that we do not perform any pruning here): 90 epochs etc. Results are in \Cref{fig:IncreasingTrainSize}.
\begin{figure}
\centering
\begin{subfigure}{0.49\textwidth}
\centering
\includegraphics[width = \textwidth]{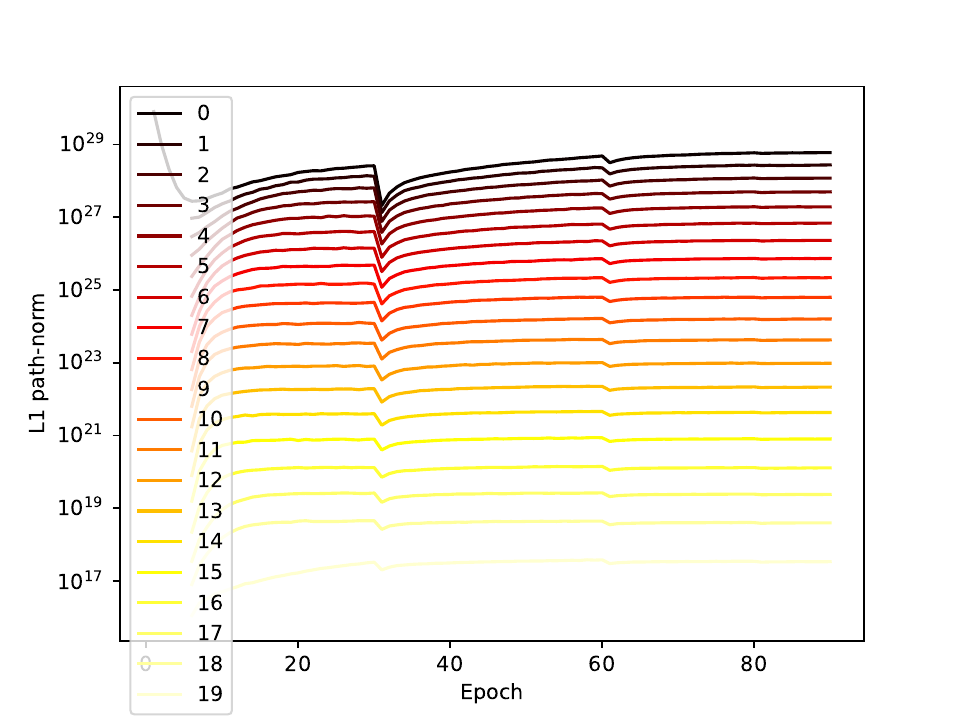}
\end{subfigure}
\begin{subfigure}{0.49\textwidth}
\centering
\includegraphics[width = \textwidth]{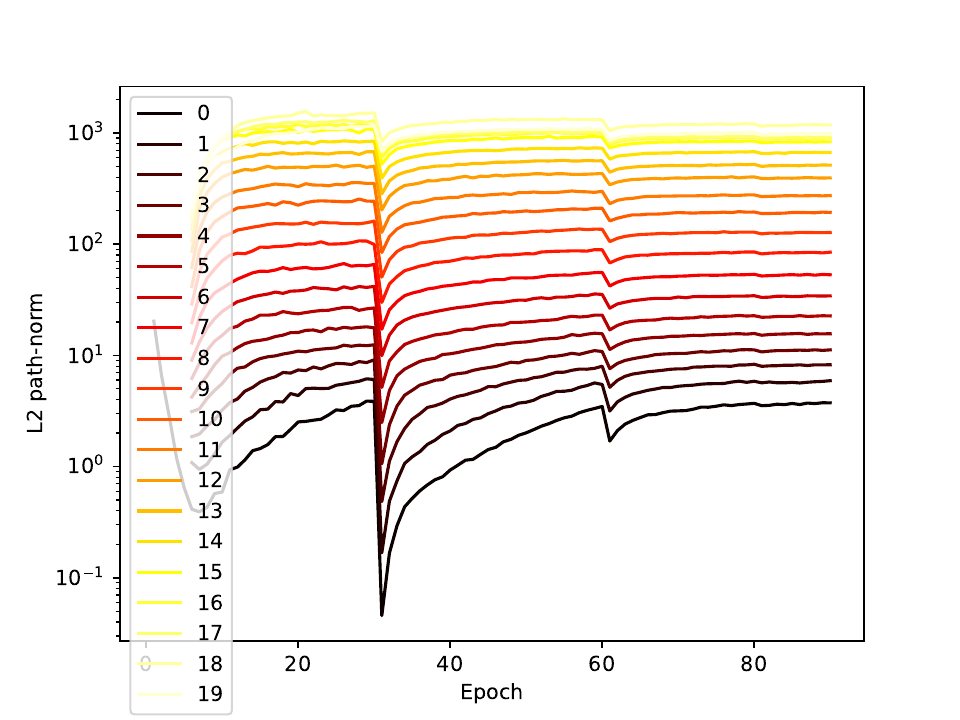}
\end{subfigure}
\begin{subfigure}{0.49\textwidth}
\centering
\includegraphics[width = \textwidth]{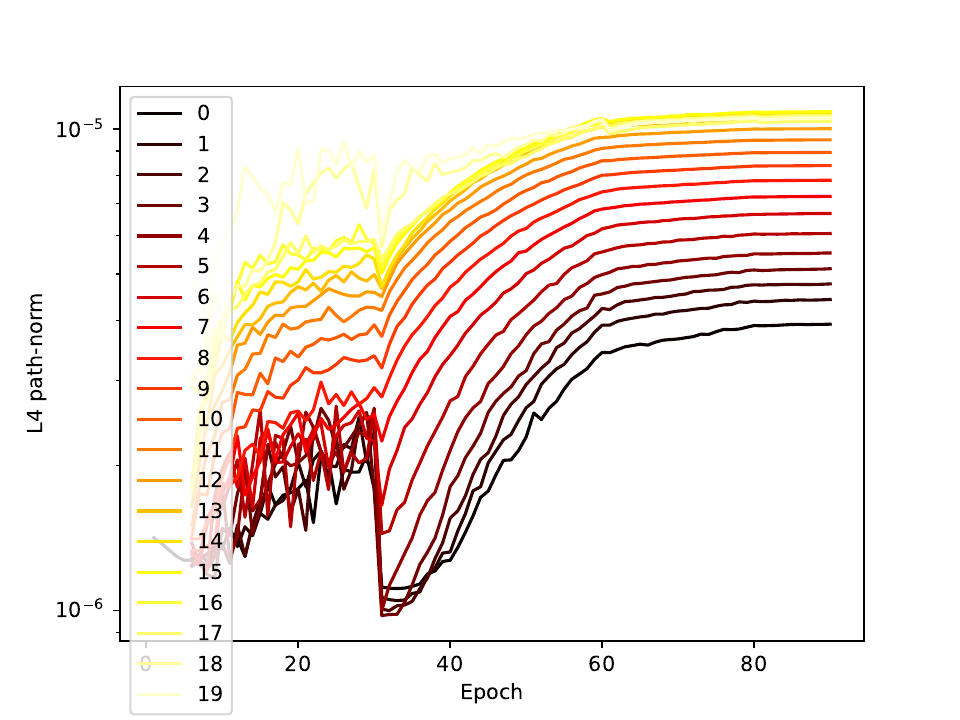}
\end{subfigure}
\begin{subfigure}{0.49\textwidth}
\centering
\includegraphics[width = \textwidth]{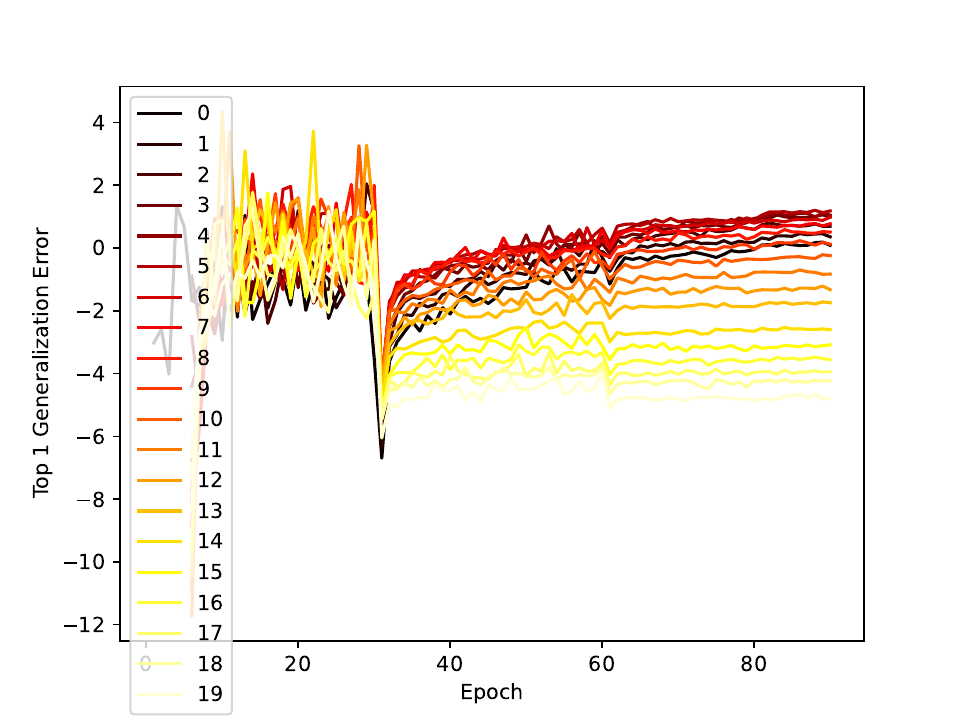}
\end{subfigure}
\begin{subfigure}{0.49\textwidth}
\centering
\includegraphics[width = \textwidth]{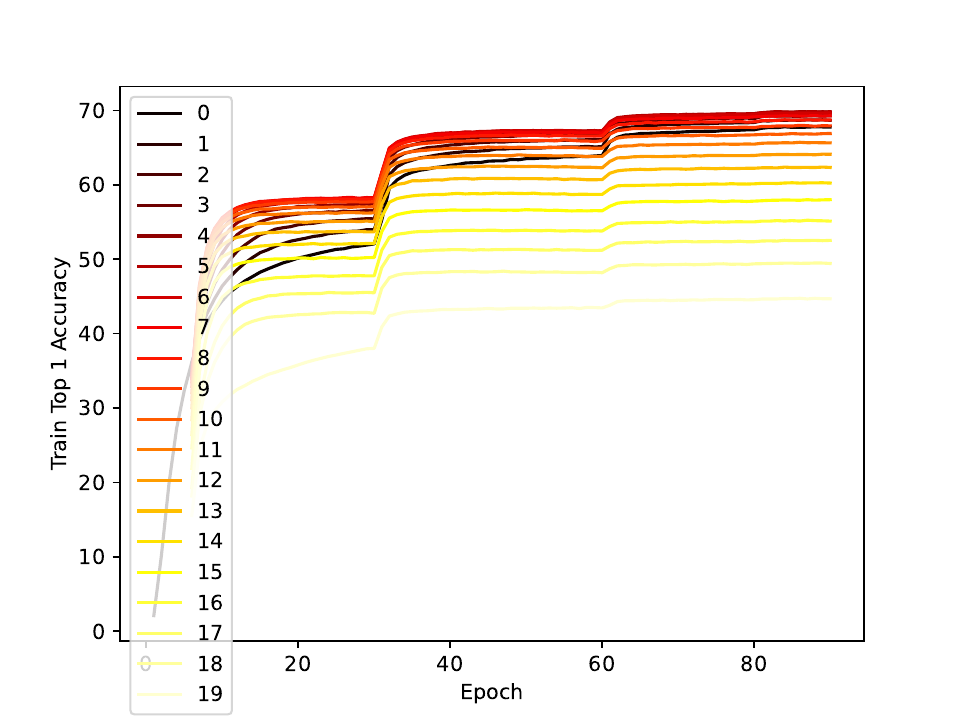}
\end{subfigure}
\begin{subfigure}{0.49\textwidth}
\centering
\includegraphics[width = \textwidth]{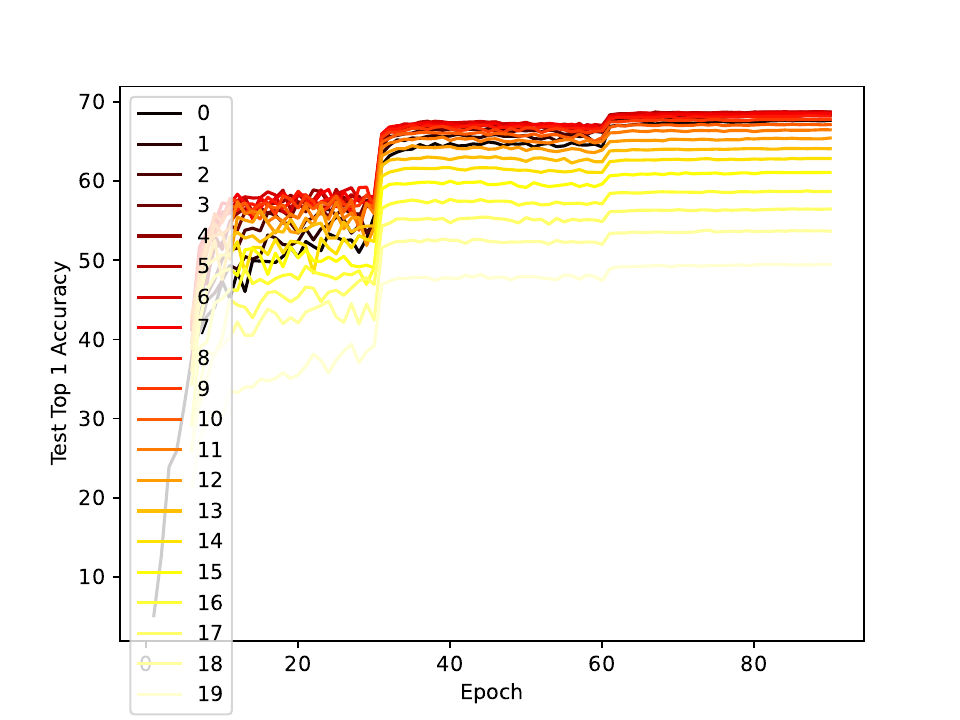}
\end{subfigure}
\caption{$L^q$ path-norm ($q=1,2,4$), test top-1 accuracy, training top-1 accuracy, and the top-1 generalization error (difference between test top-1 and train top-1) during the training of a ResNet18 on ImageNet. The pruning iteration is indicated in legend, with $0$ corresponding to the dense network. The color also indicates the degree of sparsity: from dense (black) to extremely sparse (yellow).}
\label{fig:SparseNetworks}
\end{figure}
\begin{figure}
\centering
\begin{subfigure}{0.88\textwidth}
\centering
\includegraphics[width = \textwidth]{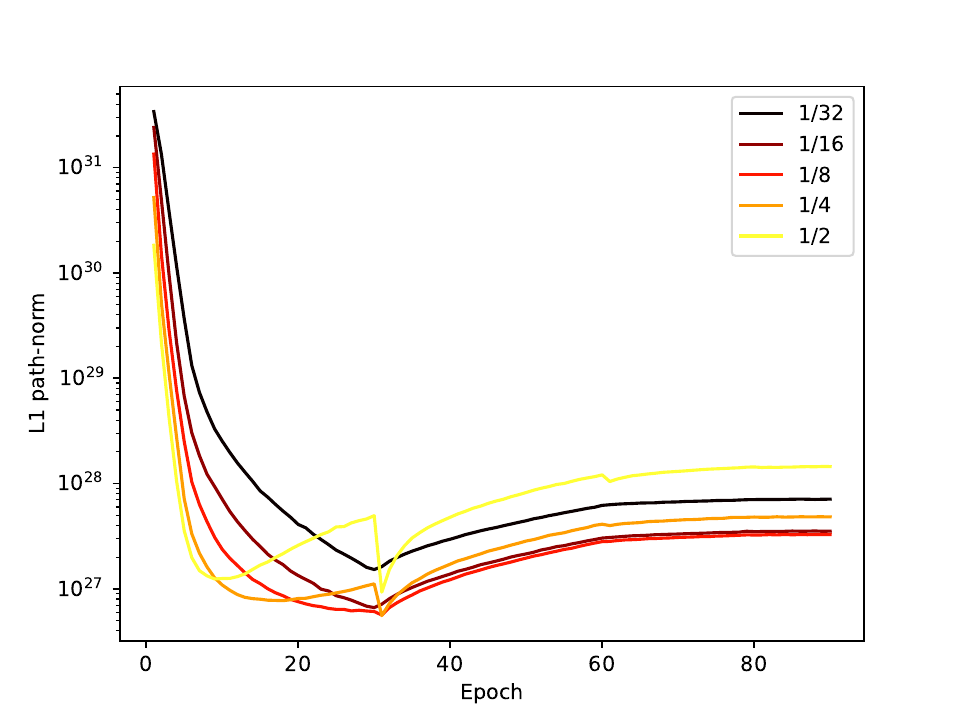}
\end{subfigure}
\begin{subfigure}{0.49\textwidth}
\centering
\includegraphics[width = \textwidth]{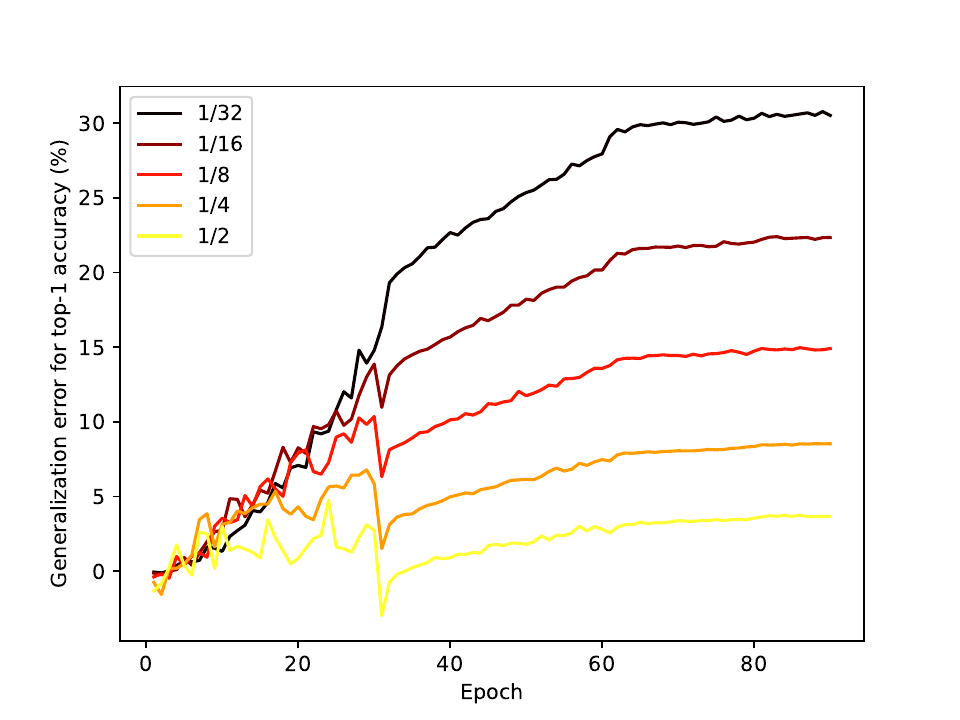}
\end{subfigure}
\begin{subfigure}{0.49\textwidth}
\centering
\includegraphics[width = \textwidth]{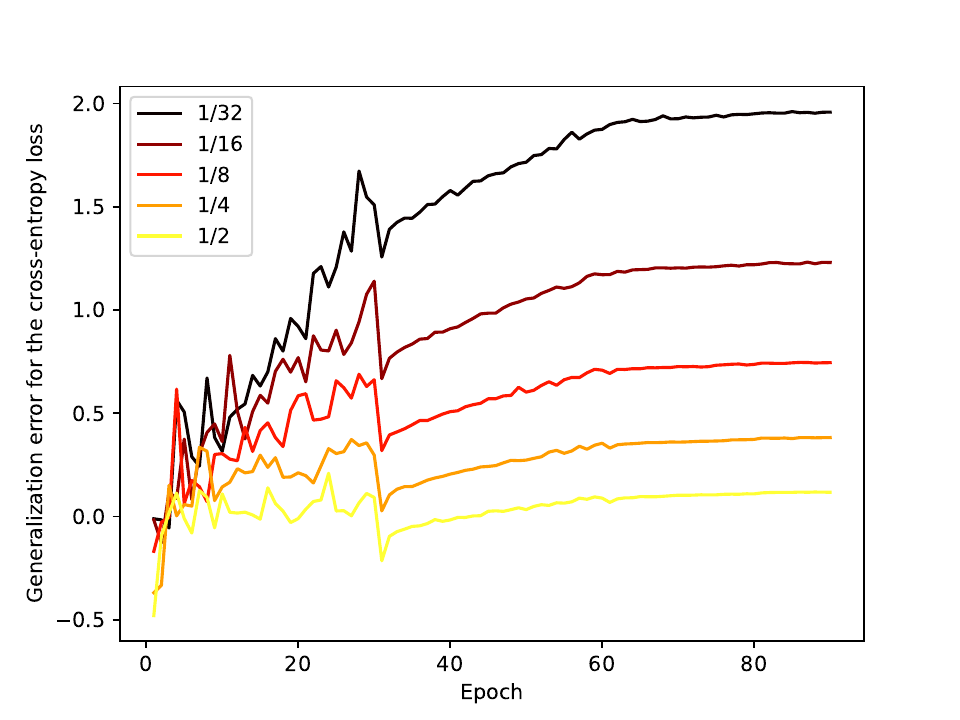}
\end{subfigure}
\caption{$L^1$ path-norm, and empirical generalization errors for both the top-1 accuracy and the cross-entropy during the training of a ResNet18 on a subset of the training images of ImageNet. The legend indicates the size of the subset considered, \eg $1/m$ corresponds to $1/m$ of $99\%$ of ImageNet, leaving the other $1\%$ out for validation. The color also indicates the size of the subset: from small (black) to large (yellow).}
\label{fig:IncreasingTrainSize}
\end{figure}
\section{Main related works}\label{sec:RelatedWorks}
Given the extent of the literature on generalization bounds, we apologize in advance for papers the reader may find missing below. 

\textbf{Previous definitions of the path-lifting and the path-activations} In the work of \citet[Section 5.1]{Kawaguchi17GeneralizationInDL} the path-lifting and the path-activations are evoked in the case of a ReLU DAG with max-pooling neurons {\em and no biases}, but with no explicit definitions. \Cref{def:PhiActivation} gives a formal definition for these objects, and extend it to the case where there are biases, which requires extending the path-lifting to paths starting from neurons that are not input neurons. Moreover, \Cref{def:PhiActivation} extends it to arbitrary $k$-max-pooling neurons (classical max-pooling neurons correspond to $k=1$). 

Note also that the formula \Cref{eq:ForwardWithPhiAppendix} is stated in the specific case of \citet[Section 5.1]{Kawaguchi17GeneralizationInDL} (as an explicit sum rather than an inner product), without proof since the objects are not explicitly defined in \citet{Kawaguchi17GeneralizationInDL}.

A formal definition of the path-lifting is given in the specific case of layered fully-connected ReLU neural networks with biases in the work of \citet[Definition 6]{Stock22Embedding}. Moreover, it is proved that \Cref{eq:ForwardWithPhiAppendix} holds {\em in this specific case} in \citet[Corollary 3]{Stock22Embedding}. \Cref{def:PhiActivation} and \Cref{eq:ForwardWithPhiAppendix} generalize the latter to an arbitrary DAG with $*$-max-pooling or identity neurons (allowing in particular for skip connections, max-pooling and average-pooling).

The rest of the works we are aware of only define and consider the norm of the path-lifting, but not the lifting itself. The most general setting being the one of \citet{Neyshabur15NormBasedControls} with a general DAG, {\em but without max or identity neurons, nor biases}. Not defining the path-lifting and the path-activations makes notations arguably heavier since \Cref{eq:ForwardWithPhiAppendix} is then always written with an explicit sum over all paths, with explicit product of weights along each path, and so on.

\textbf{Previous generalization bounds based on path-norm} See \Cref{tab:GeneralizationBounds} for a comparison. \Cref{app:MoreRelatedWorks} tackles some other bounds that do not appear in \Cref{tab:GeneralizationBounds}.

\textbf{Empirical evaluation of path-norm} The formula given in \Cref{thm:ComputePathNorm} is the first one to fully encompass modern networks with biases, average/$*$-max-pooling, and skip connections, such as ResNets. An equivalent formula is stated for  \emph{layered fully-connected} ReLU networks {\em without biases} (and {\em no pooling/skip connections}) in \citet[Appendix C.6.5]{GKDziugaite20SearchRobustMeasuresGeneralization} and \citet[Equations (43) and (44)]{Jiang20FantasticGeneralizationMeasures} but without proof (and only for $L^2$-path-norm instead of general mixed $L^{q,r}$ path-norm). Actually, {\em this equivalent formula turns out to be false when there are $*$-max-pooling neurons} as one must replace $*$-max-pooling neurons with identity ones, see \Cref{thm:ComputePathNorm}. Care must also be taken with average-pooling neurons that must be rescaled by considering them as identity neurons.

We could not find reported numerical values of the path-norm except for toy examples \citep{PhDGKDZ18, PhDFurusho20Lipschitz, Zheng19BasisPathNorm}. Details are in \Cref{app:MoreRelatedWorks}.

\Cref{app:MoreRelatedWorks} also discusses 1) inherent limitations of \Cref{thm:GeneralizationBound} which are common to many generalization bounds, and 2) the applicability of \Cref{thm:GeneralizationBound} compared to PAC-Bayes bounds.

\section{More Related Works}\label{app:MoreRelatedWorks}

\textbf{More generalization bounds using path-norm} In \citet{E22Barron} is established an additional bound to the one appearing in \Cref{tab:GeneralizationBounds}, for a class of functions and a complexity measure that are related to the \emph{infinite-depth} limits of residual networks and the path-norm. However, it is unclear how this result implies anything for actual neural networks with the actual path-norm \footnote{\citet{E22Barron} starts from the fact that the infinite-depth limits of residual networks can be  characterized with partial differential equations. Thus \citet{E22Barron} establishes a bound for functions characterized by similar, but different, partial differential equations, using what seems to be an analogue of path-norm for these new functions. However, even if the characterizations of these functions are closed, as it is said in \citet{E22Barron}, "it is unclear how the two spaces are related".}. 

The bound in \citet{Zheng19BasisPathNorm} only holds for layered fully-connected ReLU neural networks ({\em no max or identity neurons}) with {\em no biases} and {\em it grows exponentially} with the depth of the network. It is not included in \Cref{tab:GeneralizationBounds} because it requires an additional assumption: the coordinates of the path-lifting must not only be bounded from above, but also \emph{from below}. The reason for this assumption is not discussed in \citet{Zheng19BasisPathNorm}, and it is unclear whether this is at all desirable, since a small path-norm can only be better for generalization in light of \Cref{thm:GeneralizationBound}.

Theorem 4.3 in \citet{Golowich18GeneralizationBound} is a bound that holds for layered fully-connected ReLU neural networks {\em with no biases (no max and no identity neurons)} and it {\em depends on the product of operator norms of the layers}. It has the merit of having no dependence on the size of the architecture (depth, width, number of neurons etc.). However, it  requires an additional assumption: each layer must have an operator norm bounded from below, so that it only applies to a restricted set of layered fully-connected networks. Moreover, it is unclear whether such an assumption is desirable: there are networks with arbitrary small operator norms that realize the zero function, and the latter has a generalization error equal to zero.

Theorem 8 in \citet{Kawaguchi17GeneralizationInDL} gives a generalization bound for {\em scalar-valued} ($\dout=1$) models with an output of the form $\inner{\Phi(\param)}{\activation(\param',x)x}$ for some specific parameters $\param,\param'$ that have no reason to be equal. This is orthogonal to the case of neural networks where one must have $\param=\param'$, and it is therefore not included in \Cref{tab:GeneralizationBounds}. Theorem 5 in \citet{Kawaguchi17GeneralizationInDL} can be seen as a possible first step to derive a bound based on path-norm in the specific case of the mean squared error loss. However, as discussed in more details below, Theorem 5 in \citet{Kawaguchi17GeneralizationInDL} is a rewriting of the generalization error with several terms that are as complex to bound as the original generalization error, resulting in a bound being as hard as the generalization error to evaluate/estimate.

\textbf{More details about Theorem 5 in \citet{Kawaguchi17GeneralizationInDL}} We start by re-deriving Theorem 5 in \citet{Kawaguchi17GeneralizationInDL}. In the specific case of mean squared error, using that
\[
    \|R_{\param}(x)-y\|_2^2 = \|R_{\param}(x)\|_2^2 + \|y\|_2^2 - 2\inner{R_\param(x)}{y},
\]
it is possible to rewrite the generalization error as follows:
\begin{align*}
    \textrm{generalization error of $\hatparam(\rv{Z})$} = & \E\left(\|R_{\hatparam(\rv{Z})}(\tilde{\rv{X}}) - \tilde{\rv{Y}}\|_2^2 | \rv{Z}\right) - \frac{1}{n}\sum_{i=1}^n \|R_{\hatparam(\rv{Z})}(\rv{X}_i) - \rv{Y}_i\|_2^2\\
    = & \E\left(\|R_{\hatparam(\rv{Z})}(\tilde{\rv{X}})\|_2^2 | \rv{Z}\right) - \frac{1}{n} \sum_{i=1}^n \|R_{\hatparam(\rv{Z})}(\rv{X}_i)\|_2^2\\
    & + \E\left(\|\tilde{\rv{Y}}\|_2^2 | \rv{Z}\right) - \sum_{i=1}^n \|\rv{Y}_i\|_2^2\\
    & - 2 \E\left(\inner{R_{\hatparam(\rv{Z})}(\tilde{\rv{X}})}{\tilde{\rv{Y}}} | \rv{Z}\right) - \frac{1}{n}\sum_{i=1}^n \inner{R_{\hatparam(\rv{Z})}(\rv{X}_i)}{\rv{Y}_i}.
\end{align*}
It is then possible to make the $L^2$ path-norm appear. For instance, for one-dimensional output networks, it can be proven (see \Cref{lem:ForwardWithPhi}) that $R_\param(x)=\inner{\Phi(\param)}{z(x,\param)}$ with $\Phi(\param)$ the path-lifting of parameters $\param$, and $z$ that typically depends on the path-activations and the input, so that the first term above can be rewritten
\begin{align*}
    \Phi(\hatparam(\rv{Z}))^T \left(\E\left(z(\tilde{\rv{X}}, \hatparam(\rv{Z}))z(\tilde{\rv{X}}, \hatparam(\rv{Z}))^T | \rv{Z}\right) - \frac{1}{n} \sum_{i=1}^n z(\rv{X}_i, \hatparam(\rv{Z}))z(\rv{X}_i, \hatparam(\rv{Z}))^T\right)\Phi(\hatparam(\rv{Z})).
\end{align*}
Let us call a "generalization error like quantity" any term of the form
\[
    \E \left(f_{\hatparam(\rv{Z})}(\tilde{\rv{Z}}) | \rv{Z}\right) - \frac{1}{n}\sum_{i=1}^n f_{\hatparam(\rv{Z})}(\rv{Z}_i),
\]
that is, any term that can be represented as a difference between the estimator learned from training data $\rv{Z}$ evaluated on test data $\tilde{\rv{Z}}$, and the evaluation on the training data. We see that the derivation above replaces the classical generalization error with two others quantities similar in definition to the generalization error. This derivation, which is specific to mean squared error, leads to Theorem 5 in \citet{Kawaguchi17GeneralizationInDL}. Very importantly, note that this derivation trades a single quantity similar to generalization error for two new such quantities. There is no discussion in \citet{Kawaguchi17GeneralizationInDL} on how to bound these two new terms. Without any further new idea, there is no other way than the ones developed in the literature so far: reduce the problem to bounding a Rademacher complexity (as it is done in \Cref{thm:GeneralizationBound}), or use the PAC-Bayes framework, and so on.

\textbf{More on numerical evaluation of path-norm}
In \citet[Section 2.9.1]{PhDGKDZ18} is reported numerical evaluations after 5 epochs of SGD on a one hidden layer network trained on a binary variant of MNIST. \citet[Figure 9 and Section 3.3.1]{PhDFurusho20Lipschitz} deals with 1d regression with 5 layers and 100 width. Experiments in \citet{Zheng19BasisPathNorm} are on MNIST. Note that it is not clear whether the path-norm used in \citet{Zheng19BasisPathNorm} corresponds to the one defined in \Cref{def:PhiActivation}. Indeed, the references given in \citet{Zheng19BasisPathNorm} for the definition of the path-norm are both \citet{Neyshabur15NormBasedControls} and \citet{Neyshabur17ExploringGeneralization}, but these two papers have two different definitions of the path-norm. The one in \citet{Neyshabur15NormBasedControls} corresponds to the norm of the path-lifting as defined in \Cref{def:PhiActivation} (but in simpler settings: no pooling etc.), while the one in \citet{Neyshabur17ExploringGeneralization} corresponds to the latter divided by the margin of the estimator.

For completeness, let us also mention that it is reported in \citet{GKDziugaite20SearchRobustMeasuresGeneralization, Jiang20FantasticGeneralizationMeasures} whether the path-norm correlates with the empirical generalization error or not, but there is no report of numerical values of the path-norm. In \citet{Neyshabur17ExploringGeneralization} are reported the quotient of path-norms with margins, but not the path-norms alone.

\textbf{Inherent limitations of uniform convergence bounds} \Cref{thm:GeneralizationBound} has some inherent limitations due to its nature. It is \emph{data-dependent} as it depends on the input distribution. However, it does not depend on the label distribution, making it uninformative as soon as $\Param$ is so much expressive that it can fit random labels. Networks that can fit random labels have already been found empirically \citep{Zhang21RethinkingGeneralization}, and it is open whether this stays true with a constraint on the path-norm.

\Cref{thm:GeneralizationBound} is \emph{based on a uniform convergence bound}\footnote{A uniform convergence bound on a model class $\mF$ is a bound on $\E_{\rv{Z}} \sup_{f(Z)\in\mF} \textrm{ generalization error }f(Z)$. This worst-case type of bound can lead to potential limitations when $\mF$ is too expressive.} as any other bound also based on a control of a Rademacher complexity. \citet{Nagarajan19UniformCvgceUnable} empirically argue that even the tightest uniform convergence bound holding with high probability must be loose on some synthetic datasets. If this was confirmed theoretically, this would still allow uniform bounds to be tight when considering other datasets than the one in \citet{Nagarajan19UniformCvgceUnable}, such as real-world datasets, or when the estimator considered in \citet{Nagarajan19UniformCvgceUnable} is not in $\Param$ (for instance because of constraints on the slopes via the path-norm).

Finally, \Cref{thm:GeneralizationBound} can provide theoretical guarantees on the generalization error of the output of a learning algorithm, but only \emph{a posteriori}, after training. In order to have a priori guarantees, one should have to derive a priori knowledge on the path-norm at the end of the learning algorithm.

\textbf{Comparison to PAC-Bayes bounds} Another interesting direction to understand generalization of neural networks is the PAC-Bayes framework \citep{Guedj19SurveyPAC,Alquier21SurveyPAC}. Unfortunately, PAC-Bayes bounds cannot be exactly computed on networks that are trained in a usual way. Indeed, these bounds typically involve a KL-divergence, or something related, for which there is no closed form except for very specific distributions (iid Gaussian/Cauchy weights...) that do not correspond to the distributions of the weights after a usual training\footnote{The randomness of the weights after training comes from the random initialization and the randomness in the algorithm (\eg random batch in SGD).}\footnote{For instance, independence is not empirically observed, see \citet[Section 5.2]{Frankle20EarlyPhaseTraining}}. We are aware of two research directions that try to get over this issue. The first way is to change the learning algorithm by enforcing the weights to be iid normally distributed, and then optimize the parameters of these normal distributions, see for instance the pioneer work of \citet{GKDziugaite17ComputingNonVacuousPACBayesBounds}. The merit of this new learning algorithm is that it has explicitly been designed with the goal of having a small generalization error. Practical performance are worse than with usual training, but this leads to networks with an associated non-vacuous generalization bound. To the best of our knowledge, this is the only way to get a non-vacuous bound\footnote{Except, of course, for methods that are based on the evaluation of the performance on held-out data.}, and unfortunately, this does not apply to usual training. The second way to get over the intractable evaluation of the KL-divergence is to 1) try to approximate the bound within reasonable time, and 2) try to quantify the error made with the approximation \citep{VallePerez20SurveyGeneralization}. Unfortunately, to the best of our knowledge, approximation is often based on a distribution assumption of the weights that is not met in practice (\eg iid Gaussian weights), approximation is costly, and the error is unclear when applied to networks trained usually. For instance, the bound in \citet[Section 5]{VallePerez20SurveyGeneralization} 1) requires at least $O(n^2)$ operations to be evaluated, with $n$ being the number of training examples, thus being prohibitive for large $n$ \citep[Section 7]{VallePerez20SurveyGeneralization}, and 2) it is unclear what error is being made when using a Gaussian process as an approximation of the neural network learned by SGD.
\end{document}